\renewcommand{\cite}{\citep}
\newlength\savewidth
\newtheorem{theorem}{Theorem}[section]
\newtheorem{lemma}[theorem]{Lemma}
\theoremstyle{definition}
\theoremstyle{remark}
\title{
Outliers with Opposing Signals Have an \\Outsized Effect on Neural Network Optimization
}
\author{%
	\large{Elan Rosenfeld} \\
	% \normalsize{Machine Learning Department}\\
	\normalsize{Carnegie Mellon University}\\
	{\texttt{elan@cmu.edu}}
	\And
	\large{Andrej Risteski} \\
	% \normalsize{Machine Learning Department}\\
	\normalsize{Carnegie Mellon University}\\
	{\texttt{aristesk@andrew.cmu.edu}}
}
\def\eqref#1{equation~\ref{#1}}
\def\1{\bm{1}}
\DeclareMathAlphabet{\mathsfit}{\encodingdefault}{\sfdefault}{m}{sl}
\SetMathAlphabet{\mathsfit}{bold}{\encodingdefault}{\sfdefault}{bx}{n}
\newcommand{\E}{\mathbb{E}}
\newcommand{\R}{\mathbb{R}}
\DeclareMathOperator{\sign}{sign}
\begin{document}

\maketitle

\newcommand{\groupspace}{\mathcal{G}}
\newcommand{\groupset}{\mathbb{G}}
\newcommand{\calN}{\mathcal{N}}
\newcommand{\calX}{\mathcal{X}}
\newcommand{\calY}{\mathcal{Y}}
\newcommand{\noise}{\epsilon}
\newcommand{\dt}[1]{\frac{d#1}{dt}}
\newcommand{\dx}[1]{\frac{d#1}{dx}}
\newcommand{\psdnorm}[2]{\|#1\|_{#2}}
\newcommand{\colvec}[2]{\begin{bmatrix}#1 \\ #2\end{bmatrix}}
\newcommand{\indep}{\perp\!\!\!\perp}
\newcommand{\notindep}{\not\!\indep}
\newcommand{\sgn}{\textrm{sign}}
\newcommand{\betastar}{\beta}
\newcommand{\outlierbeta}{o}
\newcommand{\maj}{\textrm{maj}}
\newcommand{\full}{\textrm{full}}
\newcommand{\twonormsq}[1]{\|#1\|_2^2}
\renewcommand{\P}{\mathbb{P}}

\begin{abstract}
We identify a new phenomenon in neural network optimization which arises from the interaction of depth and a particular heavy-tailed structure in natural data. Our result offers intuitive explanations for several previously reported observations about network training dynamics. In particular, it implies a conceptually new cause for progressive sharpening and the edge of stability; we also highlight connections to other concepts in optimization and generalization including grokking, simplicity bias, and Sharpness-Aware Minimization.

Experimentally, we demonstrate the significant influence of paired groups of outliers in the training data with strong \emph{opposing signals}: consistent, large magnitude features which dominate the network output throughout training and provide gradients which point in opposite directions. Due to these outliers, early optimization enters a narrow valley which carefully balances the opposing groups; subsequent sharpening causes their loss to rise rapidly, oscillating between high on one group and then the other, until the overall loss spikes. We describe how to identify these groups, explore what sets them apart, and carefully study their effect on the network's optimization and behavior.
We complement these experiments with a mechanistic explanation on a toy example of opposing signals and a theoretical analysis of a two-layer linear network on a simple model.
Our finding enables new qualitative predictions of training behavior which we confirm experimentally. It also provides a new lens through which to study and improve modern training practices for stochastic optimization, which we highlight via a case study of Adam versus SGD.
\end{abstract}

\section{Introduction}

There is a steadily growing list of intriguing properties of neural network (NN) optimization which are not readily explained by classical tools from optimization. Likewise, we have varying degrees of understanding of the mechanistic causes for each. Extensive efforts have led to possible explanations for the effectiveness of Adam \citep{kingma2014adam}, Batch Normalization \citep{ioffe2015batch} and other tools for successful training---but the evidence is not always entirely convincing, and there is certainly little theoretical understanding. 
Other findings, such as grokking \citep{power2022grokking} or the edge of stability \citep{cohen2021gradient}, do not have immediate practical implications but provide new ways to study what sets NN optimization apart.
These phenomena are typically considered in isolation---though they are not completely disparate, it is unknown what specific underlying causes they may share. Clearly, a better understanding of NN training dynamics in a specific context can lead to algorithmic improvements \citep{chen2021empirical}; this suggests that any commonality will be a valuable tool for further investigation.

In this work, we identify a phenomenon in NN optimization which offers a new perspective on many of these prior observations and which we hope will contribute to a deeper understanding of how they may be connected. While we do not (and do not claim to) give a complete explanation, we present strong qualitative and quantitative evidence for a single high-level idea---one which naturally fits into several existing narratives and suggests a more coherent picture of their origin. Specifically, we demonstrate the prevalence of paired groups of outliers in natural data which have a significant influence on a network's optimization dynamics. These groups are characterized by the inclusion of one or more (relatively) large magnitude features that dominate the network's output at initialization and throughout most of training. In addition to their magnitude, the other distinctive property of these features is that they provide large, consistent, and \emph{opposing} gradients, in that following one group's gradient to decrease its loss will increase the other's by a similar amount. Because of this structure, we refer to them as \emph{Opposing Signals}. These features share a non-trivial correlation with the target task, but they are often not the ``correct'' (e.g., human-aligned) signal. In fact, in many cases these features perfectly encapsulate the classic statistical conundrum of ``correlation vs. causation''---for example, a bright blue sky background does not determine the label of a CIFAR image, but it does most often occur in images of planes. Other features \emph{are} very relevant, such as the presence of wheels and headlights in images of trucks and cars, or the fact that a colon often precedes either ``the'' or a newline token in written text.

\begin{figure}[t]
\centering
    \includegraphics[width=\linewidth]{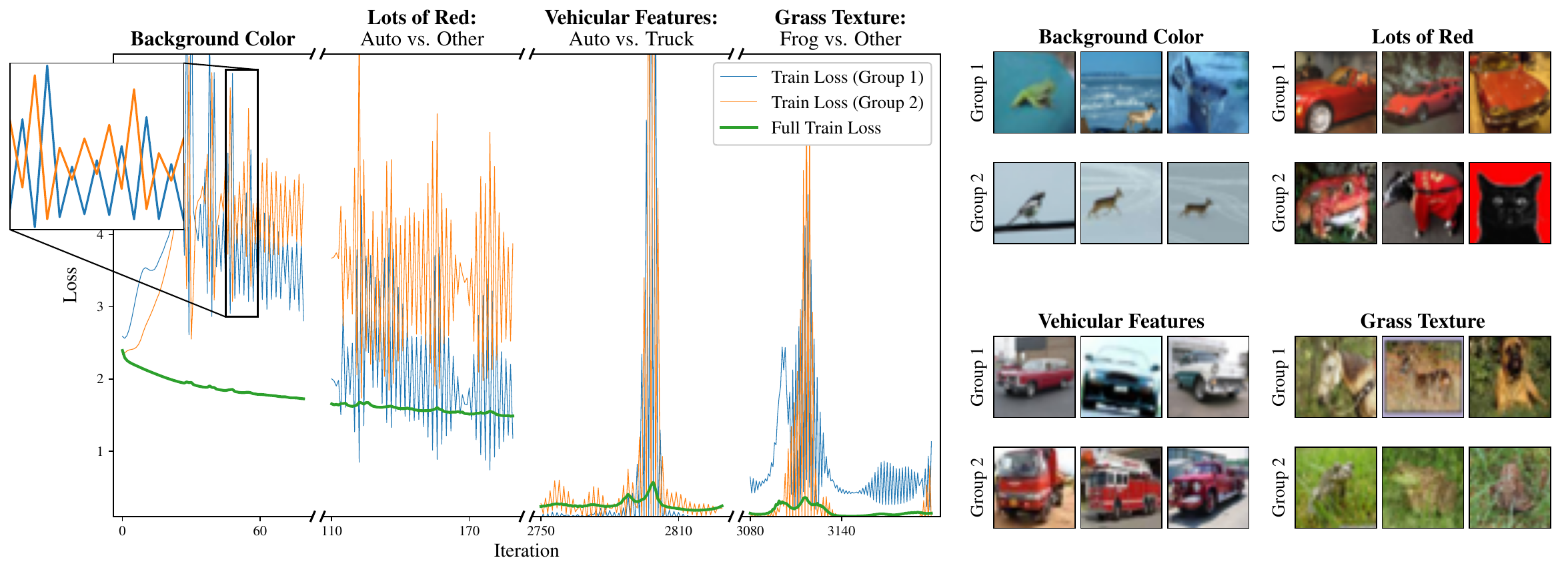}
    \caption{\textbf{Training dynamics of neural networks are heavily influenced by outliers with opposing signals.} We plot the overall loss of a ResNet-18 trained with GD on CIFAR-10, plus the losses of a small but representative set of outlier groups. These groups have consistent \emph{opposing signals} (e.g., wheels and headlights can mean either \texttt{car} or \texttt{truck}). Throughout training, losses on these groups oscillate with growing and shrinking amplitude---this has an obvious correspondence to the intermittent spikes in overall loss and appears to be a direct cause of the edge of stability phenomenon.
    }
    \label{fig:main-teaser}
\end{figure}

Opposing signals are most easily understood with an example, which we will give along with a brief outline of their effect on training dynamics; a more detailed description is presented in \cref{sec:main-explanation}.
\cref{fig:main-teaser} depicts the training loss of a ResNet-18 \citep{he2016deep} trained with full-batch gradient descent (GD) on CIFAR-10 \citep{krizhevsky2009learning}, along with a few dominant outlier groups and their respective losses.
In the early stages of training, the network enters a narrow valley in weight space which carefully balances the pairs' opposing gradients; subsequent sharpening of the loss landscape \citep{Jastrzebski2020The, cohen2021gradient} causes the network to oscillate with growing magnitude along particular axes, upsetting this balance. Returning to our example of a sky background, one step results in the class \texttt{plane} being assigned greater probability for all images with sky, and the next will reverse that effect. In essence, the ``sky $=$ \texttt{plane}'' subnetwork grows and shrinks.\footnote{It would be more precise to say ``strengthening connections between regions of the network's output and neurons which have large activations for sky-colored inputs''.
Though we prefer to avoid informal terminology,
this example makes clear that the more relaxed phrasing is usually much cleaner. We therefore employ it when the intended meaning is clear.}
The direct result of this oscillation is that the network's loss on images of planes with a sky background will alternate between sharply increasing and decreasing with growing amplitude, with the exact opposite occurring for images of \emph{non}-planes with sky.
Consequently, the gradients of these groups will alternate directions while growing in magnitude as well. As these pairs represent a small fraction of the data, this behavior is not immediately apparent from the overall training loss---but eventually, it progresses far enough that the overall loss spikes. As there is an obvious direct correspondence between these two events throughout, we conjecture that opposing signals are a direct cause of the \emph{edge of stability} phenomenon \citep{cohen2021gradient}. We also note that the most influential signals appear to increase in complexity over time \citep{nakkiran2019sgd}.

We repeat this experiment across a range of vision architectures and training hyperparameters: though the precise groups and their order of appearance change, the pattern occurs consistently. We also verify this behavior for transformers on next-token prediction of natural text and small ReLU MLPs on simple 1D functions; we give some examples of opposing signals in text in \cref{app:text-signals}. However, we rely on images for exposition because it offers the clearest intuition. To isolate this effect, most of our experiments use GD, but we observe similar patterns during SGD which we present in \cref{sec:sgd}.

\paragraph{Summary of contributions.} The primary contribution of this paper is demonstrating the existence, pervasiveness, and large influence of opposing signals during NN optimization.
We further present our current best understanding, with supporting experiments, of how these signals \emph{cause} the observed training dynamics---in particular, we provide evidence that it is a consequence of depth and steepest descent methods. We complement this discussion with a toy example and an analysis of a two-layer linear net on a simple model. Notably, though rudimentary, our explanation enables concrete qualitative predictions of NN behavior during training, which we confirm experimentally. It also provides a new lens through which to study modern stochastic optimization methods, which we highlight via a case study of SGD vs. Adam.
We see possible connections between opposing signals and a wide variety of phenomena in NN optimization and generalization, including \emph{grokking} \citep{power2022grokking}, \emph{catapulting/slingshotting} \citep{lewkowycz2020large, thilak2022slingshot}, \emph{simplicity bias} \citep{valle-perez2018deep}, \emph{double descent} \citep{belkin2019reconciling, Nakkiran2020Deep}, and Sharpness-Aware Minimization \citep{foret2021sharpnessaware}. We discuss these and other connections in \cref{sec:discussion}.

\section{Characterizing and Identifying Opposing Signals}
\label{sec:setup}

Though their influence on aggregate metrics is non-obvious, identifying outliers with opposing signals is straightforward. Our methodology is as follows: when training a network with GD, we track its loss on each individual training point. For a given iteration, we select the training points whose loss exhibited the most positive and most negative change in the preceding step (there is large overlap between these sets in successive steps). This set will sometimes contain multiple opposing signals, which we distinguish via visual inspection. This last detail means that the images we depict are not random, but we emphasize that it would not be correct to describe this process as cherry-picking: though precise quantification is difficult, these signals consistently obey the maxim ``I know it when I see it''. This is particularly true for images, such as the groups in \cref{fig:main-teaser} which have easily recognizable patterns. To demonstrate this fact more generally, \cref{app:addl-images} contains the pre-inspection samples for a ResNet-18, VGG-11 \citep{simonyan2014very}, and a small Vision Transformer \citep{dosovitskiy2020image} at several training steps and for multiple seeds; we believe the implied groupings are immediate, even if not totally objective. We see algorithmic approaches to automatically clustering these samples as a direction for future study---for example, one could select samples by correlation in their loss time-series, or by gradient alignment.

\paragraph{Measuring alternative metrics.} Given how these samples are selected, several other characterizations seem appropriate. For instance, one-step loss change is often a reasonable proxy for gradient norm; we could also consider the largest eigenvalue of the loss of the \emph{individual point}, or how much curvature it has in the direction of the overall loss's top eigenvector. For large networks these options are far more compute-intensive than our chosen method, but we can evaluate them on specific groups. In \cref{fig:interest-stats} we track these metrics for several opposing group pairs and we find that they are consistently much larger than that of random samples from the training set.

\subsection{On the Possibility of a Formal Definition} Though the features and their exemplar samples are immediately recognizable, \textbf{we do not attempt to \emph{exactly} define a ``feature'', nor an ``outlier'' with respect to that feature.} The presence of a particular feature is often ambiguous, and it is difficult to define a clear threshold for what makes a given point an outlier.\footnote{In the case of language---where tokenization is discrete and more interpretable---a precise definition is sometimes possible. For example, one opposing pair in \cref{app:text-signals} consists of sequences whose penultimate token is a colon and whose last token is either ``the'' or a newline.} Thus, instead of trying to exactly partition the data, we simply note that these heavy tails \emph{exist} and we use the most obvious outliers as representatives for visualization. In \cref{fig:main-teaser,fig:interest-stats} we choose an arbitrary cutoff of twenty samples per group.

\begin{figure}[t]
    \centering
    \includegraphics[width=\linewidth]{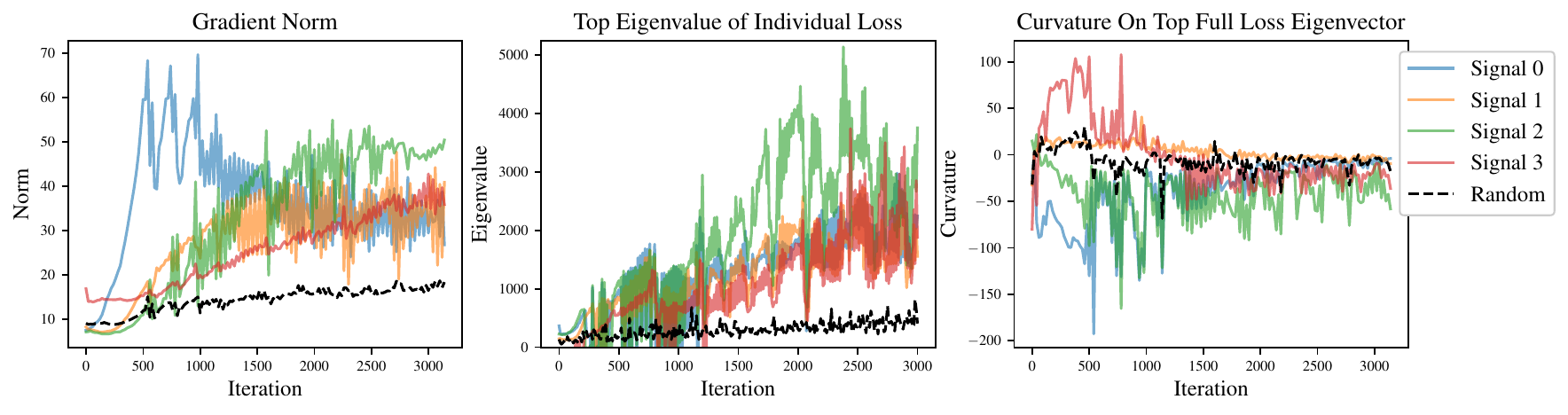}
    \caption{\textbf{Tracking other metrics which characterize outliers with opposing signals.} Maximal per-step change in loss relates to other useful metrics, such as per-sample gradient norm and curvature. We combine each pair of groups in \cref{fig:main-teaser} to create training subsets which each exemplify one ``signal'': we see that these samples are also significant outliers according to the other metrics. (For a point $x$, ``Curvature on Top Full Loss Eigenvector'' is defined as $v^\top H(x)v$, where $v$ is the top eigenvector of the full loss Hessian and $H(x)$ is the Hessian of the loss on $x$ alone.)}
    \label{fig:interest-stats}
\end{figure}

We also note that what qualifies as an opposing signal or outlier may vary over time. For visual clarity, \cref{fig:main-teaser} depicts the loss on only the most dominant group pair in its respective training phase, but this pattern occurs simultaneously for many different signals and at multiple scales throughout training. Further, the opposing signals are with respect to the model's internal representations (and the label), not the input space itself; this means that the definition is also a property of the architecture. For example, following \citet{cohen2021gradient} we train a small MLP to fit a Chebyshev polynomial on evenly spaced points in the interval $[-1, 1]$ (\cref{fig:chebyshev}). This data has no ``outliers'' in the traditional sense, and it is not immediately clear what opposing signals are present. Nevertheless, we observe the same alternating behavior: we find a pair where one group is a small interval of $x$-values and the opposing group contains its neighbors, all in the range $[-1, -0.5]$. This suggests that the network has internal activations which are heavily influential only for more negative $x$-values. In this context, these two groups are the outliers.

\begin{figure}[t!]
    \centering
    \includegraphics[width=\linewidth]{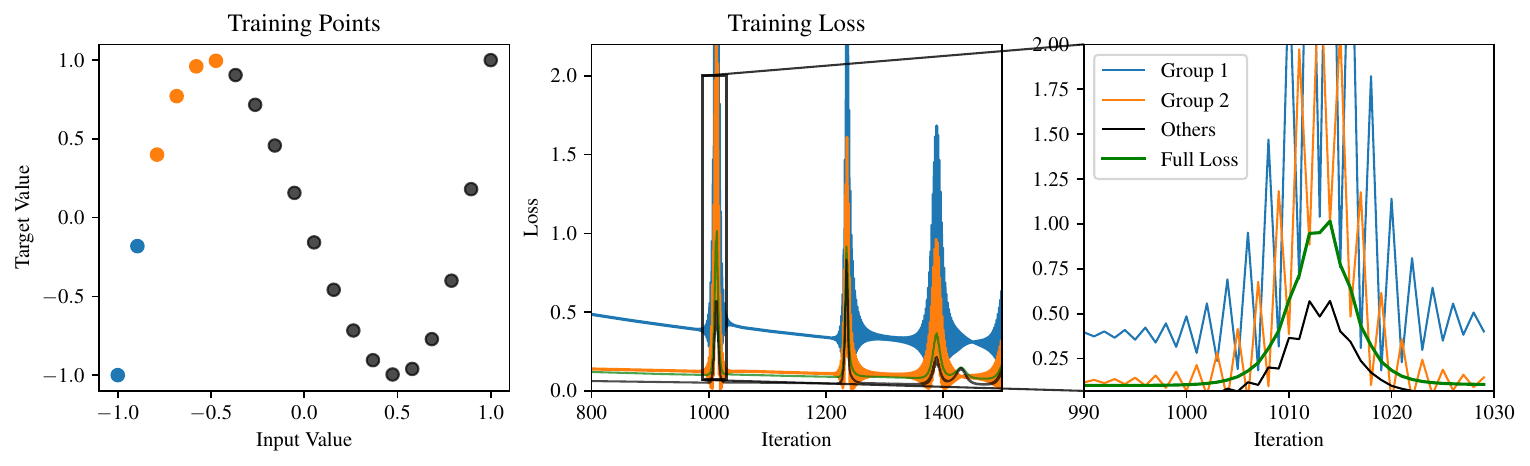}
    \caption{\textbf{Opposing signals when fitting a Chebyshev polynomial with a small MLP.} Though the data lacks traditional ``outliers'', it is apparent that the network has some features which are most influential only on the more negative inputs (or whose effect is otherwise cancelled out by other features). Since the correct use of this feature is opposite for these two groups, they provide opposing signals.}
    \label{fig:chebyshev}
\end{figure}

\section{Understanding the Effect of Opposing Signals}
\label{sec:main-explanation}

Beyond noting their existence, our eventual goal will be to derive actionable insights from this finding. To do this, it is necessary to gain a better understanding of \emph{how} these outliers cause the observed behavior. In this section we give a simplified ``mental picture'' which serves as our current understanding this process. We begin with an informal discussion of the outsized influence of opposing signals and how they lead to progressive sharpening; this subsection collates and expands on prior work to give important context for how these signals differ from typically imagined ``noise''. Next, we give a mechanistic description of the specific effect of opposing signals with a toy example. This explanation is intentionally high-level, but we will eventually see how it gives concrete predictions of specific behaviors, which we then verify on real networks. Finally, we prove that this behavior occurs on a two-layer linear network under a simple model.

\subsection{Progressive Sharpening, and Intuition for Why these Features are so Influential}
\label{subsec:prog-sharp}
At a high level, most variation in the input is unneeded when training a network to minimize predictive error---particularly with depth and high dimension, only a small fraction of information will be propagated to the last linear layer \citep{huh2021low}.
Starting from random initialization, training a network aligns adjacent layers' singular values \citep{saxe2013exact, mulayoff2020unique} to amplify meaningful signal while downweighting noise,\footnote{In this discussion we use the term ``noise'' informally. We refer not necessarily to pure randomness, but more generally to input variation which is least useful in predicting the target.} growing \emph{sensitivity} to the important signal. This sensitivity can be measured, for example, by the spectral norm of the input-output Jacobian, which grows during training \citep{ma2021linear}; it has also been connected to growth in the norm of the output layer \citep{wang2022analyzing}. 

Observe that with this growth, small changes to \emph{how the network processes inputs} become more influential.
Hypothetically, a small weight perturbation could massively increase loss by redirecting unhelpful noise to the subspace to which the network is most sensitive, or by changing how the last layer uses it. 
The increase of this sensitivity thus represents precisely the growth of loss Hessian spectrum, with the strength of this effect increasing with depth \citep{wang2016analysis, du2018algorithmic, mulayoff2020unique}.\footnote{The coincident growth of these two measures was previously noted by \citet{ma2021linear, gamba2023lipschitz, macdonald2023progressive}, though they did not make explicit this connection to how the network processes different types of input variance.}

Crucially, this sharpening also depends on the structure of the input. If the noise is independent of the target, it will be downweighted throughout training. In contrast, \emph{genuine signals which oppose each other} will be retained and perhaps even further amplified by gradient descent; this is because the ``correct'' feature may be much smaller in magnitude (or not yet learned), so using the large, ``incorrect'' feature is often the most immediate way of minimizing loss. 
As a concrete example, observe that a randomly initialized network will lack the features required for the subtle task of distinguishing birds from planes. But it \emph{will} capture the presence of sky, which is very useful for reducing loss on such images by predicting the conditional $p(\text{class } | \text{ sky})$ (this is akin to the ``linear/shallow-first'' behavior described by \citet{nakkiran2019sgd, mangalam2019do}). Thus, any method attempting to minimize loss as fast as possible (e.g., steepest descent) may actually upweight these features. 
Furthermore, amplified opposing signals will cause greater sharpening than random noise, because using a signal to the benefit of one group is maximally harmful for the other---e.g., confidently predicting \texttt{plane} whenever there is sky will cause enormous loss on images of other classes with sky. Since random noise is more diffuse, this effect is less pronounced.

This description is somewhat abstract. To gain a more precise understanding, we illustrate the dynamics more explicitly on a toy example.

\subsection{Illustrating with a Hypothetical Example of Gradient Descent}
\label{sec:toy-example}
    
Consider the global loss landscape of a neural network: this is the function which describes how the loss changes as we move through parameter space. Suppose we identify a direction in this space which corresponds to the network's use of the ``sky'' feature to predict \texttt{plane} versus some other class. That is, we will imagine that whenever the input image includes a bright blue background, moving the parameters in one direction increases the logit of the \texttt{plane} class and decreases the others, and vice-versa. We will also decompose this loss---\textbf{among images with a sky background, we consider \emph{separately} the loss on those labeled \texttt{plane} versus those with any other label.} Because the sky feature has large magnitude, a small change in weight space will induce a large change in the network outputs---
i.e., a small movement in the direction ``sky = \texttt{plane}'' will greatly increase loss on these non-\texttt{plane} images.

\begin{figure}[t]
    \centering
    \includegraphics[width=\linewidth]{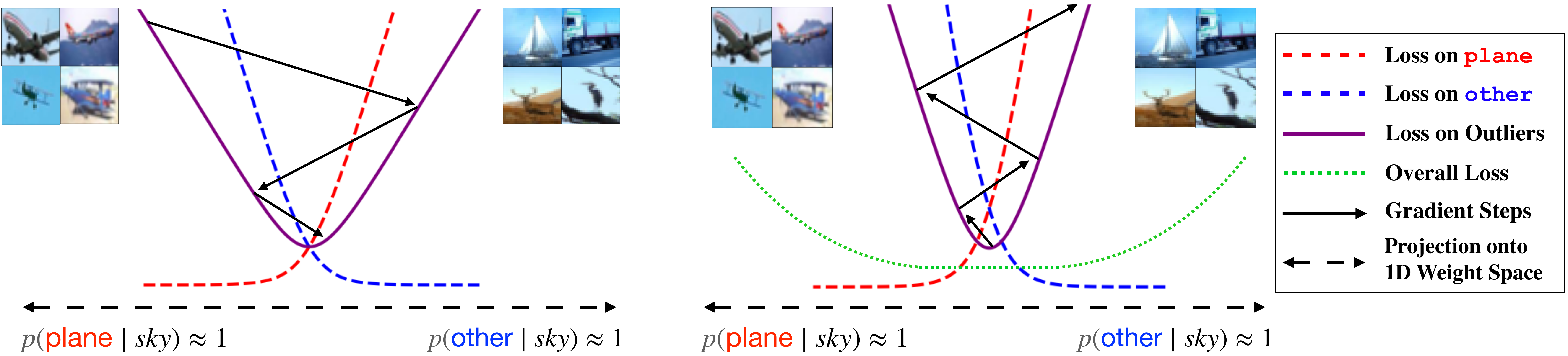}
    \caption{\textbf{A toy illustration of the effect of opposing signals.} Images with many blue pixels cause large activations, with high loss sensitivity. We project the loss to the hypothetical weight-space dimension ``sky = \texttt{plane}''. \textbf{Left:} Early optimization approaches the minimum, balancing the opposing gradients for \textcolor{red}{\texttt{plane}} and \textcolor{blue}{\texttt{other}} (these are losses for \emph{separate} training subsets: those labeled \texttt{plane} vs. those with any other label---the purple curve is their average). Progress continues through this valley, further growing the feature magnitude. \textbf{Right}: The valley sharpens and the iterates diverge, alternating between high and low loss for each group. Because most training points are insensitive to this axis, the overall loss may not be noticeably affected at first. Eventually either (a) the loss growth forces the network to downweight ``sky'', flattening the valley; or (b) the weights ``catapult'' to a different basin.}
    \label{fig:toy-example}
\end{figure}
\cref{fig:toy-example} depicts this heavily simplified scenario. Early in training, optimizing this network with GD will rapidly move towards the minimum along this direction. In particular, until better features are learned, the direction of steepest descent will lead to a network which upweights the sky feature and predicts $p(\text{class } | \text{ sky})$ whenever it occurs. Once sufficiently close to the minimum,
the gradient will point ``through the valley'' towards amplifying the more relevant signal \citep{xing2018walk}. However, this will also cause the sky feature to grow in magnitude---as well as its \emph{potential} influence were the weights to be selectively perturbed, as described above. Both these factors contribute to progressive sharpening.

Here we emphasize the distinction between the loss on the \emph{outliers} and the full train loss. 
As images without sky are not nearly as sensitive to movement along this axis, their gradient and curvature is much smaller---and since they comprise the majority of the dataset, the global loss landscape may not at first be significantly affected.
Continued optimization will oscillate across the minimum with growing magnitude, but this growth may not be immediately apparent.
Furthermore, \emph{progress orthogonal to these oscillations need not be affected}---we find some evidence that these two processes occur somewhat independently, which we present in \cref{sec:sgd}.
Returning to the loss decomposition, we see that these oscillations will cause the losses to grow and \emph{alternate}, with one group having high loss and then the other. Eventually the outliers' loss increases sufficiently and the overall loss spikes, either flattening the valley and returning to the first phase, or ``catapulting'' to a different basin \citep{wu2018how, lewkowycz2020large, thilak2022slingshot}. This phenomenon is depicted in \cref{fig:main-teaser}. Finally, we note that if one visualizes the dynamics in \cref{fig:toy-example} from above---so the left/right direction on the page becomes up/down---it gives exactly the pattern of a network's weights projected onto the top eigenvector of the Hessian (e.g., \cref{fig:adam-sgd-proj} later in this work).

\paragraph{Verifying our toy examples's predictions.} Though this explanation lacks precise details, it does enable concrete predictions of network behavior during training. \cref{fig:sky-logits} tracks the predictions of a ResNet-18 on an image of sky---to eliminate possible confounders, we create a synthetic image as a single color block. Though the ``\texttt{plane} vs. \texttt{other}'' example seems almost \emph{too} simple, we see exactly the described behavior---initial convergence to the minimum along with rapid growth in feature norm, followed by oscillation in class probabilities. Over time, the network learns to use other signal and downweights the sky feature, as evidenced by the slow decay in feature norm. We reproduce this figure for many other inputs and for a VGG-11-BN in \cref{app:logit-track}, with similar findings.

Our example also suggests that \textbf{oscillation serves as a valuable regularizer that reduces reliance on easily learned opposing signals which may not generalize.} When a signal is used to the benefit of one group and the detriment of another, the advantaged group's loss goes down while the other's goes up, meaning the latter's gradient grows in magnitude while the former's shrinks. As the now gradient-dominating group is also the one disadvantaged by the use of this signal, the network will be encouraged to downweight this feature. In \cref{app:logit-track-small-lr} we reproduce \cref{fig:sky-logits} with a VGG-11-BN trained with a very small learning rate to closely approximate gradient flow. We see that gradient flow and GD are very similar until reaching the edge of stability. After this point, the feature norm under GD begins to slowly decay while oscillating; in contrast, in the absence of oscillation, the feature norms of opposing signals under gradient flow grow continuously. If it is the case that opposing signals represent ``simple'' features which generalize worse, this could help to explain the poor generalization of gradient flow. A similar effect was observed by \citet{Jastrzebski2020The}, who noted that large initial learning rate leads to a better-conditioned loss landscape later.

\begin{figure}[t]
    \centering
    \includegraphics[width=\linewidth]{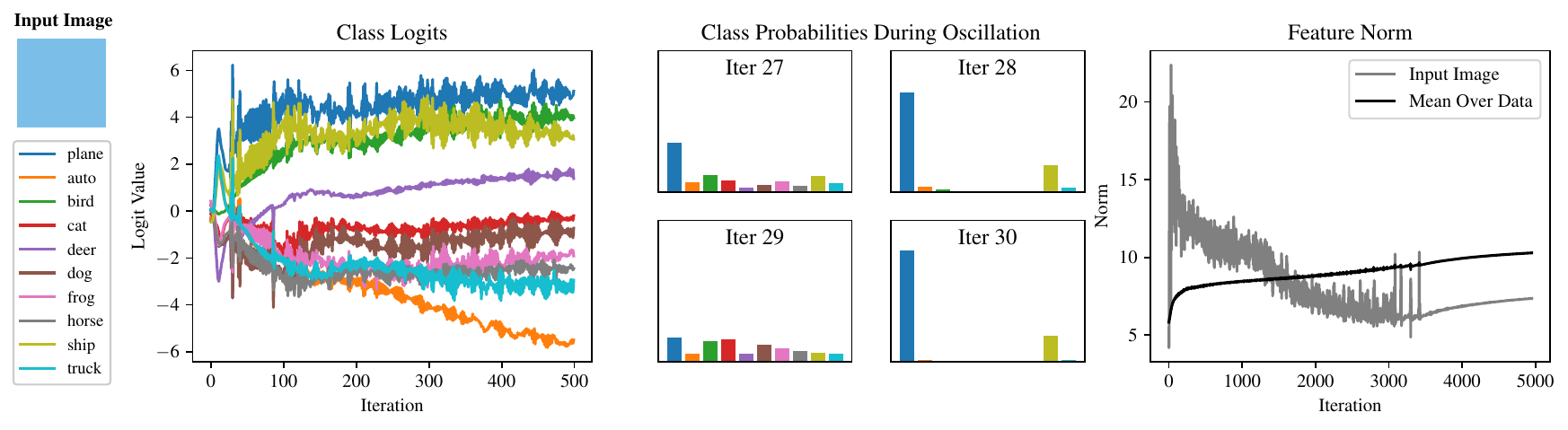}
    \caption{\textbf{Passing a sky-colored block through a ResNet during GD precisely tracks the predictions of our toy example.} \textbf{Left:} In the first phase, the network rapidly learns to use the sky feature to minimize loss. As signal is amplified, so too is the sky-colored input, and oscillation begins as depicted in \cref{fig:toy-example}. \textbf{Middle:} During oscillation, gradient steps alternate along the axis ``sky $=$ \texttt{plane}'' (and a bit \texttt{ship}). \textbf{Right:} The initial phase amplifies the sky input, causing rapid growth in feature norm. The network then oscillates, slowly learning to downweight this feature and rely on other signal (average feature norm provided for comparison).}
    \label{fig:sky-logits}
\end{figure}

\subsection{Theoretical Analysis of Opposing Signals in a Simple Model}
\label{sec:theory}

To demonstrate this effect formally, we study misspecified linear regression on inputs $x\in \R^d$ with a two-layer linear network. Though this model is simplified, it enables preliminary insight into the factors we think are most important for these dynamics to occur. Since we are analyzing the dynamics from initialization until the stability threshold, it will be sufficient to study the trajectory of \emph{gradient flow}---for reasonable step sizes $\eta$, a similar result then follows for gradient descent. Our analysis reproduces the initial phase of quickly reducing loss on the outliers, followed by the subsequent growth in sensitivity to the \emph{way} the opposing signal is used---i.e., progressive sharpening. We also verify this pattern (and the subsequent oscillation, which we do not formally prove) in experiments on real and synthetic data in \cref{sec:addl-figs}. 

We remark that one relevant factor which our model lacks is the concept of a ``partially useful'' signal as described at the end of \cref{subsec:prog-sharp}. This seems to require a somewhat more complex model to properly capture (e.g., multinomial logistic regression) so we view this analysis as an early investigation, capturing only part of relevant aspects of the phenomena we observe.

\paragraph{Model.} We model the observed features as a distribution over $x \in \R^{d_1}$, assuming only that its covariance $\Sigma$ exists---for clarity we treat $\Sigma = I$ in the main text. We further model an additional vector $x_o \in \R^{d_2}$ representing the opposing signal, with $d_2 \geq d_1$. We will suppose that on some small fraction of outliers $p \ll 1$, $x_o \sim \text{Unif}\left( \left\{\pm \sqrt{\frac{\alpha}{p d_2}} \mathbf{1} \right\} \right)$ ($\mathbf{1}$ is the all-ones vector) for some $\alpha$ which governs the feature magnitude, and we let it be $\mathbf{0}$ on the remainder of the dataset. We model the target as the linear function $y = \beta^\top x + \frac{1}{\sqrt{d_2}} \mathbf{1}^\top |x_o|$; this captures the idea that the signal $x_o$ correlates strongly with the target, but in opposing directions of equal strength. Finally, we parameterize the network with vectors $b\in\R^{d_1}, b_o \in \R^{d_2}$ and scalar $c$ in one single vector $\theta$, as $f_\theta(x) = c \cdot (b^\top x + b_o^\top x_o)$. Note the specific distribution of $x_o$ is unimportant---furthermore, in our simulations we observed the exact same pattern with cross-entropy loss. From our experiments and this analysis, it seems that depth and a small signal-to-noise ratio are the only elements needed for this behavior to arise.

\paragraph{Setup.} A standard initialization would be to sample $[b, b_o]^\top \sim \calN(0, \frac{1}{d_1+d_2} I)$, which would then imply highly concentrated distributions for the quantities of interest. As tracking the precise concentration terms would not meaningfully contribute to the analysis, we simplify by directly assuming that at initialization these quantities are equal to their expected order of magnitude: $\twonormsq{b} = \mathbf{1}^\top b = \frac{d_1}{d_1+d_2}$, $\twonormsq{b_o} = \mathbf{1}^\top b_o = \frac{d_2}{d_1+d_2}$, and $b^\top \beta = \frac{\|\beta\|}{\sqrt{d_1+d_2}}$. Likewise, we let $c=1$, ensuring that both layers have the same norm. We perform standard linear regression by minimizing the population loss $L(\theta) := \frac{1}{2} \E[(f_\theta(x) - y)^2]$. We see that the minimizer of this objective has $b_o = \mathbf{0}$ and $cb = \beta$. However, an analysis of gradient flow will elucidate how depth and strong opposing signals lead to sharpening as this minimum is approached.

\paragraph{Results.} In exploring progressive sharpening, \citet{cohen2021gradient} found that sometimes the model would have a brief \emph{decrease} in sharpness, particularly for square loss. In fact, this is consistent with our above explanation: for larger $\alpha$ and a sharper loss (e.g. the square loss), the network will initially prioritize minimizing loss on the outliers, thus heavily reducing sharpness. Our first result proves that this occurs in the presence of large magnitude opposing signals:

\begin{theorem}[Initial \emph{decrease} in sharpness]
    \label{thm:sharpness-decrease}
    Let $k := \frac{d_2}{d_1} \geq 1$, and assume $\|\beta\| > \max\left(\frac{d_1}{\sqrt{d_1 + d_2}}, \frac{24}{5} \right)$. At initialization, the sharpness $\|\nabla^2_\theta L(\theta)\|_2$ lies in $(\alpha, 3\alpha)$. Further, if $\sqrt{\alpha} = \Omega\left(\|\beta\| k\ln k \right)$, then both $b_o$ and the overall sharpness will \emph{decrease} as $\tilde{O}(e^{-\alpha t})$ from $t = 0$ until some time $t_1 \leq \frac{\ln \nicefrac{\|\beta\|}{2}}{2\|\beta\|}$.
\end{theorem}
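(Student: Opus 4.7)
The plan is to reduce the population loss to an explicit scalar function of a few key quantities and then analyze the resulting gradient flow and Hessian. Expanding $\mathbb{E}[(f_\theta(x)-y)^2]$ using $x \perp x_o$, $\mathbb{E}[xx^\top]=I$, and the symmetry of the sign of $x_o$ on outliers, one obtains
\[
L(\theta) \;=\; \tfrac{1}{2}\|cb - \beta\|^2 \;+\; \tfrac{c^2 \alpha s^2}{2d_2} \;+\; \tfrac{\alpha}{2}, \qquad s := \mathbf{1}^\top b_o.
\]
Crucially $L$ depends on $b_o$ only through the scalar $s$, so the $b_o$-gradient lies in $\mathrm{span}(\mathbf{1})$ and the only non-trivial eigendirection of the $b_o$-block of the Hessian is $\mathbf{1}/\sqrt{d_2}$.

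\textbf{Initial sharpness bounds.} The Hessian then has a $b_o$-block $\tfrac{c^2\alpha}{d_2}\mathbf{1}\mathbf{1}^\top$ (top eigenvalue $c^2\alpha$), a $b$-block $c^2 I$, and off-diagonal couplings $2cb-\beta$ and $\tfrac{2c\alpha s}{d_2}\mathbf{1}$. The lower bound $\alpha$ at $t=0$ is immediate. For the upper bound $3\alpha$, I would project onto the three-dimensional invariant subspace spanned by $b$, $\mathbf{1}/\sqrt{d_2}$, and the $c$-axis, and apply a Weyl- or Gershgorin-type estimate to the resulting $3\times 3$ matrix. Plugging in $\|b\|^2 = \tfrac{d_1}{d_1+d_2}$, $s = \tfrac{d_2}{d_1+d_2}$, and $b^\top\beta = \tfrac{\|\beta\|}{\sqrt{d_1+d_2}}$, together with the hypotheses $\|\beta\| > \max\!\big(d_1/\sqrt{d_1+d_2},\,24/5\big)$, bounds every off-diagonal coupling by a constant fraction of $\alpha$.

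\textbf{Dynamics and exponential decay.} Two structural facts drive the trajectory. First, the balancedness quantity $c^2 - \|b\|^2 - \|b_o\|^2$ is conserved under gradient flow and equals $0$ at initialization, pinning $c$ to the layer norms throughout. Second,
\[
\dot s \;=\; \mathbf{1}^\top \dot b_o \;=\; -\,c^2\alpha\, s \qquad\Longrightarrow\qquad s(t) \;=\; s(0)\exp\!\Big(\!-\alpha \textstyle\int_0^t c(\tau)^2\,d\tau\Big),
\]
which is the direct source of the $e^{-\alpha t}$ decay, provided $c^2$ stays bounded away from $0$. Re-running the perturbation argument of the previous paragraph at time $t$, every $s$-dependent correction to the top eigenvalue of $H$ is $\tilde{O}(\alpha s(t)^2/d_2) = \tilde{O}(\alpha e^{-\alpha t})$. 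The $c^2\alpha$ piece is itself initially non-increasing, since $\dot c = -c\|b\|^2 + b^\top\beta - \tfrac{c\alpha s^2}{d_2}$ and the hypothesis $\sqrt{\alpha} = \Omega(\|\beta\| k \ln k)$ makes the last, negative term dominate for small $t$.

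\textbf{Main obstacle.} The delicate step is a bootstrap/continuation argument for $c(t)$ on $[0,t_1]$: assume by continuity $c(t) \in [c_-, 1]$ on an interval, use this to close the exponential decay of $s$, then feed $s$ back into the $c$-ODE and integrate. The aggregate downward pressure $\int_0^{t_1}\! \alpha s(\tau)^2/d_2\,d\tau \leq O(s(0)^2/d_2) = O(1/d_1)$ is bounded independent of $\alpha$, while the competing drift $-c\|b\|^2 + b^\top\beta$ is controlled by $\|b\|$. The choice $t_1 \leq \tfrac{\ln(\|\beta\|/2)}{2\|\beta\|}$ is exactly calibrated so that integrating $\dot b = -c^2 b + c\beta$ keeps $\|b(t)\|$ within a constant factor of $\|\beta\|$, closing the bootstrap. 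Once $c(t)$ is pinned in $[c_-,1]$, both the $\tilde{O}(e^{-\alpha t})$ decay of the $\mathbf{1}$-component of $b_o$ and of every $s$-dependent part of the sharpness follow mechanically.
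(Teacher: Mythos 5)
Your loss expansion is, if anything, more faithful to the stated model than the paper's. Because $x_o$ on outliers is always a multiple of $\mathbf 1$, the population loss really does depend on $b_o$ only through $s=\mathbf 1^\top b_o$, giving the term $\tfrac{c^2\alpha s^2}{2d_2}$. The paper instead writes $\tfrac{\alpha c^2\|b_o\|^2}{2}$, which is an implicit identification ($\|b_o\|^2 \leftrightarrow (\mathbf 1^\top b_o)^2/d_2$) valid only when $b_o\parallel\mathbf 1$; under the stated initialization these differ by a factor of $d_1+d_2$. Your version clarifies that the theorem's ``$b_o$ decreases'' should really be read as ``the $\mathbf 1$-component of $b_o$ decreases,'' since gradient flow never touches the orthogonal component. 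The observation that the balancedness $c^2-\|b\|^2-\|b_o\|^2$ is conserved is also correct and clean, though the paper never uses it.

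The gap is that the part you yourself flag as ``the delicate step'' — keeping $c(t)$ bounded away from zero on $[0,t_1]$ so that the exponential decay of $s$ actually closes — is announced but not carried out. The paper's proof is essentially devoted to exactly this: it reparametrizes $b$ by $(\epsilon,\delta)$, proves a dedicated ODE comparison lemma, shows by a chain of inequalities that the sign of $\|\beta\|\epsilon - c(\epsilon^2+\delta^2)$ stays non-negative as long as $\epsilon$ has not grown past $\|\beta\|^2/(\|\beta\|+2)$, derives the a priori bound $\epsilon(t)^2\leq\epsilon(0)^2+2t\|\beta\|^2$ to guarantee this on $[0,\tfrac{\ln(\|\beta\|/2)}{2\|\beta\|}]$, and only then solves the simplified system $\dot c^2 = -2\alpha c^2 o,\ \dot o = -2\alpha c^2 o$ explicitly. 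Writing ``the choice $t_1\leq\tfrac{\ln(\|\beta\|/2)}{2\|\beta\|}$ is exactly calibrated ... closing the bootstrap'' hides all of that work; without it the integral $\int_0^t c(\tau)^2\,d\tau$ is not quantitatively controlled and the $e^{-\alpha t}$ rate does not follow. Similarly, the initial sharpness bound $(\alpha,3\alpha)$ is only sketched (``I would project ... and apply Gershgorin''); the paper gets the upper bound directly from the maximum row sum of the explicit $\nabla^2_\theta L$. Finally, a consequence of your (more accurate) loss is that the $\alpha$-driven downward pressure on $\dot c$ at $t=0$ is $\Theta(\alpha/(d_1+d_2))$ rather than the paper's $\Theta(\alpha)$, so the hypothesis $\sqrt\alpha=\Omega(\|\beta\| k\ln k)$ as stated may not suffice uniformly in $d_1$ — you would need to re-derive the threshold on $\alpha$ for your version of the dynamics, which is another step that is not done.
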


Proofs can be found in \cref{app:proofs}. After this decrease, signal amplification can proceed---but this also means that the sharpness with respect to \emph{how the network uses the feature $x_o$} will grow, so a small perturbation to the parameters $b_o$ will induce a large increase in loss.

\begin{theorem}[Progressive sharpening]
    \label{thm:sharpness-increase}
    If $\sqrt{\alpha} = \Omega\left(1 + \|\beta\|^2 k\ln k \right)$, then at starting at time $t_1$ the sharpness will increase linearly in $\|\beta\|$ until some time $t_2 \geq \frac{1}{2\twonormsq{\beta}}$, reaching at least $\frac{5}{8} \|\beta\| \alpha$. This lower bound on sharpness applies to each dimension of $b_o$.
\end{theorem}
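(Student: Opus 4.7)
The plan is to use Theorem 3.1 to reduce Phase 2 to an essentially two-layer linear regression on the true signal $\beta$, then to analyze this reduced gradient flow and show that $c$ grows at a rate governed by $\|\beta\|$, so that the $b_o$-block of the Hessian attains the desired sharpness. I proceed in three stages.

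\textbf{Loss simplification and relevant Hessian block.} Using independence of $x$ and $x_o$ and the sign-symmetry of $x_o$ on outliers, the cross-terms between $b$ and $b_o$ (and between $b_o$ and the $|x_o|$-part of the target) vanish in expectation, so the population loss simplifies to
\begin{equation*}
L(\theta) = \tfrac{1}{2}\twonormsq{cb - \beta} + \tfrac{c^2 \alpha}{2}\,u^2 + \tfrac{\alpha}{2}, \qquad u := \tfrac{\mathbf{1}^\top b_o}{\sqrt{d_2}}.
\end{equation*}
Consequently, the only $\alpha$-dependent Hessian block is $\nabla^2_{b_o} L = (c^2 \alpha / d_2)\,\mathbf{1}\mathbf{1}^\top$, whose top eigenvalue equals $c^2 \alpha$, attained along $\mathbf{1}/\sqrt{d_2}$. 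The $(c, b_o)$ cross-coupling is proportional to $u$, so once $u$ is negligible it cannot perturb this top Hessian eigenvalue.

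\textbf{Reduction in Phase 2.} Theorem 3.1 yields $|u(t_1)| \lesssim e^{-\alpha t_1}\,|u(0)|$; the hypothesis $\sqrt\alpha = \Omega(1 + \|\beta\|^2 k \ln k)$ combined with $t_1 \lesssim \ln\|\beta\|/\|\beta\|$ forces $\alpha t_1 \gg 1$, so $u$ is essentially zero at the start of Phase 2 and is kept so by $\dot u = -c^2 \alpha u$. Dropping the $u$-dependent terms leaves the standard balanced two-layer linear flow $\dot c = -c\twonormsq{b} + b^\top\beta$, $\dot b = -c(cb - \beta)$, with $c^2 = \twonormsq{b} + \twonormsq{b_o}$ preserved from initialization; since $\twonormsq{b_o}$ is pinned near its initial value by the decay of $u$, the effective dynamics become precisely those of balanced two-layer linear regression.

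\textbf{Growth of $c$.} Decomposing $b = \gamma \hat\beta + b_\perp$ with $\hat\beta = \beta/\|\beta\|$ reduces the flow to $\dot c = -c^3 + \gamma\|\beta\|$, $\dot\gamma = -c^2\gamma + c\|\beta\|$, with $b_\perp$ contracting. In sum/difference coordinates,
\begin{equation*}
(c+\gamma)' = (c+\gamma)(\|\beta\| - c^2), \qquad (c-\gamma)' = -(c-\gamma)(\|\beta\| + c^2),
\end{equation*}
so $c-\gamma$ decays, while $c+\gamma$ grows exponentially at rate at least $\|\beta\| - c^2 \geq 3\|\beta\|/8$ whenever $c^2 \leq 5\|\beta\|/8$. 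Starting from $c(t_1) + \gamma(t_1) = \Theta(1)$ (which follows from a short-Phase-1 argument showing $c$ drifts only by a $1 + O(1/d)$ factor), integration yields $c = \sqrt{5\|\beta\|/8}$ at some time $t_2$ with $t_2 - t_1 = \Theta(\ln\|\beta\|/\|\beta\|) \geq 1/(2\|\beta\|^2)$ (the latter using $\|\beta\| \geq 24/5$ from Theorem 3.1). At this $t_2$,
\begin{equation*}
\|\nabla^2 L\|_2 \geq \lambda_{\max}(\nabla^2_{b_o} L) = c^2\alpha \geq \tfrac{5}{8}\|\beta\|\alpha,
\end{equation*}
and since the top eigenvector of $\nabla^2_{b_o} L$ is the uniform vector $\mathbf{1}/\sqrt{d_2}$, this lower bound loads equally onto every coordinate of $b_o$.

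\textbf{Main obstacle.} The most delicate step is quantifying how the residual $u \neq 0$ in Phase 2 affects (i) the reduced $(c, b)$ dynamics --- one must check that the extra drag $-cu^2\alpha$ in $\dot c$ is dominated by $\gamma\|\beta\|$ --- and (ii) the top Hessian eigendirection --- the off-diagonal $(c, b_o)$ term $2cu\alpha/\sqrt{d_2}$ must be too small to rotate the top eigenvector away from $\mathbf{1}/\sqrt{d_2}$. Both reduce to the estimate $u = O(e^{-\alpha t})$, which is where the scaling $\sqrt\alpha = \Omega(1 + \|\beta\|^2 k\ln k)$ is crucial: it ensures $\alpha$ outruns the polynomial-in-$\|\beta\|$ slack needed in each comparison. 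A secondary subtlety is the short transient in which $c - \gamma$ is still $\Theta(1)$, which requires a careful estimate of $\int_{t_1}^{t_2}(\|\beta\| - c^2)\,ds$ to recover the sharp constant $5/8$ rather than merely $\Omega(\|\beta\|\alpha)$.
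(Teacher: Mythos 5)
Your proof takes a genuinely different route from the paper's. The paper tracks $c^2$ and $\epsilon^2$ directly, establishes $\dt{c^2} \geq \frac{1}{2}\dt{\epsilon^2}$ once Phase~2 begins, upper-bounds $c, \epsilon$ at that moment by $\frac{\sqrt{\|\beta\|}}{2}$, and then uses the endpoint constraint $c\epsilon = \|\beta\|$ to derive $c^2 \geq \frac{5}{8}\|\beta\|$ from a quadratic inequality in the total increment $s := \epsilon^2(t')-\epsilon^2(t_2)$. Your sum/difference decoupling $(c\pm\gamma)$ with exponential growth of $c+\gamma$ is a cleaner and more mechanistic alternative. One other difference worth flagging: your simplification of the outlier loss to $\frac{\alpha}{2}(c^2u^2+1)$ with $u := \mathbf{1}^\top b_o/\sqrt{d_2}$ is in fact the correct expansion, whereas the paper writes $\alpha(c^2\twonormsq{b_o}+1)$ and correspondingly gets the $b_o$-block of the Hessian as $\alpha c^2 I_{d_2}$ rather than your $\frac{\alpha c^2}{d_2}\mathbf{1}\mathbf{1}^\top$. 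Both have top eigenvalue $\alpha c^2$, so the headline bound is unaffected, but the last sentence of the theorem (``applies to each dimension of $b_o$'') is read off the diagonal of $\alpha c^2 I_{d_2}$ in the paper, whereas under your Hessian the diagonal is only $\alpha c^2/d_2$; you cover this by reinterpreting the claim as uniform eigenvector loading.

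There is, however, an internal inconsistency in your reduction that you should resolve. Under your corrected loss, $\dot b_o \propto \mathbf{1}$, so only the $\mathbf{1}$-component of $b_o$ (i.e., $u$) decays; the orthogonal component is frozen and $\twonormsq{b_o} \to \twonormsq{b_o^\perp(0)} \approx 1-m > 0$. Combined with the conservation $c^2 = \twonormsq{b} + \twonormsq{b_o}$, this means $\twonormsq{b} \to c^2 - (1-m)$, so dropping the $u^2$ term gives $\dot c = -c\bigl(c^2 - \twonormsq{b_o}\bigr) + \gamma\|\beta\|$, not $\dot c = -c^3 + \gamma\|\beta\|$; correspondingly the sum coordinate satisfies $(c+\gamma)' = (c+\gamma)(\|\beta\| - c\gamma)$, not $(c+\gamma)(\|\beta\| - c^2)$. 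The equalities $\dot c = -c^3 + \gamma\|\beta\|$ and $(c+\gamma)' = (c+\gamma)(\|\beta\| - c^2)$ you write actually follow from the paper's (incorrect) simplification in which $\twonormsq{b_o}$ itself decays to zero. Fortunately the slip is one-sided --- the residual $\twonormsq{b_o} > 0$ can only accelerate the growth of $c$ --- so your conclusion survives if you replace the exact ODEs with inequalities, but the current wording presents them as exact. A secondary concern is the step ``$c(t_1) + \gamma(t_1) = \Theta(1)$'': Theorem~\ref{thm:sharpness-decrease} only gives $c(t_1)^2 \geq m = \frac{1}{1+k}$, which vanishes as $k \to \infty$, and $\gamma(t_1) = \epsilon(t_1)$ also starts at $O\bigl((d_1+d_2)^{-1/2}\bigr)$; the exponential-growth argument still closes, but the constant needs tracking rather than assertion.
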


Oscillation will not occur during gradient flow---but for gradient descent with step size $\eta > \frac{16}{5\|\beta\|\alpha}$, $b_o$ will start to increase in magnitude while oscillating across the origin. If this growth continues, it will rapidly \emph{reintroduce} the feature, causing the loss on the outliers to grow and alternate. Such reintroduction (an example of which occurs around iteration 3000 in \cref{fig:sky-logits}) seems potentially helpful for exploration. In \cref{fig:verify-model} in the Appendix we simulate our model and verify exactly this sequence of events. We also show that an MLP trained on CIFAR-10 displays the same characteristic behavior.\looseness=-1

\subsection{Additional Findings}

\paragraph{Sharpness often occurs overwhelmingly in the first few layers.}
\cref{thm:sharpness-increase} shows that progressive sharpening occurs specifically in $b_o$. Generally, our model suggests that sharpness will begin in the last layer but that that early in training it will shift to the earlier layers since they have more capacity to redirect the signal. In \cref{app:sharpness-location} we track what fraction of curvature\footnote{The ``fraction of curvature'' is with respect to the top eigenvector of the loss Hessian. We partition this vector by network layer, so each sub-vector's squared norm represents that layer's contribution to the overall curvature.} of the top eigenvector lies in each layer of various networks during training with GD. In a ResNet-18, sharpness occurs almost exclusively in the first convolutional layer after the first few training steps; the same pattern appears more slowly while training a VGG-11. In a Vision Transformer curvature occurs overwhelmingly in the embedding layer and very slightly in the earlier MLP projection heads. The text transformer (NanoGPT) follows the same pattern, though with less extreme concentration in the embedding. Thus it does seem to be the case that earlier layers have the most significant sharpness---especially if they perform dimensionality reduction or have particular influence over how the signal is propagated to later layers. This seems the likely cause of large gradients in the early layers of vision models \citep{chen2021empirical, kumar2022how}, suggesting that this effect is equally influential during finetuning and pretraining and that further study can improve optimization.

\paragraph{Batchnorm may smooth training, even if not the loss itself.} 
\citet{cohen2021gradient} noted that batchnorm (BN) \citep{ioffe2015batch} does not prevent networks from reaching the edge of stability and concluded, contrary to \citet{santurkar2018does}, that BN does not smooth the loss landscape. We conjecture that the benefit of BN may be in downweighting the influence of opposing signals and mitigating this oscillation. In other words, BN may smooth the \emph{optimization trajectory} of neural networks, rather than the loss itself (this is consistent with the distinction made by \citet{cohen2021gradient} between regularity and smoothness).
In \cref{sec:sgd} we demonstrate that Adam \emph{also} smooths the optimization trajectory and that minor changes to emulate this effect can aid stochastic optimization.
We imagine that the effect of BN could also depend on the use of GD vs. SGD. Specifically, our findings hint at a possible benefit of BN which applies only to SGD: reducing the variance of imbalanced opposing signals across random minibatches.

\paragraph{For both GD and SGD, approximately half of training points go up in loss on each step.} Though only the outliers are wildly oscillating, many more images contain some small component of the features they exemplify. \cref{fig:frac-loss-increase} in the Appendix shows that the fraction of points which increase in loss hovers around 50\% for every step---to some extent, a small degree of oscillation appears to be happening to the entire dataset.

\paragraph{Different losses have different effects on sharpening.} Our model would predict that adding label smoothing to the cross-entropy loss should reduce sharpening, because smoothing reduces loss curvature under extreme overconfidence. Indeed, \citet{macdonald2023progressive} show this to be the case. This also hints at why logistic loss may be more suitable for NN optimization, because it only has substantial curvature around $x=0$ ($x$ being the logit, i.e. when prediction entropy is high), so unlike square or exponential loss, large magnitude features will not massively increase sharpness.
We expect a similar property could contribute to the relative behavior of different activations (e.g. ReLU or tanh).\looseness=-1

\section{The Interplay of Opposing Signals and Stochasticity}
\label{sec:sgd}

Full-batch GD is not used in practice when training NNs. It is therefore pertinent to ask what these findings imply about stochastic optimization. We begin by verifying that this pattern persists during SGD. \cref{fig:vgg-sgd} displays the losses for four opposing group pairs of a VGG-11-BN trained on CIFAR-10 with SGD batch size 128. We observe that the paired groups do exhibit clear opposing oscillatory patterns, but they do not alternate with every step, nor do they always move in opposite directions. This should not be surprising: we expect that not every batch will have a given signal in one direction or the other. For comparison, we include the \emph{full} train loss in each figure---that is, including the points not in the training batch. We see that the loss on the outliers has substantially larger variance; to confirm that this is not just because the groups have many fewer samples, we also plot the loss on a random subset of training points of the same size. We reproduce this plot with a VGG-11 without BN in \cref{fig:vggbn-sgd-groups} in the Appendix. 
\begin{figure}[t]
    \centering
    \begin{subfigure}{.45\linewidth}
        \includegraphics[width = \linewidth]{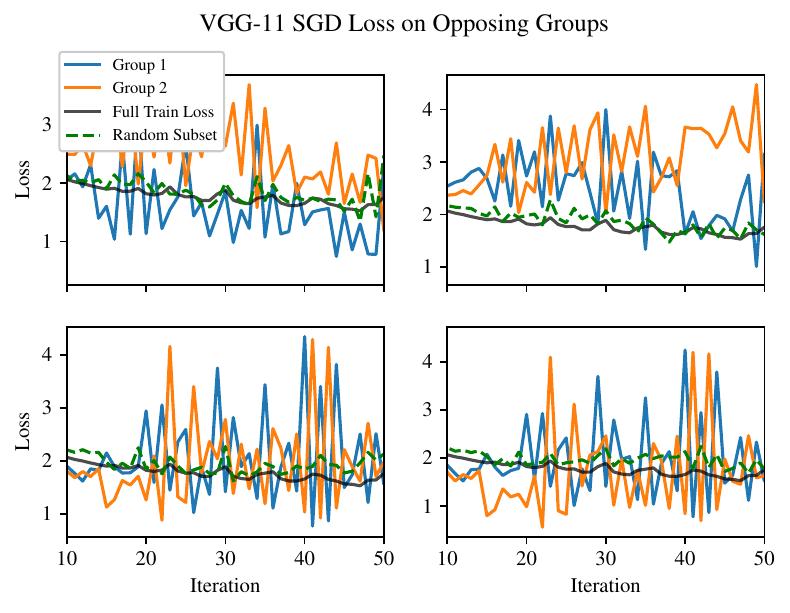}
        \caption{}
        \label{fig:vgg-sgd}
    \end{subfigure}
    \begin{subfigure}{.47\linewidth}
        \includegraphics[width = \linewidth]{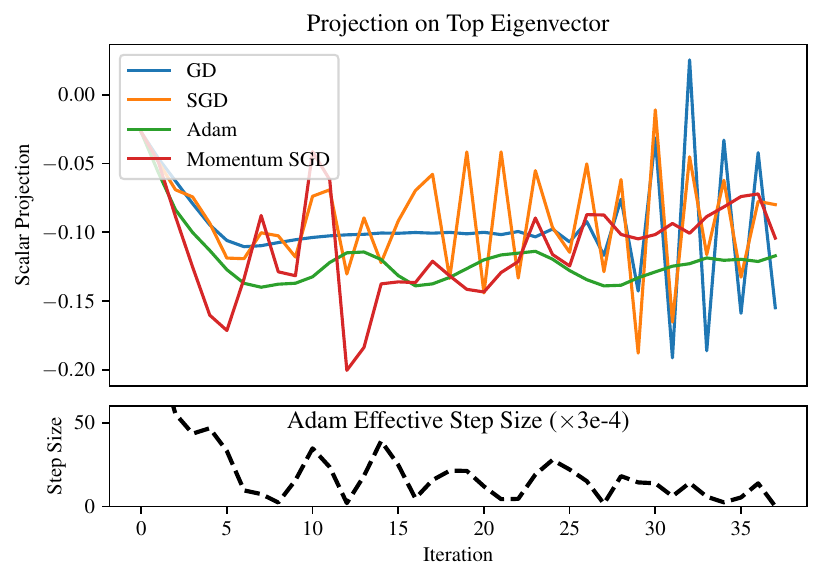}
        \caption{}
        \label{fig:adam-sgd-proj}
    \end{subfigure}
    \caption{\textbf{Outliers with opposing signals have a significant influence even during SGD. Left:} We plot the losses of paired outlier groups on a VGG-11-BN trained on CIFAR-10, along with the full train loss for comparison. Modulo batch randomness, the outliers' loss follow the same oscillatory pattern with large magnitude. See appendix for the same without batchnorm. \textbf{Right (top):} We train a small MLP on a 5k subset of CIFAR-10 with various optimizers and project the iterates onto the top Hessian eigenvector. SGD closely tracks GD, bouncing across the valley; momentum somewhat mitigates the sharp jumps. Adam smoothly oscillates along one side. \textbf{Right (bottom):} Adam's effective step size drops sharply when moving too close or far from the valley floor. }
\end{figure}

Having verified that this behavior occurs in the stochastic setting, we conjecture that current best practices for neural network optimization owe much of their success to how they handle opposing signals. As a proof of concept, we will make this more precise with a preliminary investigation of the Adam optimizer \citep{kingma2014adam}.

\subsection{How Adam Handles Gradients with Opposing Signals}

To better understand their differences, \cref{fig:adam-sgd-proj} visualizes the parameter iterates of Adam and SGD with momentum on a ReLU MLP trained on a 5k subset of CIFAR-10, alongside those of GD and SGD (all methods use the same initialization and sequence of training batches).
The top figure is the projection of these parameters onto the top eigenvector of the loss Hessian of the network trained with GD, evaluated at the first step where the sharpness crosses $\nicefrac{2}{\eta}$. We observe that SGD tracks a similar path to GD, though adding momentum  mitigates the oscillation somewhat. In contrast, the network optimized with Adam markedly departs from this pattern, smoothly oscillating along one side. We identify three components of Adam which potentially contribute to this effect:

\paragraph{Advantage 1: Smaller steps along high curvature directions.} Adam's normalization causes smaller steps along the top eigenvector, especially near the minimum. The lower plot in \cref{fig:adam-sgd-proj} shows that the effective step size in this direction---i.e., the absolute inner product of the parameter-wise step sizes and the top eigenvector---rapidly drops to zero as the iterates approach the valley floor (in the opposite direction, the gradient negates the momentum for the same effect). We conjecture that general normalization may not be essential to Adam's performance; 
we even expect it could be somewhat harmful by limiting exploration. On the other hand, normalizing steps by curvature \emph{parameter-wise} does seem important; \citet{pan2023toward} argue the same and show that parameter-wise gradient clipping improves SGD substantially. We highlight why this may be useful in the next point.\looseness=-1

\paragraph{Advantage 2: Managing heavy-tailed gradients and avoiding steepest descent.} \citet{zhang2020adaptive} identified the ``trust region'' as an important contributor to Adam's success in attention models, pointing to heavy-tailed noise in the stochastic gradients. More recently, \citet{kunstner2023noise} argued that Adam's superiority does not come from better handling noise, which they supported by experimenting with large batch sizes. Our result reconciles these contradictory claims by showing that \textbf{the difficulty is not heavy-tailed \emph{noise}, but strong, directed (and perhaps imbalanced) opposing signals.} Unlike traditional ``gradient noise'', larger batch sizes may not reduce the effect of these signals---that is, the gradient is heavy-tailed (across parameters) even without being stochastic. Furthermore, the largest steps emulate Sign SGD, which is notably \emph{not} a descent method. \cref{fig:adam-sgd-proj} shows that Adam's steps are more parallel to the valley floor than those of steepest descent. \textbf{Thus it seems advantageous to \emph{intentionally} avoid steps along the gradient which point towards the local minimum,} which might lead to over-reliance on these features. Indeed, \citet{benzing2022gradient} observe that true second order methods perform worse than SGD on NNs, and \citet{kunstner2023noise} show that Adam shares some behavior with Sign SGD with momentum. 
This point is also consistent with the observed generalization benefits of a large learning rate for SGD on NNs \citep{Jastrzebski2020The}; in fact, opposing signals naturally fit the concept of ``easy-to-generalize'' features as modeled by \citet{li2019towards}.\looseness=-1

\paragraph{Advantage 3: Dampening.} Lastly, Adam's third important factor: downweighting the most recent gradient. Traditional SGD with momentum $\beta < 1$ takes a step which weights the current gradient by $\frac{1}{1+\beta} > \frac{1}{2}$.
Though this makes intuitive sense, our results imply that heavily weighting the most recent gradient can be problematic. Instead, we expect an important addition is \emph{dampening}, which multiplies the stochastic gradient at each step by some $(1-\tau) < 1$. We observe that Adam's (unnormalized) gradient is equivalent to SGD with momentum and dampening both equal to $\beta_1$, plus a debiasing step. Recently proposed alternatives also include dampening in their momentum update but do not explicitly identify the distinction \citep{zhang2020adaptive, pan2023toward, chen2023symbolic}.\looseness=-1

\subsection{Proof of Concept: Using These Insights to Aid Stochastic Optimization}

To test whether our findings translate to practical gains, we design a variant of SGD which incorporates these insights. First, we use dampening $\tau=0.9$ in addition to momentum. Second, we choose a global threshold: if the gradient magnitude for a parameter is above this threshold, we take a fixed step size; otherwise, we take a gradient step as normal. The exact method appears in \cref{app:adam-vs-sgd}. \textbf{We emphasize that our goal here is not to propose a new optimization algorithm.} Instead, we are exploring the potential value gained from knowledge of the existence and influence of opposing signals.\looseness=-1

Results in \cref{app:adam-vs-sgd} show that this approach matches Adam when training ResNet-56/110 on CIFAR-10 with learning rates for the unthresholded parameters across several orders of magnitude ranging from $10^{-4}$ to $10^3$. Notably, the fraction of parameters above the threshold (whose step size is fixed) is only around 10-25\% per step. This implies that the trajectory and behavior of the network is dominated by this small fraction of parameters; the remainder can be optimized much more robustly, but their effect on the network's behavior is obscured. We therefore see the influence of opposing signals as a possible explanation for the ``hidden'' progress in grokking \citep{barak2022hidden, nanda2023progress}.
We also compare this method to Adam for the early phase of training GPT-2 \citep{radford2019language} on the OpenWebText dataset \citep{gokaslan2019openwebtext}---not only do they perform the same, their loss similarity suggests that their exact trajectory may be very similar (\cref{appsec:split-gpt2}). Here the fraction of parameters above the threshold hovers around 50\% initially and then gradually decays. The fact that many more parameters in the attention model are above the threshold suggests that the attention mechanism is more sensitive to opposing signals and that further investigation of how to mitigate this instability may be fruitful.

\section{Discussion and Future Work}
\label{sec:discussion}
Many of the observations we make in this paper are not new, having been described in various prior works. Rather, this work identifies a possible \emph{higher-order cause} which neatly ties these findings together. There are also many works which pursue a more theoretical understanding of each of these phenomena independently. Such analyses begin with a set of assumptions (on the data, in particular) and prove that the given behavior follows. In contrast, this work \emph{begins} by identifying a condition---the presence of opposing signals---which we argue is likely a major cause of these behaviors. These two are not at odds: we believe in many cases our result serves as direct evidence for the validity of these modeling assumptions and that it may enable even more fine-grained analyses. This work provides an initial investigation which we hope will inspire future efforts towards a more complete understanding.

We now highlight some connections to these earlier findings. More general related work can be found in \cref{app:related-work}.

\paragraph{Heavy-tailed loss spectrum.} Earlier studies of the loss landscape noted a small group of very large outlier Hessian eigenvalues or Jacobian singular values (e.g. \citealt{sagun2016eigenvalues, sagun2017empirical, papyan2018full}, see \cref{app:related-work} for more). Our method of identifying these paired groups, along with the metrics tracked in \cref{fig:interest-stats}, indicate that these outlier directions in the spectrum are precisely the directions with opposing signals in the gradient and that this pattern may be key to better understanding the generalization ability of NNs trained with SGD.

\paragraph{Progressive sharpening and the edge of stability.}
More recent focus has shifted to the top Hessian eigenvalue(s), where it was empirically observed that their magnitude (the loss ``sharpness'') grows during training \citep{jastrzebski2018on, Jastrzebski2020The, cohen2021gradient} (so-called \emph{progressive sharpening}), leading to rapid oscillation in weight space \citep{xing2018walk, jastrzebski2018on}. \citet{cohen2021gradient} also found that for GD this coincides with a consistent yet non-monotonic decrease in training loss over long timescales, which they named the \emph{edge of stability}.
We observe that prior analyses have proven the \emph{occurrence} of progressive sharpening and the edge of stability under various assumptions \citep{arora2022understanding, wang2022analyzing}, but the underlying \emph{cause} has not been made clear. Our discussion, experiments, and theoretical analysis in \cref{sec:main-explanation} provide strong evidence for a genuine cause which aligns with several of these existing modeling assumptions. Roughly, our results seem to imply that progressive sharpening occurs when the network learns to rely on (or \emph{not} rely on) opposing signals in a very specific way, while simultaneously amplifying overall sensitivity. This growth in sensitivity means a small parameter change modifying how opposing signals are used can massively increase loss. This leads to intermittent instability orthogonal to the ``valley floor'', accompanied by gradual training loss decay and occasional spikes as described by the toy example in \cref{fig:toy-example} and depicted on real data in \cref{fig:main-teaser}. Empirically, this oscillation seems somewhat independent of movement parallel to the floor (see \cref{app:adam-vs-sgd}), but further study of the precise dynamics is needed.\looseness=-1

\paragraph{Spurious correlations, grokking, and slingshotting.} In images, the features corresponding to opposing signals match the traditional picture of ``spurious correlations'' surprisingly closely---it could be that a network maintaining balance or diverging along a direction also determines whether it continues to use a ``spurious'' feature or is forced to find an alternative way to minimize loss. Indeed, the exact phenomenon of a network ``slingshotting'' to a new region with improved generalization has been directly observed \citep{wu2018how, lewkowycz2020large, jastrzebski2021catastrophic, thilak2022slingshot}. \emph{Grokking} \citep{power2022grokking}, whereby a network learns to generalize long after memorizing the training set, is closely related. Several works have shown that grokking is a ``hidden'' phenomenon, with gradual amplification of generalizing subnetworks \citep{barak2022hidden, nanda2023progress, merrill2023tale}; it has even been noted to co-occur with weight oscillation \citep{notsawo2023predicting}. Our experiments in \cref{sec:sgd,app:adam-vs-sgd} show that the influence of opposing signals obscures the behavior of the rest of the network, offering one possible explanation.

\paragraph{Simplicity bias and double descent.} \citet{nakkiran2019sgd} observed that NNs learn functions of increasing complexity throughout training. Our experiments---particularly the slow decay in the norm of the feature embedding of opposing signals---lead us to believe it would be more correct to say that they \emph{unlearn} simple functions, which enables more complex subnetworks with smaller magnitude and better performance to take over. At first this seems at odds with the notion of \emph{simplicity bias} \citep{valle-perez2018deep, shah2020pitfalls}, defined broadly as a tendency of networks to rely on simple functions of their inputs. However, it does seem to be the case that the network will use the simplest (e.g., largest norm) features that it can, so long as such features allow it to approach zero training loss; otherwise it may eventually diverge. This tendency also suggests a possible explanation for \emph{double descent} \citep{belkin2019reconciling, Nakkiran2020Deep}:
even after interpolation, the network pushes towards greater confidence and the weight layers continue to balance \citep{saxe2013exact, du2018algorithmic}, increasing sharpness. This could lead to oscillation, pushing the network to learn new features which generalize better \citep{wu2018how, li2019towards, rosenfeld2022domain, thilak2022slingshot}. This behavior would also be more pronounced for larger networks because they exhibit greater sharpening. Note that the true explanation is not quite so straightforward: generalization is sometimes improved via methods that \emph{reduce} oscillation (like loss smoothing), implying that this behavior is not always advantageous. A better understanding of these nuances is an important subject for future study.\looseness=-1

\paragraph{Sharpness-Aware Minimization} Another connection we think merits further inquiry is Sharpness-Aware Minimization (SAM) \citep{foret2021sharpnessaware}, which is known to improve generalization of neural networks for reasons still not fully understood \citep{wen2023sharpness}. In particular, the better-performing variant is 1-SAM, which takes positive gradient steps on each training point in the batch individually. It it evident that several of these updates will point along directions of steepest descent/ascent orthogonal to the valley floor (and, if not normalized, the updates may be \emph{very} large). Thus it may be that 1-SAM is in some sense ``simulating'' oscillation and divergence out of this valley in both directions, enabling exploration in a manner that would not normally be possible until the sharpness grows large enough---these intermediate steps would also encourage the network to downweight these features sooner and faster. In contrast, standard SAM would only take this step in one of the two directions, or perhaps not at all if the opposing signals are equally balanced. Furthermore, unlike 1-SAM the intermediate step would blend together all opposing signals in the minibatch.
These possibilities seem a promising direction for further exploration.

\section{Conclusion}
The existence of groups of training data with such a significant yet non-obvious influence on neural network training raises as many questions as it answers. This work presents an initial investigation into the effect of opposing signals on various aspects of optimization, but there is still much more to understand. Though it is clear they have a large influence on training, less obvious is whether reducing their influence is \emph{necessary} for improved optimization or simply coincides with it. At the same time, there is evidence that the behavior these signals induce may serve as an important method of exploration and/or regularization. If so, another key question is whether these two effects can be decoupled---or if the incredible generalization ability of neural networks is somehow inherently tied to their optimization instability.

\newpage
\section*{Acknowledgements}
We thank Saurabh Garg for detailed feedback on an earlier version of this work. Thanks also to Christina Baek and Bingbin Liu for helpful comments and Jeremy Cohen for pointers to related work. This research is supported in part by NSF awards IIS-2211907, CCF-2238523, an Amazon Research Award, and the CMU/PwC DT\&I Center.

\bibliography{arxiv}
\bibliographystyle{plainnat}

\newpage
\appendix
\clearpage
\section{Related Work}
\label{app:related-work}

\paragraph{Characterizing the NN loss landscape.} Earlier studies of the loss landscape commonly identified a heavy-tailedness with a small group of very large outlier Hessian eigenvalues or Jacobian singular values \citep{sagun2016eigenvalues, sagun2017empirical, papyan2018full, oymak2019generalization, papyan2019measurements, fort2019emergent, ghorbani2019investigation, li2020hessian, papyan2020traces, kopitkov2020neural}. Later efforts focused on concretely linking these observations to corresponding behavior, often with an emphasis on SGD's bias towards particular solutions \citep{wu2018how, jastrzkebski2017three, Jastrzebski2020The} and what this may imply about its resulting generalization \citep{jastrzebski2018on, zhu2019anisotropic, wu2022alignment}. Our method for identifying these paired groups, along with \cref{fig:interest-stats}, indicates that these outlier directions in the Hessian/Jacobian spectrum are precisely the directions with opposing signals in the gradient, and that this pattern may be key to better understanding the generalization ability of NNs trained with SGD.

\paragraph{Progressive sharpening and the edge of stability.}
Shifting away from the overall structure, more recent focus has been specifically on top eigenvalue(s), where it was empirically observed that their magnitude (the loss ``sharpness'') grows when training with SGD \citep{jastrzebski2018on, Jastrzebski2020The} and GD \citep{kopitkov2020neural, cohen2021gradient} (so-called ``progressive sharpening''). This leads to rapid oscillation in weight space \citep{xing2018walk, jastrzebski2018on, cohen2021gradient, cohen2022adaptive}. \citet{cohen2021gradient} also found that for GD this coincides with a consistent yet non-monotonic decrease in training loss over long timescales, which they named the ``edge of stability''; moreover, they noted that this behavior runs contrary to our traditional understanding of NN convergence.
Many works have since investigated the possible origins of this phenomenon \citep{zhu2023understanding, kreisler2023gradient}. Several of these are deeply related to our findings: \citet{ma2022beyond} connect this behavior to the existence of multiple ``scales'' of losses; the outliers we identify corroborate this point. \citet{damian2022selfstabilization} prove that GD implicitly regularizes the sharpness---we identify a conceptually distinct source of such regularization, as described in \cref{sec:main-explanation}. \citet{arora2022understanding} show under some conditions that the GD trajectory follows a minimum-loss manifold towards lower curvature regions. This is consistent with our findings, and we believe this manifold to be precisely the path which evenly balances the opposing gradients. \citet{wang2022analyzing} provide another thorough analysis of NN training dynamics at the edge of stability; their demonstrated phases closely align with our own. They further observe that this sharpening coincides with a growth in the norm of the last layer, which was also noted by \citet{macdonald2023progressive}. Our proposed explanation for the effect of opposing signals offers some insight into this relationship, but further investigation is needed.
\clearpage

\section{Examples of Opposing Signals in Text}
\label{app:text-signals}
\begin{center}  
\begin{tcolorbox}[width=\textwidth, nobeforeafter, title=Punctuation Ordering]
\small\texttt{the EU is “the best war-avoidance mechanism ever invented[”]. \\ 
the 2008 economic crash and in doing so “triggered a crisis of rejection[”]. \\ 
Don’t farm this thing out under the guise of a “contest[”]. \\
I really thought she was going to use another C-word besides “coward[”]. \\
because it was one of the few that still “dry-farmed[”]. \\
He describes the taste as “almost minty[”]. \\
I did receive several offers to “help out a bit[”]. \\
we can from our investments, regardless of the costs to the rest of society[”]. \\
Or “it won’t make a difference anyway[”]. \\
Nor is it OK to say “the real solution is in a technological breakthrough[”]. \\
and that’s what they mean by “when complete[”]. \\
and that the next big investment bubble to burst is the “carbon bubble[”]. \\
he had been “driven by ideological and political motives[”]. \\
Prime Minister Najib Razak’s personal bank account was a “genuine donation[”]. \\
exceptional intellect, unparalleled integrity, and record of independence[”]. \\
was the “most consequential decision I’ve ever been involved with[”]. \\
which some lawmakers have called the “filibuster of all filibusters[”]. \\
Democrats vowed to filibuster what some openly called a “stolen seat[”].
}
\tcbline
\small\texttt{His leather belt was usually the delivery method of choice.[”] \\
I used to catch me a few and make pets out of them.[”] \\
% “I didn’t realize then that Ricky was so damn good-looking.[”] \\
catch pneumonia because I got my underwear on, but Bob here is naked.'\ [”] \\
My husband-to-be built a gun cabinet. It was that type of community: normal.[”] \\
‘use both hands.’ That was Ricky to a tee. He’s a jokester.[”] \\
He added, with a half-smile, “I’m guessing he didn’t mean the drinking.[”] \\
on the county road. If you can’t get to it, that doesn’t make sense.’ [”] \\
“It’s laborious and boring. He loved excitement and attention.[”] \\
‘That little son-of-a-gun is playing favorites,’ and turned it against him.[”] \\
“For medicinal purposes, for medical purposes, absolutely, it’s fine.[”] \\
“That’s a choice that growers make. It’s on their side of the issue.[”] \\
to be a good steward of your land. You have to make big decisions in a hurry.[”] \\
“But of course modern farming looks for maximum yield no matter what you have to put in. And in the case of California, that input is water.[”] \\
And we have been drawing down on centuries of accumulation. Pretty soon those systems are not going to be able to provide for us.[”] \\
not a luxury crop like wine. I’m really excited ab
out the potential.[”] \\
% ‘these goddamn vineyards are sucking up all the water.’[”] \\
“We knew and still believe that it was the right thing to do.[”] \\
spending lots of time in the wind tunnel, because it shows when we test them.[”] \\
Compliance was low on the list, but I think it’s a pretty comfortable bike.[”] \\
the opportunity to do that.’ I just needed to take it and run with it.[”] \\
% listens to ultra nationalists like Steve Bannon and behaves like a child.[”]
}
\end{tcolorbox}
\noindent\begin{minipage}{\textwidth}
\captionof{figure}{\textbf{Examples of opposing signals in text.} Found by training GPT-2 on a subset of OpenWebText. Sequences are on separate lines, the token in brackets is the target and all prior tokens are (the end of the) context. As both standards are used, it is not always clear whether punctuation will come before or after the end of a quotation (we include the period after the quote for clarity---the model does not condition on it). Note that the double quotation is encoded as the \emph{pair} of tokens \texttt{[447, 251]}, and the loss oscillation is occurring for sequences that end with this pair, either before (top) or after (bottom) the occurrence of the period token (\texttt{13}).}
\end{minipage}

\begin{tcolorbox}[width=\textwidth, nobeforeafter, title=New Line or `the' After Colon]
\small\texttt{In order to prepare your data, there are three things to do:[\textbackslash n] \\
in the FP lib of your choice, namely Scalaz or Cats. It looks like this:[\textbackslash n] \\
Let the compiler guide you, it will only accept one implementation:[\textbackslash n] \\
Salcedo said of the work:[\textbackslash n] \\
Enter your email address:[\textbackslash n] \\
According to the CBO update:[\textbackslash n] \\
Here's how the Giants can still make the playoffs:[\textbackslash n] \\
described how he copes with his condition in an interview with The Telegraph:[\textbackslash n] \\
Here’s a list of 5 reasons as to why self diagnosis is valid:[\textbackslash n] \\
successive Lambda invocations. It looks more or less like this:[\textbackslash n] \\
data, there are three things to do:[\textbackslash n] \\
4.2 percent in early 2018.\textbackslash n\textbackslash nAccording to the CBO update:[\textbackslash n] \\
other than me being myself.”\textbackslash n\textbackslash nWATCH:[\textbackslash n] \\
is to make the entire construction plural.\textbackslash n\textbackslash nTwo recent examples:[\textbackslash n] \\
We offer the following talking points to anyone who is attending the meeting:[\textbackslash n] \\
is on the chopping block - and at the worst possible moment:[\textbackslash n]
}
\tcbline
\small\texttt{as will our MPs in Westminster. But to me it is obvious: [the] \\
The wheelset is the same as that on the model above: [the] \\
not get so engrained or in a rut with what I had been doing. Not to worry: [the] \\
polemics against religion return in various ways to one core issue: [the] \\
which undergirds all other acts of love, both divine and human: [the] \\
integrate fighters from the Kurds’ two main political parties: [the] \\
robs this incredible title of precisely what makes it so wonderful: [the] \\
you no doubt noticed something was missing: [the] \\
Neil Gorsuch's 'sexist' comments on maternity leave: [the]
}
\end{tcolorbox}
\end{center}

\noindent\begin{minipage}{\textwidth}
\captionof{figure}{\textbf{Examples of opposing signals in text.} Found by training GPT-2 on a subset of OpenWebText. Sequences are on separate lines, the token in brackets is the target and all prior tokens are (the end of the) context. Sometimes a colon occurs mid-sentence---and is often followed by ``the''---other times it announces the start of a new line. The model must \emph{unlearn} ``: $\mapsto$ \texttt{[\textbackslash n]}'' versus ``: $\mapsto$ \texttt{[the]}'' and instead use other contextual information.}
\end{minipage}

\begin{figure}[ht!]
    \centering
    \includegraphics[width=.8\linewidth]{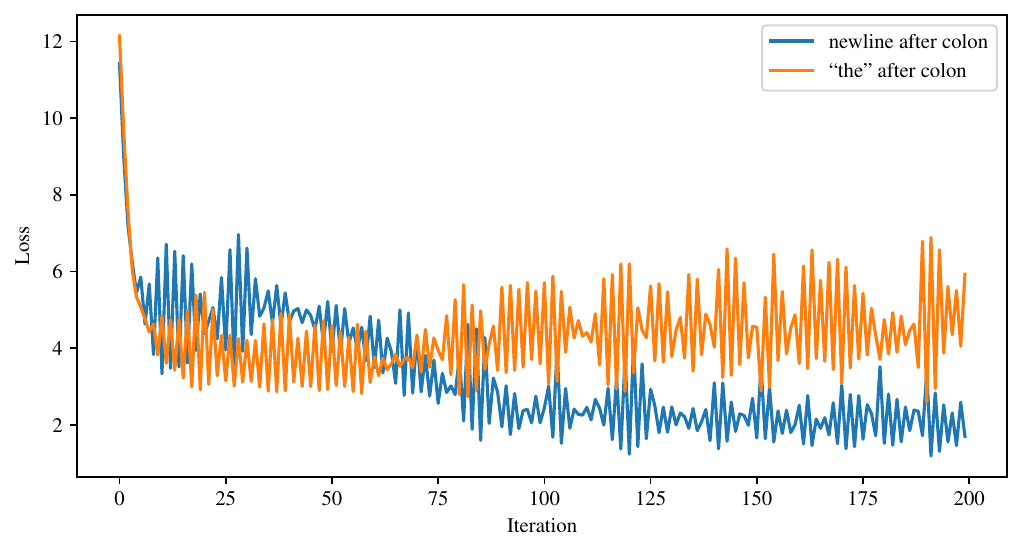}
    \caption{Loss of GPT-2 on the above opposing signals.}
\end{figure}

\clearpage
\section{Reproducing \cref{fig:sky-logits} in Other Settings}
\label{app:logit-track}

Though colors are straightforward, for some opposing signals such as grass texture it is not clear how to produce a synthetic image which properly captures what precisely the model is latching on to. Instead, we identify a real image which has as much grass and as little else as possible, with the understanding that the additional signal in the image could affect the results. We depict the grass image alongside the plots it produced.

\subsection{ResNet-18 Trained with GD on Other Inputs}

\begin{figure}[ht!]
    \centering
    \includegraphics[width=\linewidth]{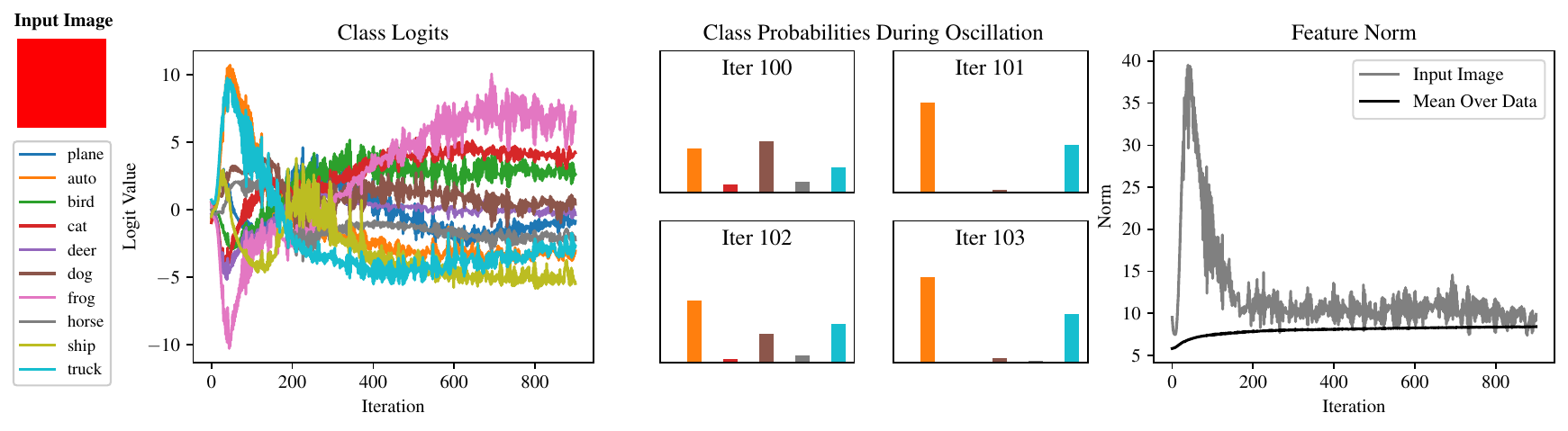}
    \caption{ResNet-18 on a red color block.}
\end{figure}

\begin{figure}[ht!]
    \centering
    \includegraphics[width=\linewidth]{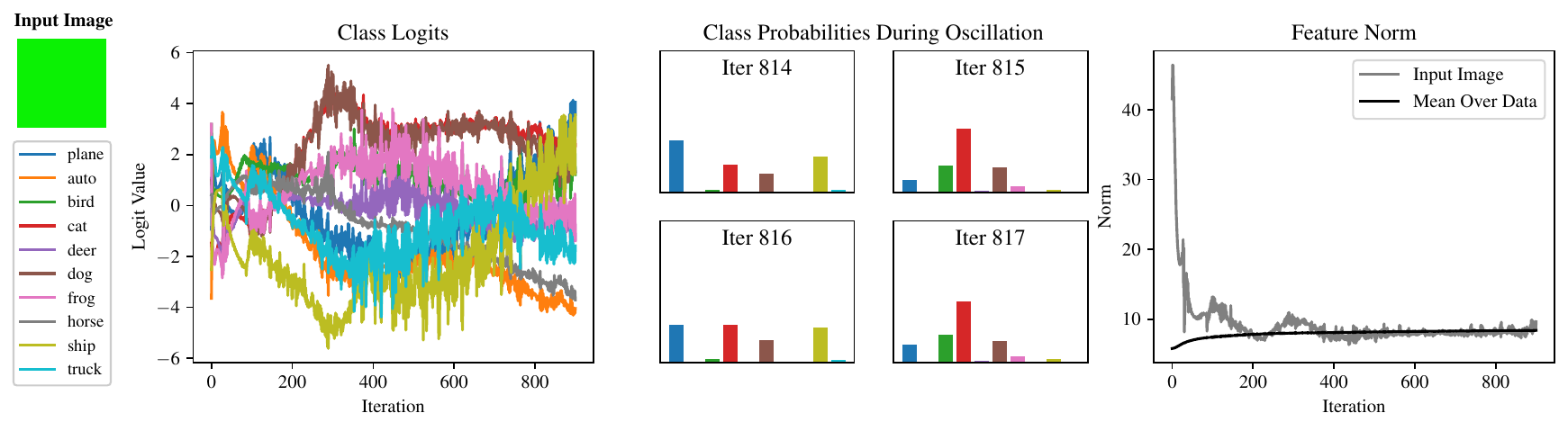}
    \caption{ResNet-18 on a green color block. As this color seems unnatural, we've included two examples of relevant images in the dataset.}
\end{figure}

\begin{figure}[ht!]
    \centering
    \begin{subfigure}{.1\linewidth}
        \includegraphics[width = \linewidth]{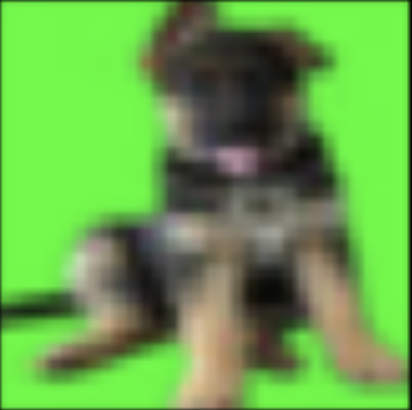}
    \end{subfigure}
    \begin{subfigure}{.1\linewidth}
        \includegraphics[width = \linewidth]{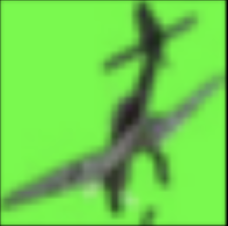}
    \end{subfigure}
    \caption{Examples of images with the above green color.}
\end{figure}

\begin{figure}[ht!]
    \centering
    \includegraphics[width=\linewidth]{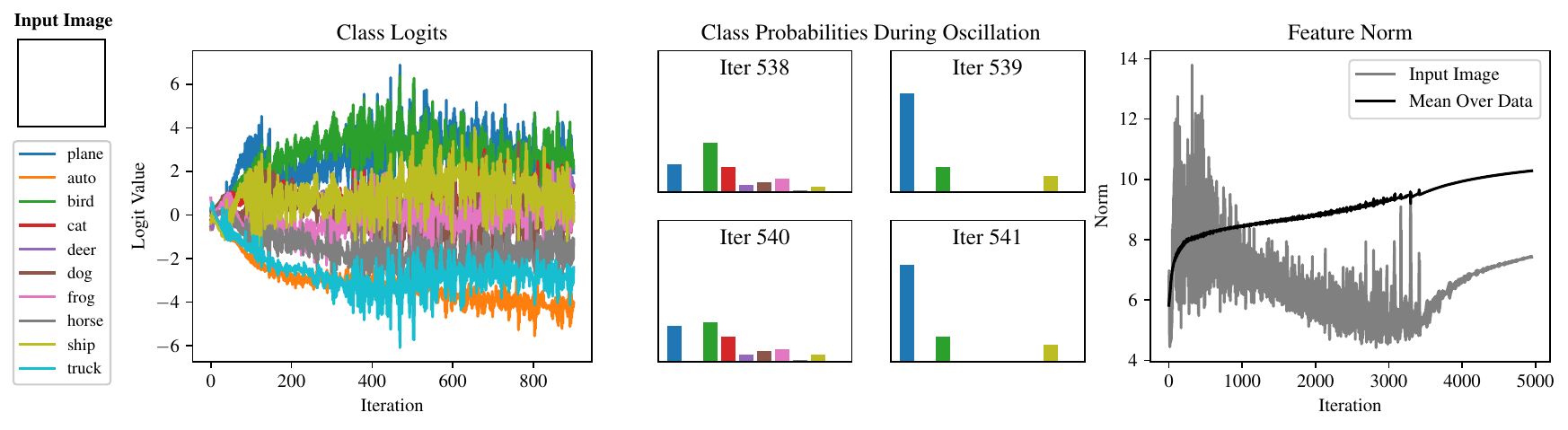}
    \caption{ResNet-18 on a white color block.}
\end{figure}

\begin{figure}[ht!]
    \centering
    \includegraphics[width=\linewidth]{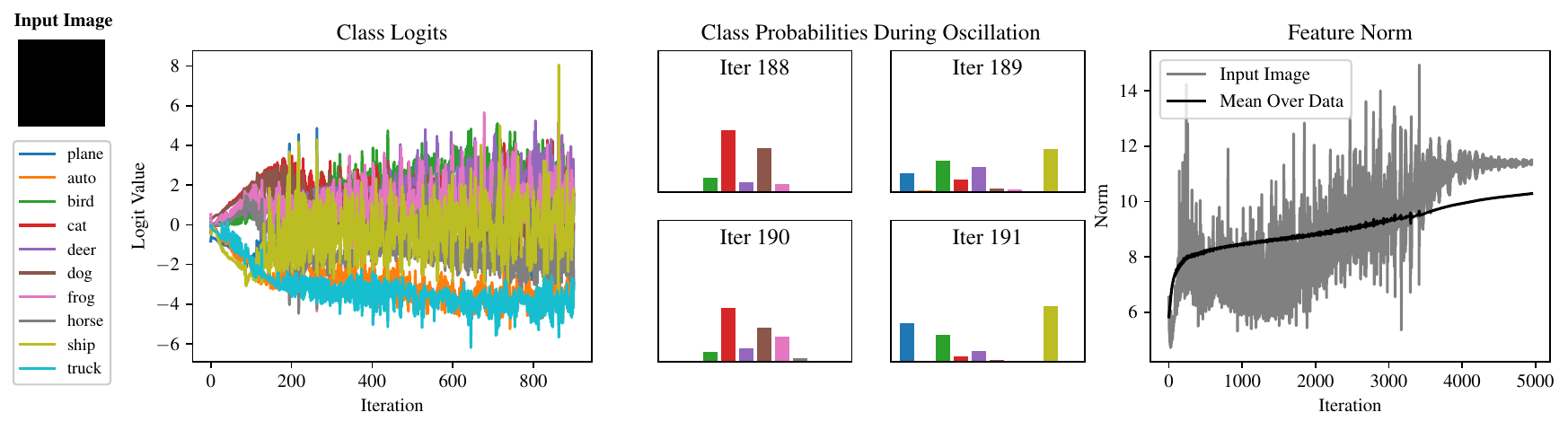}
    \caption{ResNet-18 on a black color block.}
\end{figure}

\begin{figure}[ht!]
    \centering
    \begin{subfigure}{.1\linewidth}
        \includegraphics[width = \linewidth]{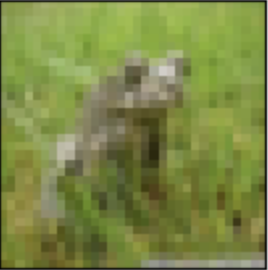}
    \end{subfigure}
    \begin{subfigure}{.89\linewidth}
        \includegraphics[width = \linewidth]{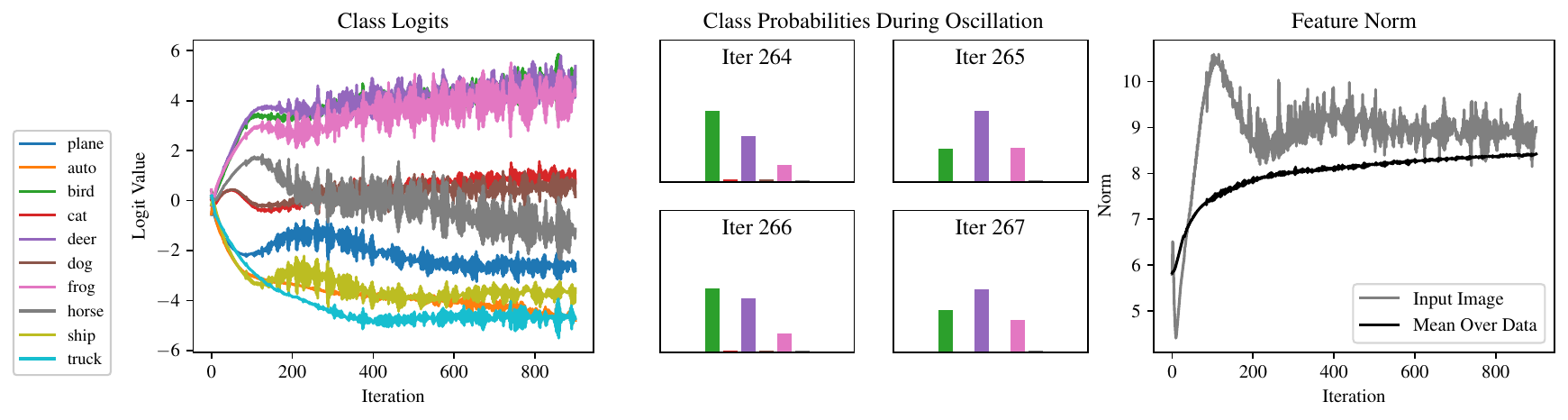}
    \end{subfigure}
    \caption{ResNet-18 on an image with mostly grass texture.}
\end{figure}

\clearpage
\subsection{VGG-11-BN Trained with GD}

For VGG-11, we found that the feature norm of the embedded images did not decay nearly as much over the course of training. We expect this has to do with the lack of a residual component. However, for the most part these features do still follow the pattern of  a rapid increase, followed by a marked decline.

\begin{figure}[h!]
    \centering
    \includegraphics[width=\linewidth]{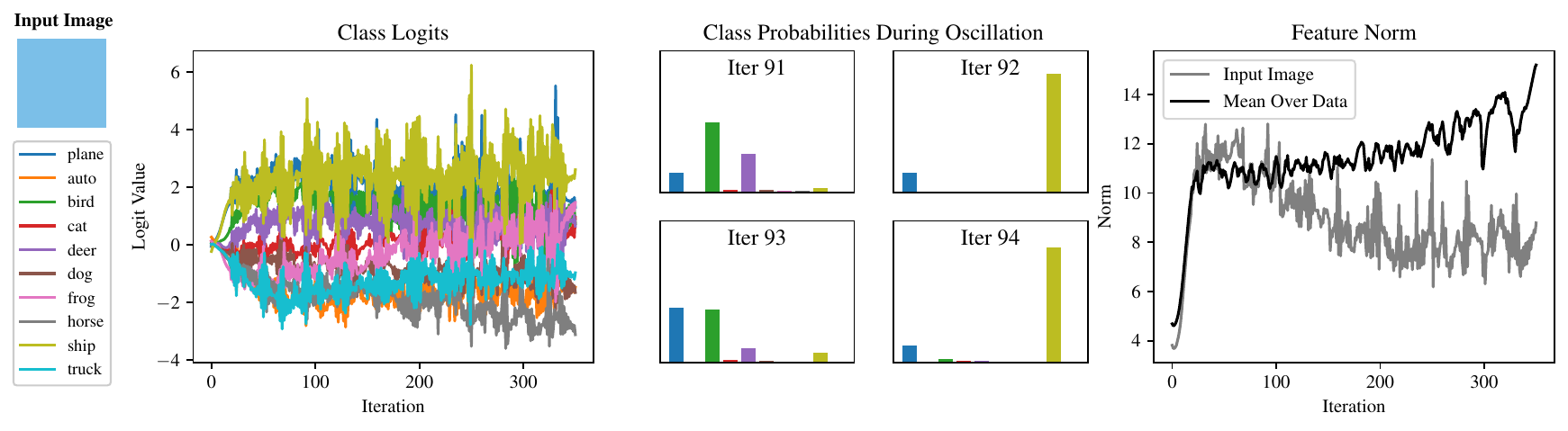}
    \caption{VGG-11-BN on a sky color block.}
\end{figure}

\begin{figure}[h!]
    \centering
    \includegraphics[width=\linewidth]{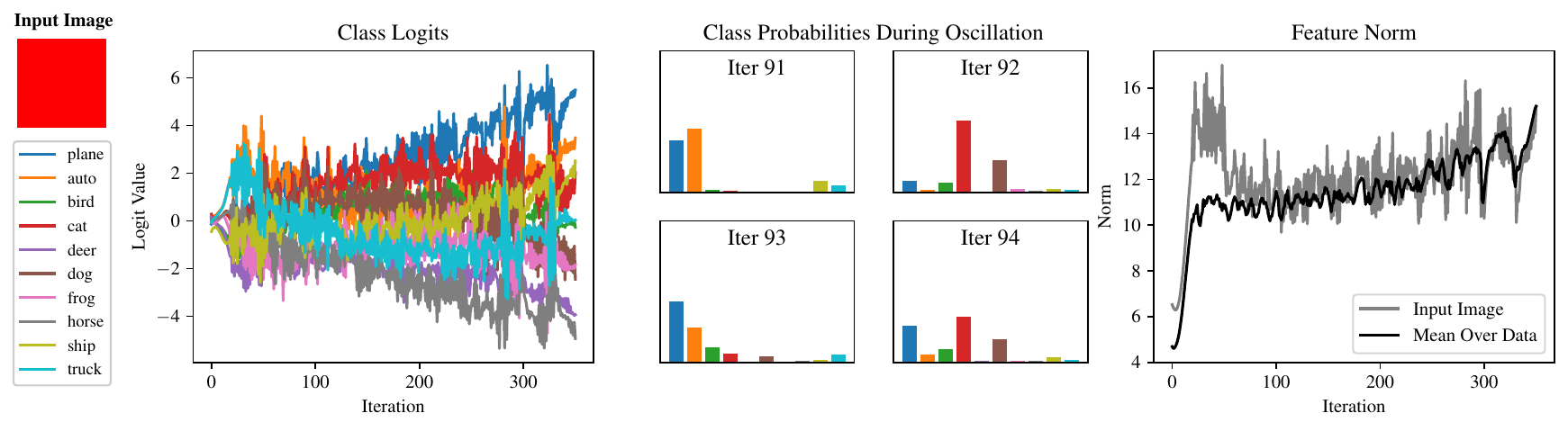}
    \caption{VGG-11-BN on a red color block.}
\end{figure}

\begin{figure}[b!]
    \centering
    \includegraphics[width=\linewidth]{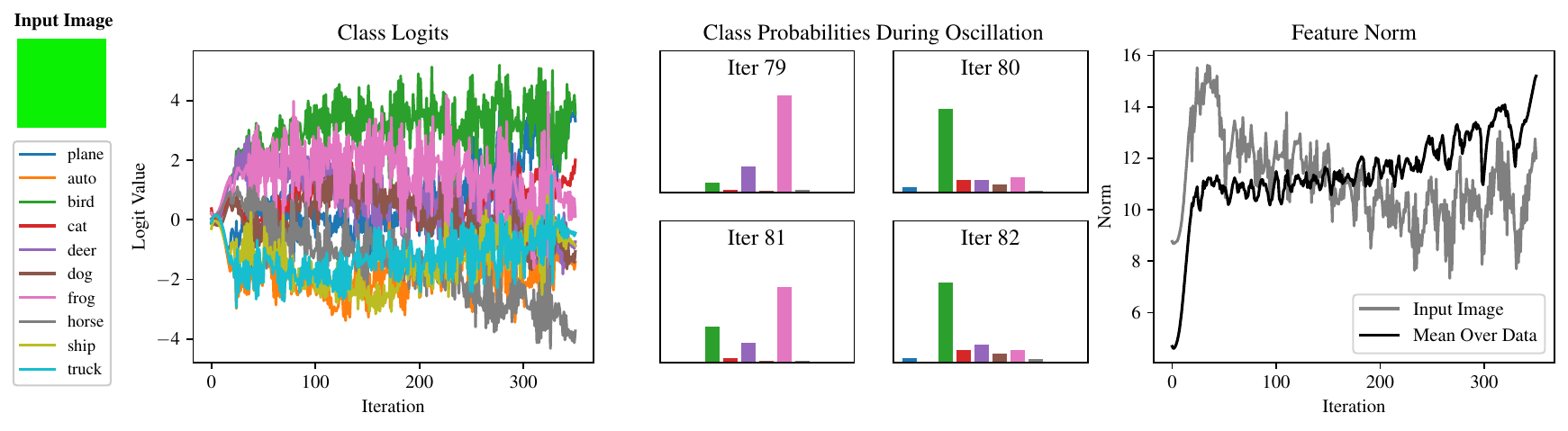}
    \caption{VGG-11-BN on a green color block. See above for two examples of relevant images in the dataset.}
\end{figure}

\begin{figure}[b!]
    \centering
    \begin{subfigure}{.1\linewidth}
        \includegraphics[width = \linewidth]{figures/png/Grass_Input.png}
    \end{subfigure}
    \begin{subfigure}{.89\linewidth}
        \includegraphics[width = \linewidth]{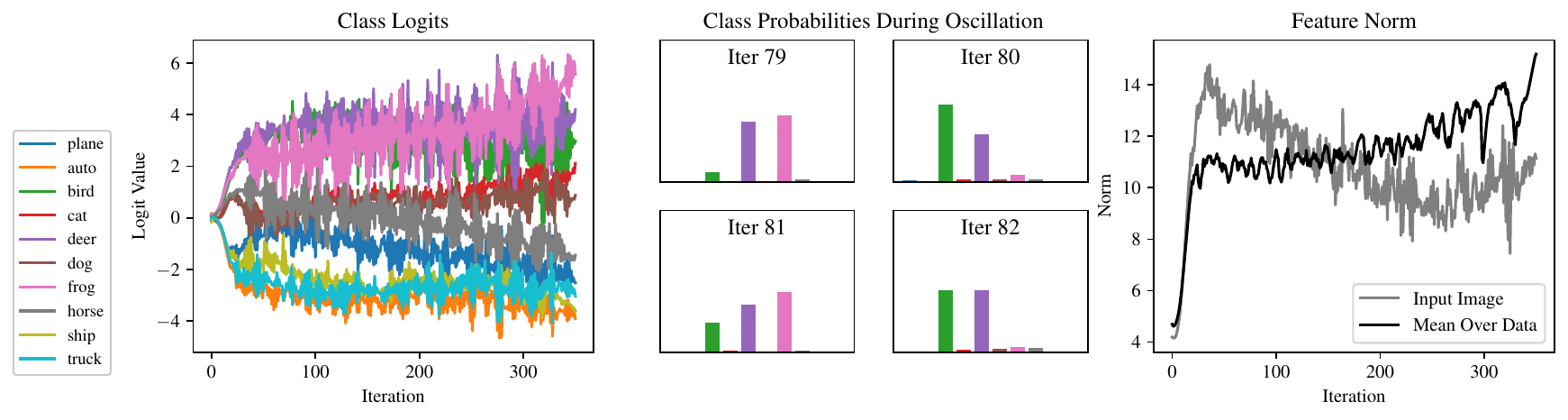}
    \end{subfigure}
    \caption{VGG-11-BN on an image with mostly grass texture.}
\end{figure}

\begin{figure}[h!]
    \centering
    \includegraphics[width=\linewidth]{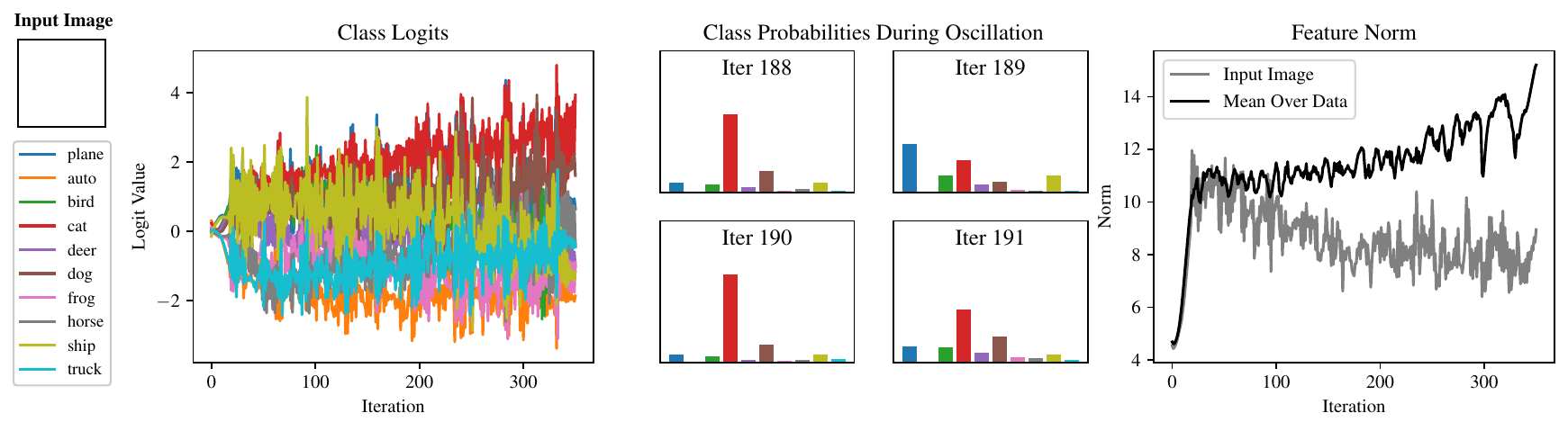}
    \caption{VGG-11-BN on a white color block.}
\end{figure}

\begin{figure}[h!]
    \centering
    \includegraphics[width=\linewidth]{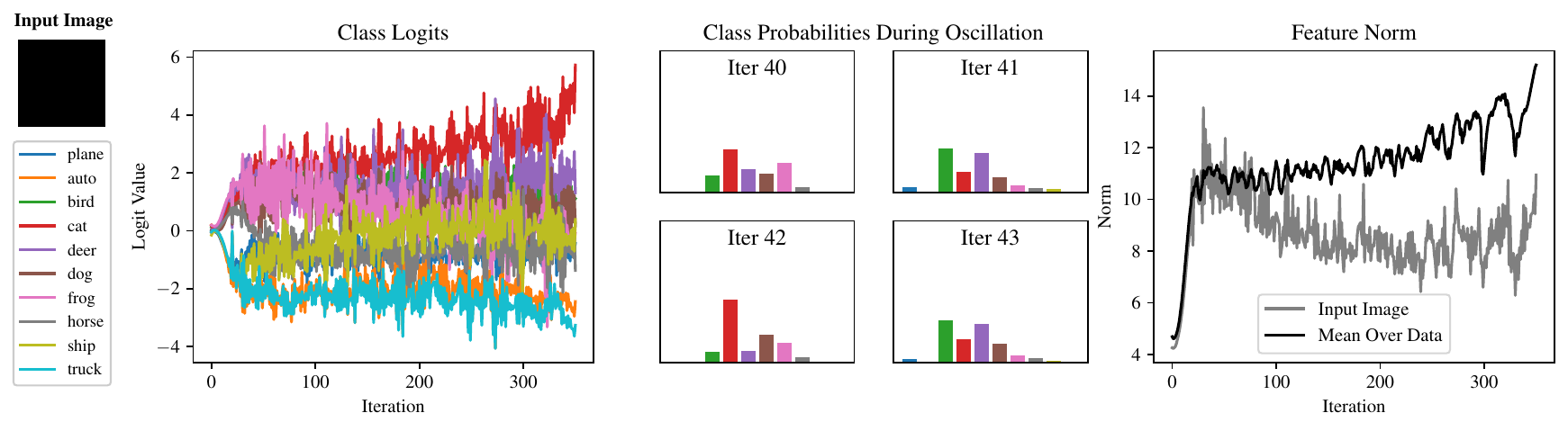}
    \caption{VGG-11-BN on a black color block.}
\end{figure}

\clearpage
\subsection{VGG-11-BN with Small Learning Rate to Approximate Gradient Flow}
\label{app:logit-track-small-lr}

Here we see that oscillation is a valuable regularizer, preventing the network from continuously upweighting opposing signals. As described in the main body, stepping too far in one direction causes an imbalanced gradient between the two opposing signals. Since the group which now has a larger loss is also the one which suffers from the use of the feature, the network is encouraged to downweight its influence. If we use a very small learning rate to approximate gradient flow, this regularization does not occur and the feature norms grow continuously. This leads to over-reliance on these features, suggesting that failing to downweight opposing signals is a likely cause of the poor generalization of networks trained with gradient flow.

The following plots depict a VGG-11-BN trained with learning rate $.0005$ to closely approximate gradient flow. We compare this to the feature norms of the same network trained with gradient descent with learning rate $0.1$, which closely matches gradient flow until it becomes unstable.

\begin{figure}[h!]
    \centering
    \includegraphics[width=\linewidth]{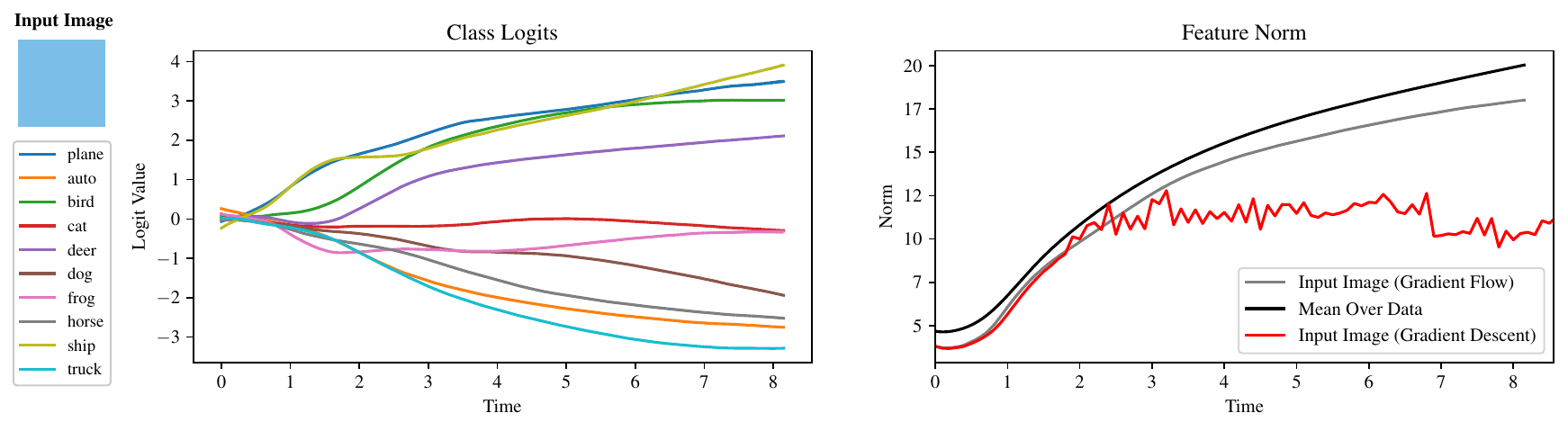}
    \caption{VGG-11-BN on a sky color block with learning rate 0.005 (approximating gradient flow) compared to 0.1.}
\end{figure}

\begin{figure}[h!]
    \centering
    \includegraphics[width=\linewidth]{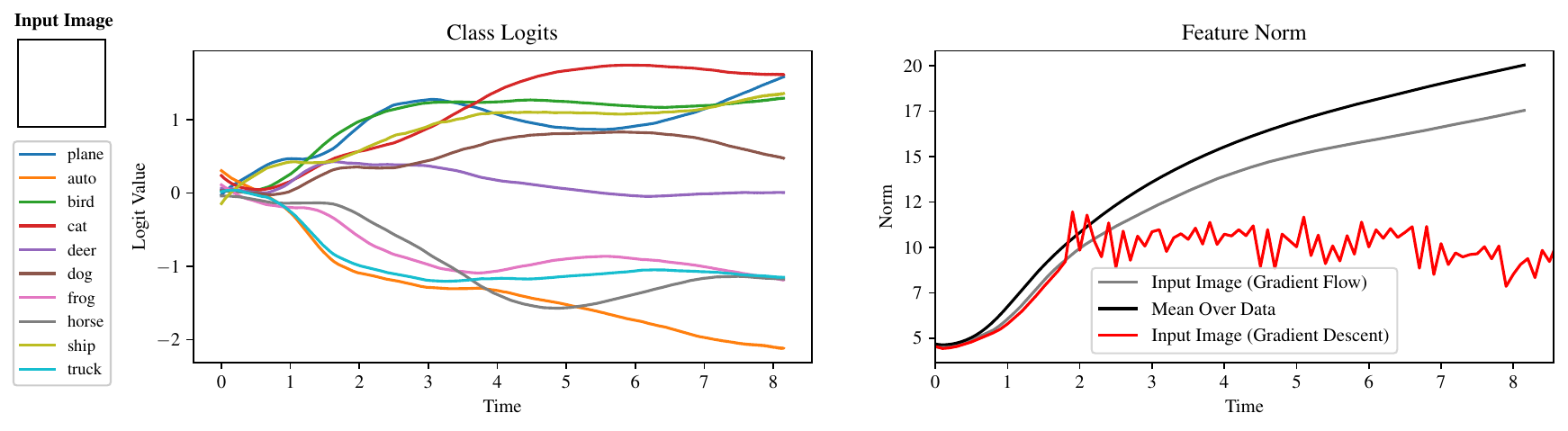}
    \caption{VGG-11-BN on a white color block with learning rate 0.005 (approximating gradient flow) compared to 0.1.}
\end{figure}

\begin{figure}[h!]
    \centering
    \includegraphics[width=\linewidth]{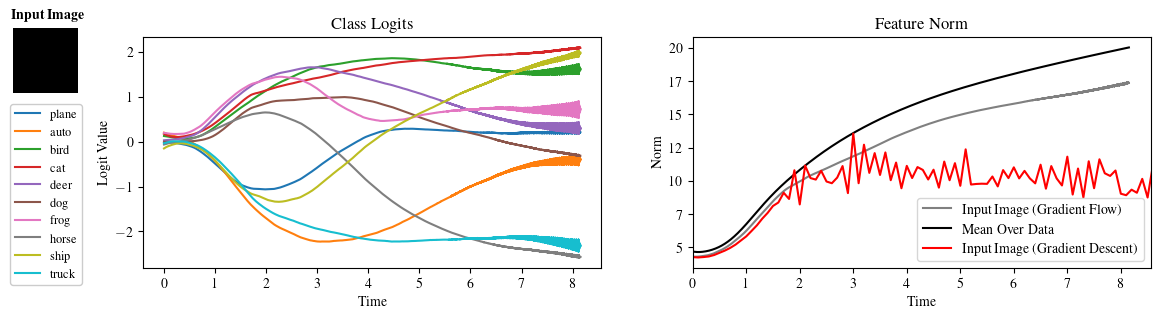}
    \caption{VGG-11-BN on a black color block with learning rate 0.005 (approximating gradient flow) compared to 0.1.}
\end{figure}

\begin{figure}[h!]
    \centering
    \includegraphics[width=\linewidth]{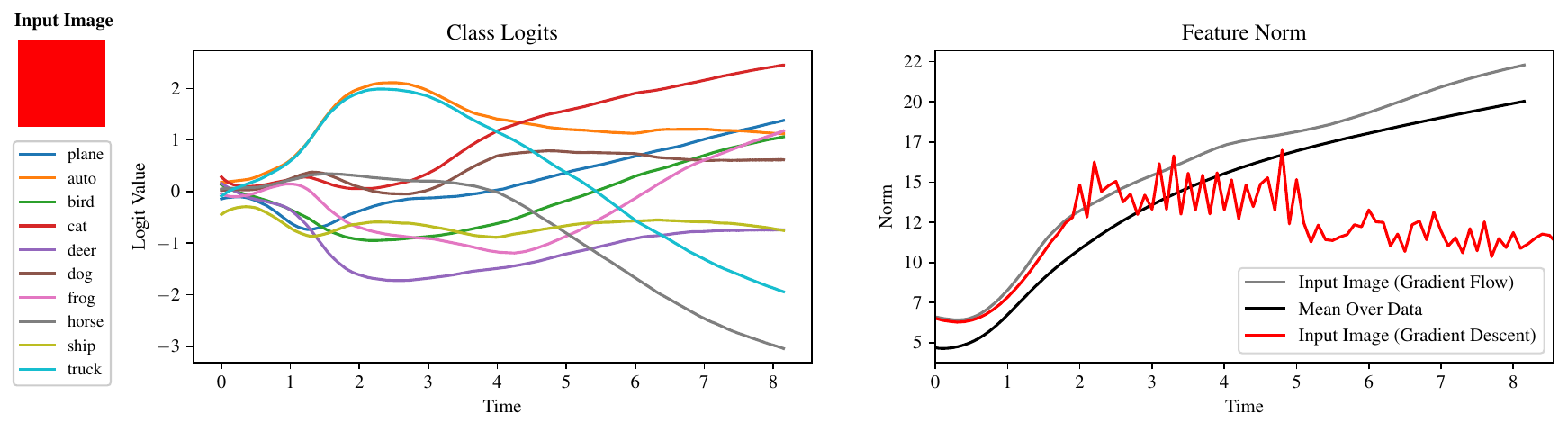}
    \caption{VGG-11-BN on a red color block with learning rate 0.005 (approximating gradient flow) compared to 0.1.}
\end{figure}

\begin{figure}[h!]
    \centering
    \includegraphics[width=\linewidth]{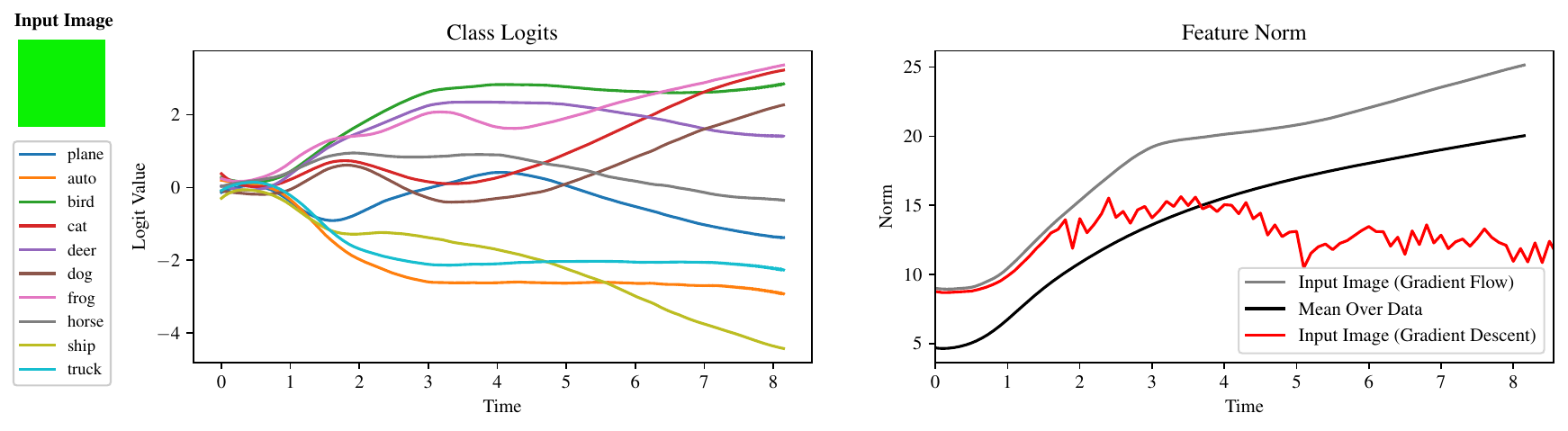}
    \caption{VGG-11-BN on a green color block with learning rate 0.0005 (approximating gradient flow) compared to 0.1.}
\end{figure}

\begin{figure}[ht!]
    \centering
    \begin{subfigure}{.1\linewidth}
        \includegraphics[width = \linewidth]{figures/png/Grass_Input.png}
    \end{subfigure}
    \begin{subfigure}{.89\linewidth}
        \includegraphics[width = \linewidth]{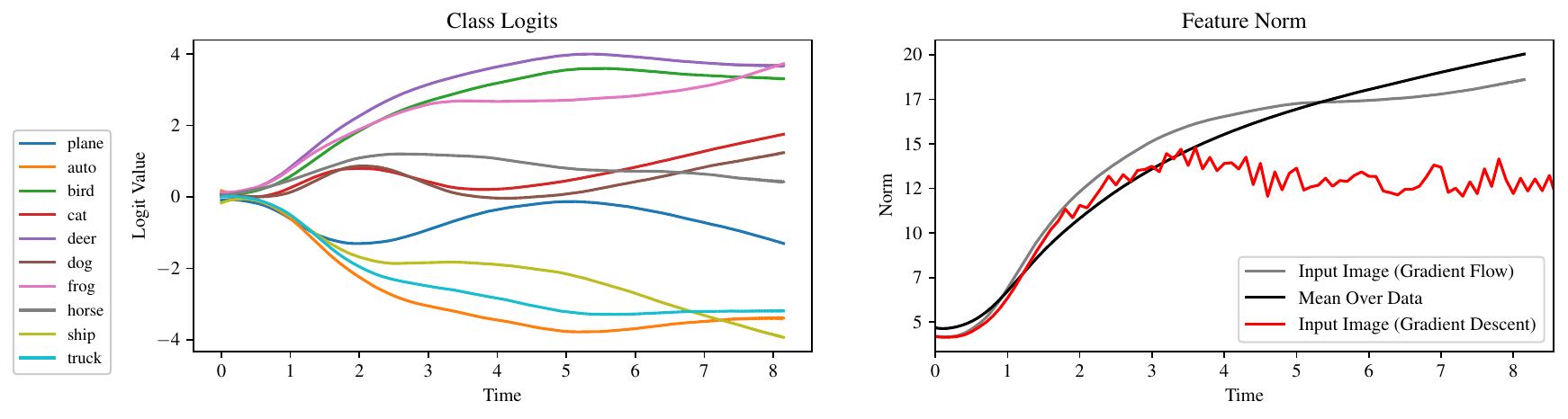}
    \end{subfigure}
    \caption{VGG-11-BN on an image with mostly grass texture with learning rate 0.0005 (approximating gradient flow) compared to 0.1.}
\end{figure}

\clearpage

\subsection{ResNet-18 Trained with Full-Batch Adam}
\label{app:logit-track-adam}

Finally, we plot the same figures for a ResNet-18 trained with full-batch Adam. We see that Adam consistently and quickly reduces the norm of these features, especially for more complex features such as texture, and that it also quickly reaches a point where oscillation ends. Note when comparing to plots above that the maximum iteration on the x-axis differs.

\begin{figure}[ht!]
    \centering
    \includegraphics[width=\linewidth]{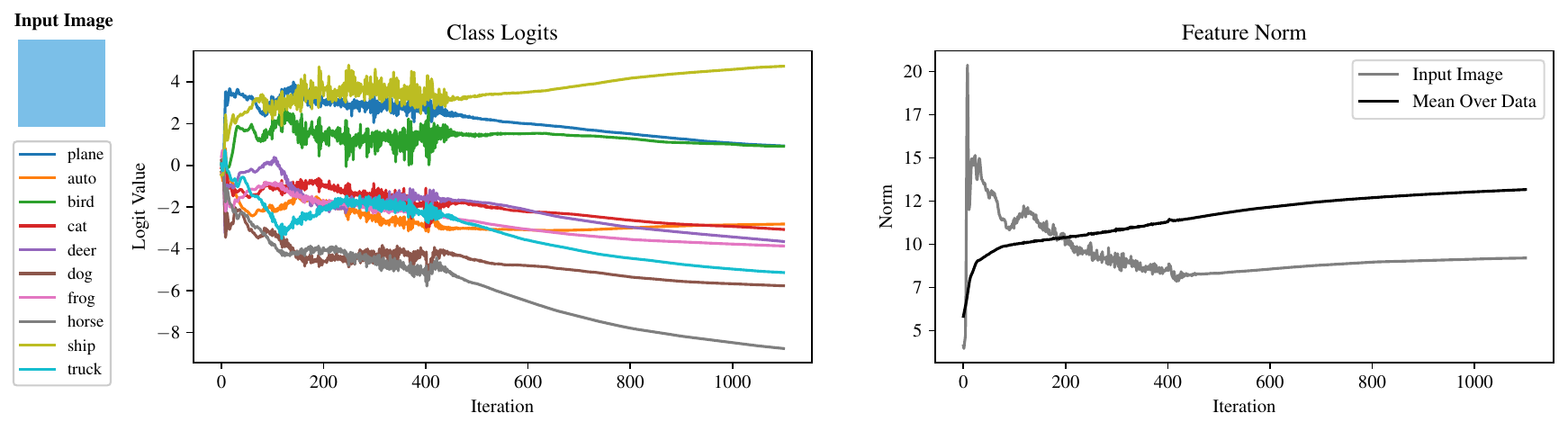}
    \caption{ResNet-18 on a sky color block trained with Adam.}
\end{figure}

\begin{figure}[ht!]
    \centering
    \includegraphics[width=\linewidth]{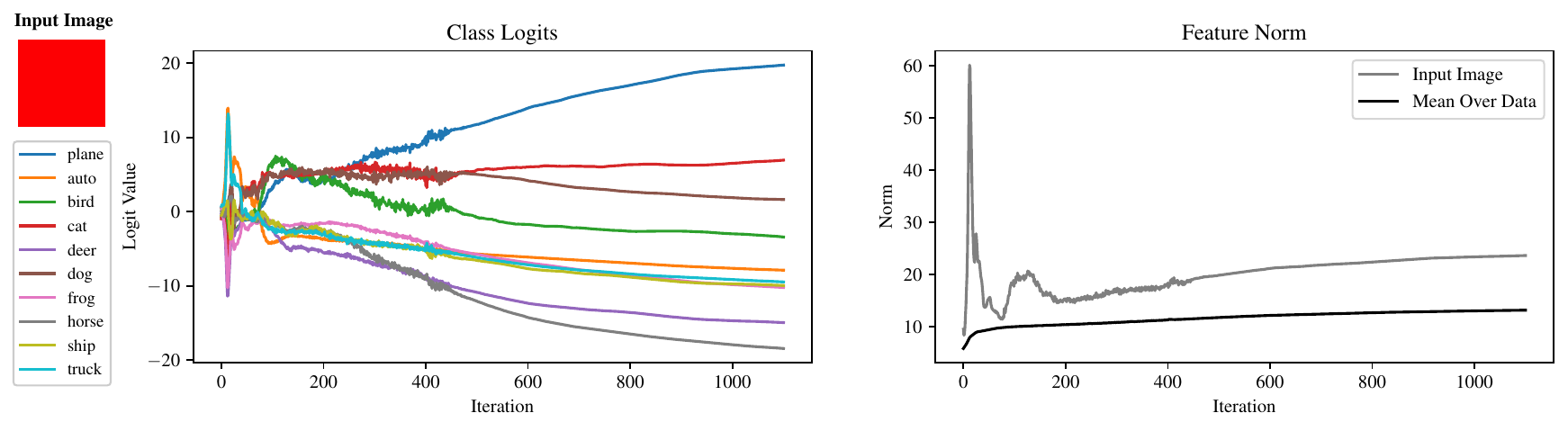}
    \caption{ResNet-18 on a red color block trained with Adam.}
\end{figure}

\begin{figure}[ht!]
    \centering
    \includegraphics[width=\linewidth]{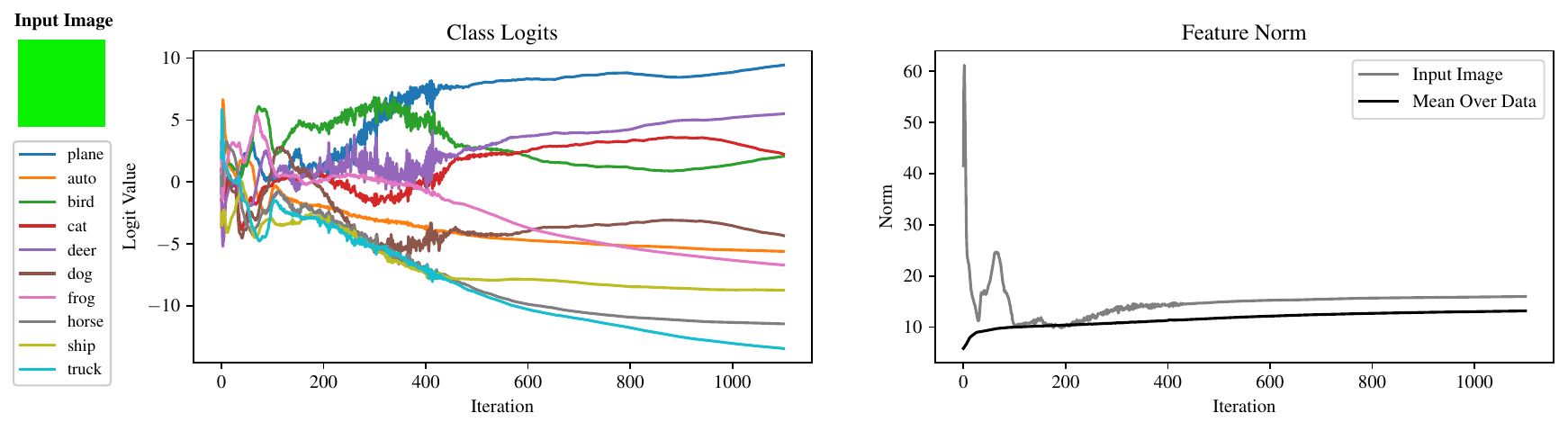}
    \caption{ResNet-18 on a green color block trained with Adam.}
\end{figure}

\begin{figure}[ht!]
    \centering
    \begin{subfigure}{.1\linewidth}
        \includegraphics[width = \linewidth]{figures/png/Grass_Input.png}
    \end{subfigure}
    \begin{subfigure}{.89\linewidth}
        \includegraphics[width = \linewidth]{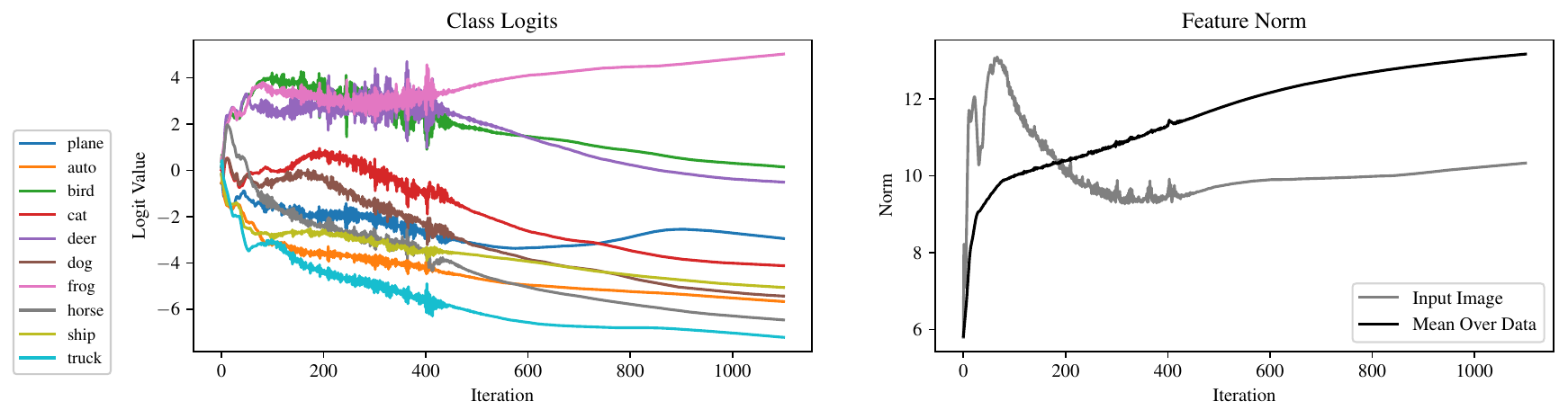}
    \end{subfigure}
    \caption{ResNet-18 on an image with mostly grass texture trained with Adam.}
\end{figure}

\clearpage
\section{Tracking the Amount of Curvature in each Parameter Layer}
\label{app:sharpness-location}

Here we plot the ``fraction of curvature'' of different architectures at various training steps. Recall the fraction of curvature is defined with respect to the top eigenvector of the loss Hessian. We partition this vector by network layer and evaluate each sub-vector's squared norm. This represents that layer's contribution to the overall curvature. To keep the plots readable, we omit layers whose fraction is never greater than $0.01$ at any training step (including the intermediate ones not plotted), though we always include the last linear layer. The total number of layers is 45, 38, 106, and 39 for the ResNet, VGG-11, ViT, and NanoGPT respectively.

\begin{figure}[ht!]
    \centering
    \includegraphics[width=\linewidth]{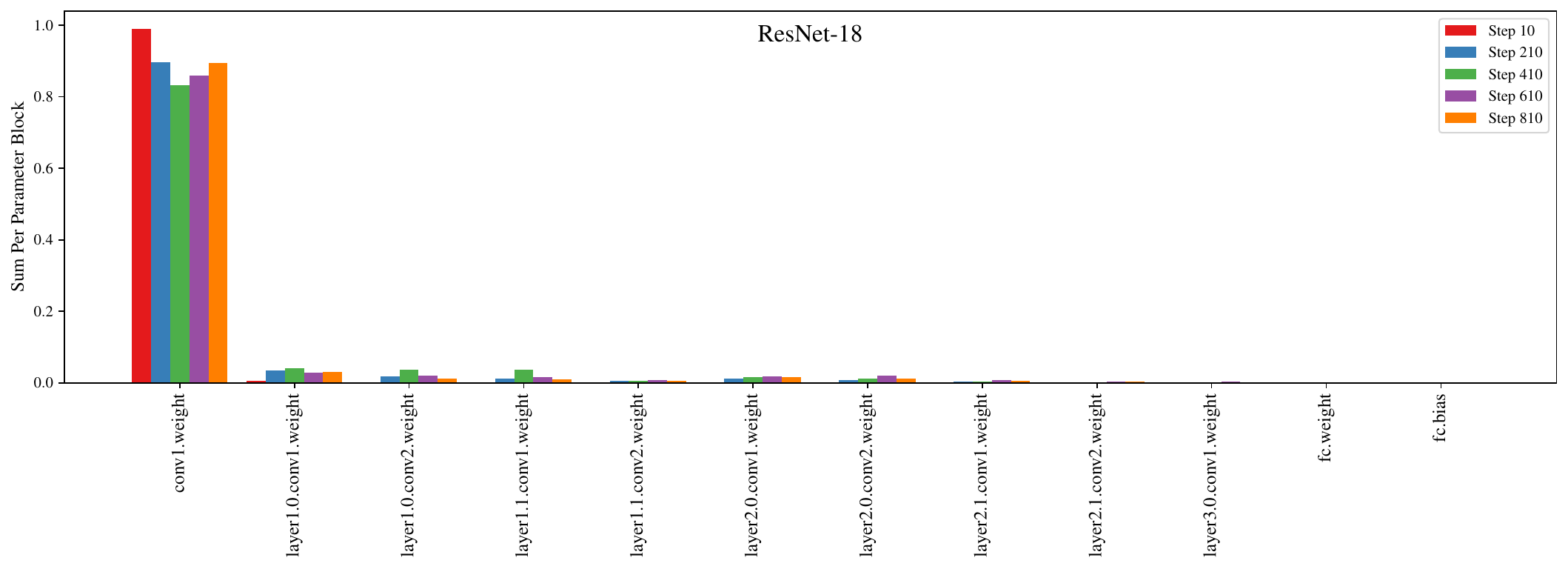}
    \caption{Sum of squared entries of the top eigenvector of the loss Hessian which lie in each parameter layer of a ResNet-18 throughout training.}
\end{figure}

\begin{figure}[ht!]
    \centering
    \includegraphics[width=\linewidth]{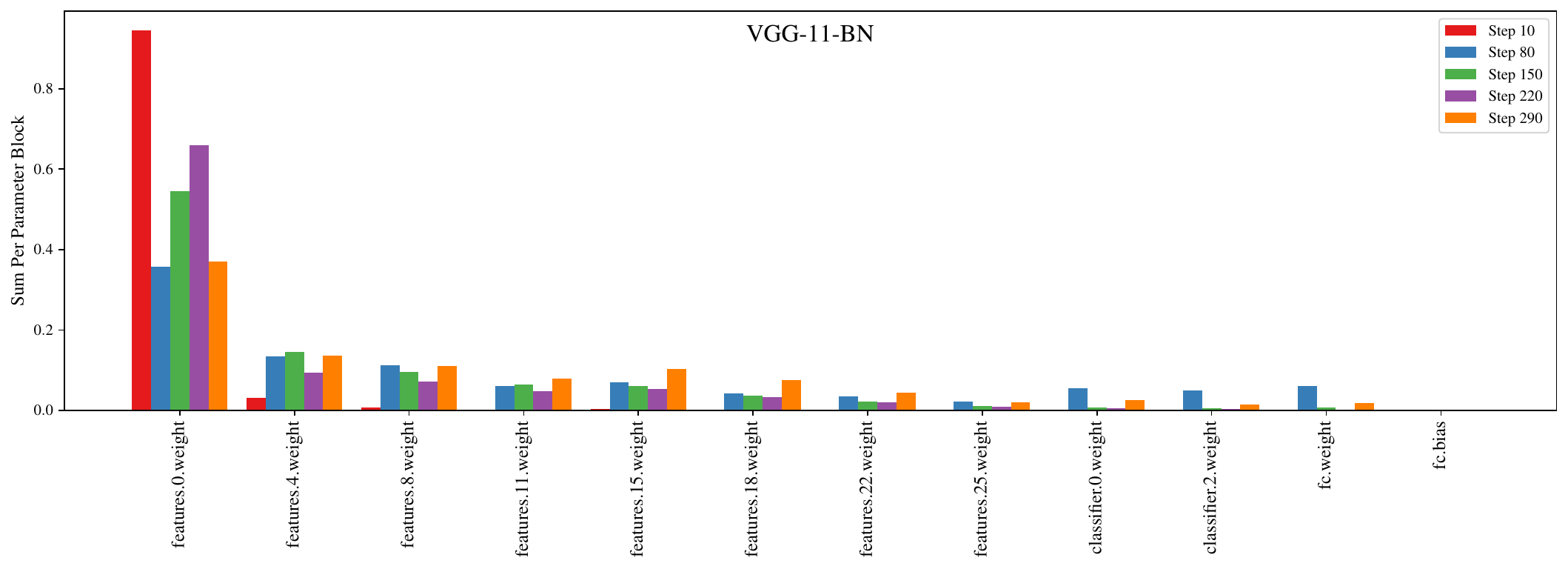}
    \caption{Sum of squared entries of the top eigenvector of the loss Hessian which lie in each parameter layer of a VGG-11-BN throughout training.}
\end{figure}

\begin{figure}[ht!]
    \centering
    \includegraphics[width=\linewidth]{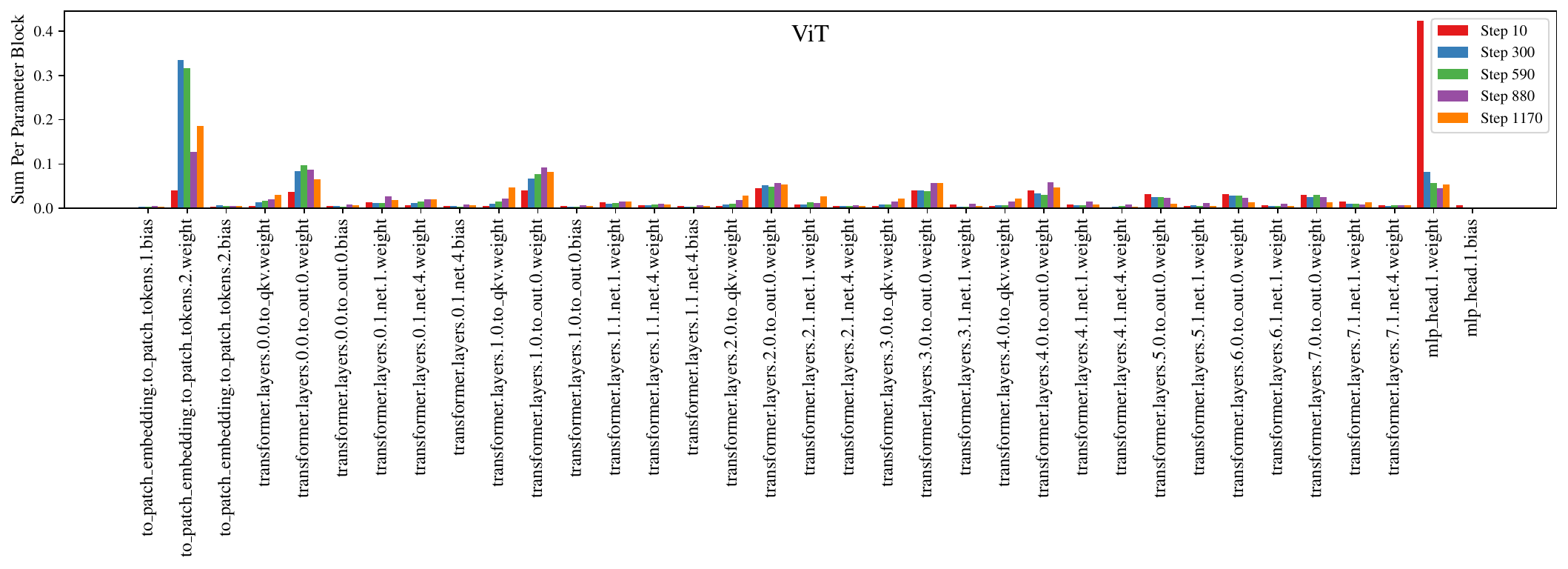}
    \caption{Sum of squared entries of the top eigenvector of the loss Hessian which lie in each parameter layer of a ViT throughout training.}
\end{figure}

\begin{figure}[ht!]
    \centering
    \includegraphics[width=\linewidth]{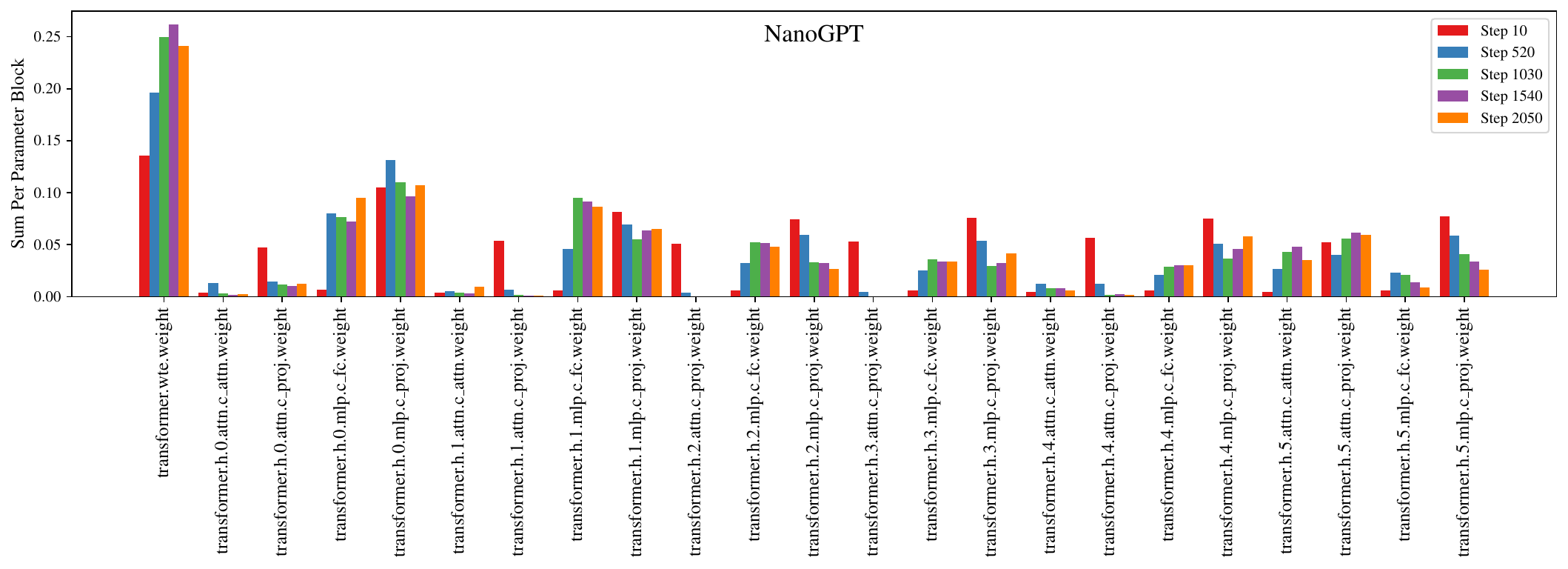}
    \caption{NanoGPT (6 layers, 6 head per layer, embedding dimension 384) trained on the default shakespeare character dataset in the NanoGPT repository. Due to difficulty calculating the true top eigenvector, we approximate it with the exponential moving average of the squared gradient.}
\end{figure}

\clearpage
\section{Additional Figures}
\label{sec:addl-figs}

\begin{figure}[ht]
    \centering
    \includegraphics[width=.5\linewidth]{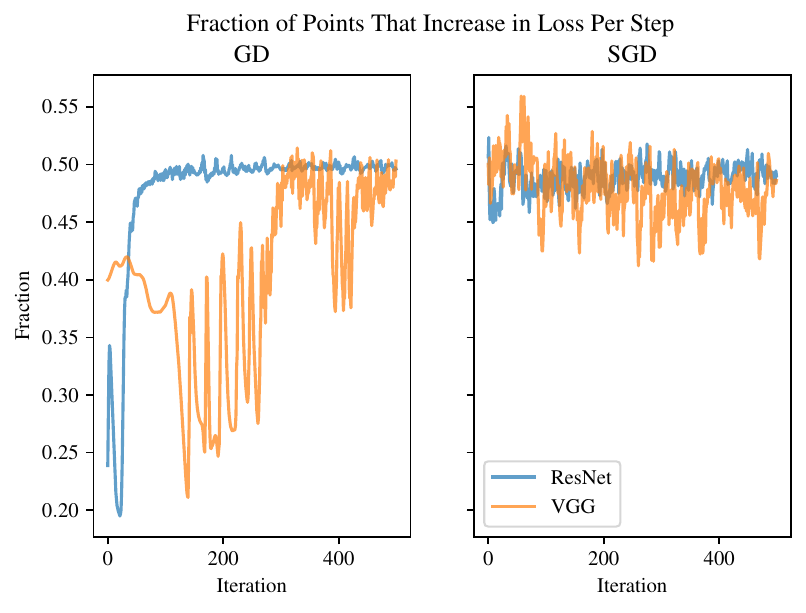}
    \caption{The fraction of overall training points which increase in loss on any given step. For both SGD and GD, it hovers around 0.5 (VGG without batchnorm takes a long time to reach the edge of stability). Though the outliers have much higher amplitude in their loss change, many more images contain \emph{some} small component of the features they exemplify (or are otherwise slightly affected by the weight oscillations), and so these points also oscillate in loss at a smaller scale.}
    \label{fig:frac-loss-increase}
\end{figure}

\begin{figure}[ht]
    \centering
    \includegraphics[width=.7\linewidth]{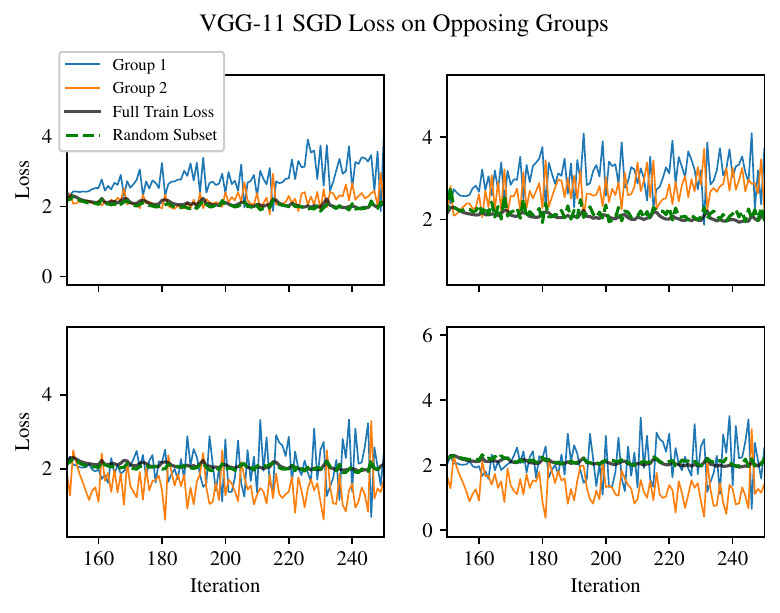}
    \caption{We reproduce \cref{fig:vgg-sgd} without batch normalization.}
    \label{fig:vggbn-sgd-groups}
\end{figure}

\begin{figure}[ht!]
    \centering
    \begin{subfigure}{.45\linewidth}
        \includegraphics[width = \linewidth]{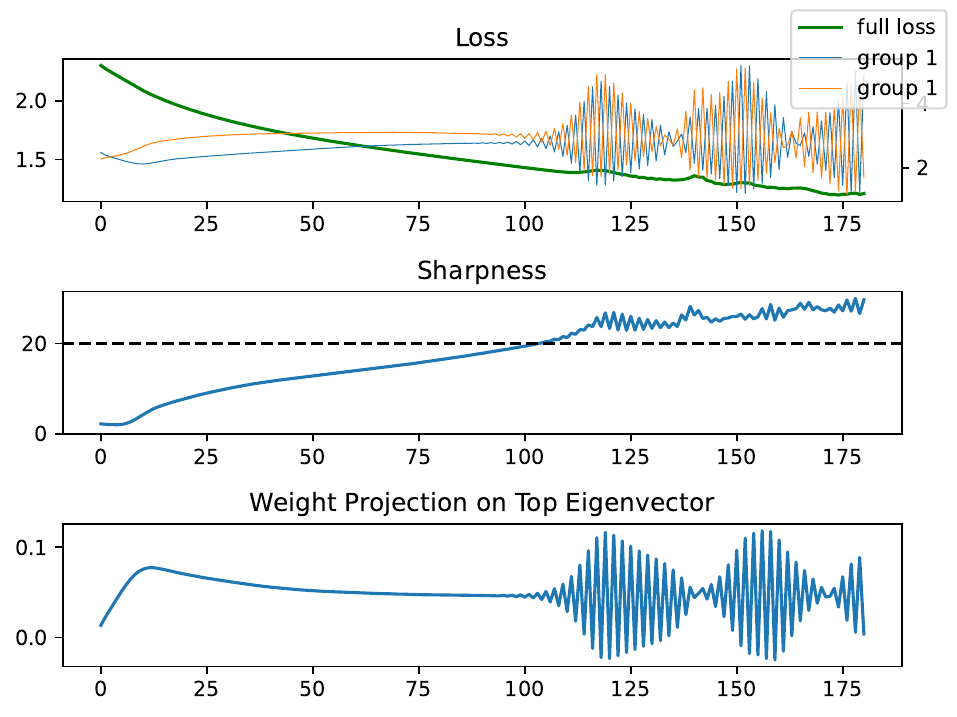}
        \caption{A 3-layer ReLU MLP trained on a 5k-subset of CIFAR-10.}
    \end{subfigure}
    \begin{subfigure}{.45\linewidth}
        \includegraphics[width = \linewidth]{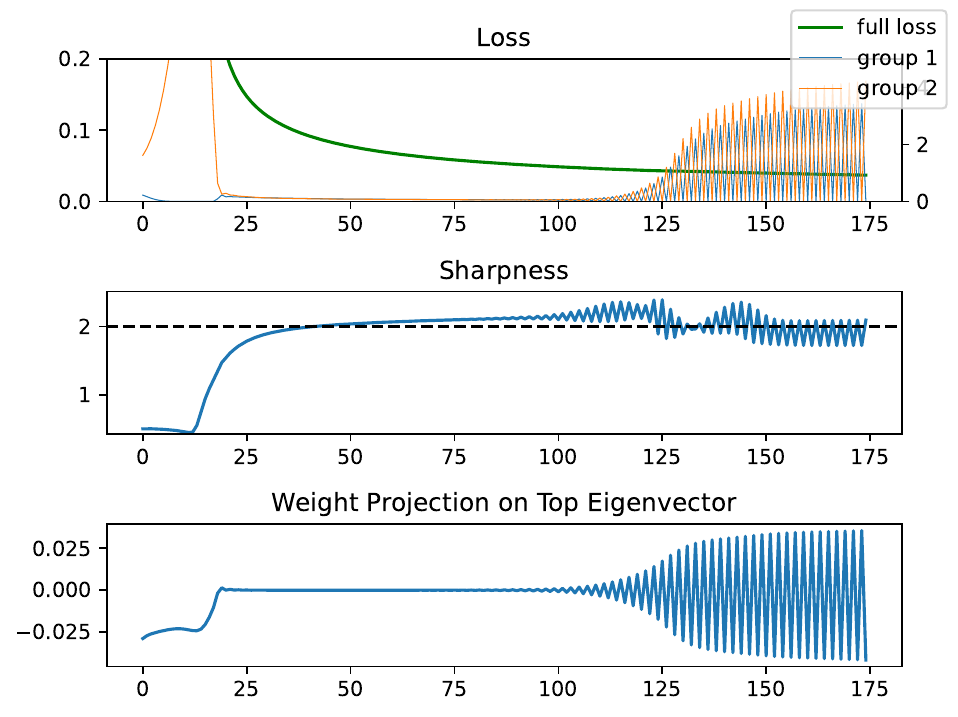}
        \caption{Our model: a 2-layer linear network trained on mostly Gaussian data with opposing signals.}
    \end{subfigure}
    \caption{We compare a small ReLU MLP on a subset of CIFAR-10 to our simple model of linear regression with a two-layer network.}
    \label{fig:verify-model}
\end{figure}

\clearpage
\section{Comparing our Variant of SGD to Adam}
\label{app:adam-vs-sgd}

\begin{algorithm}
\caption{SplitSGD}
\label{alg:splitsgd}
\begin{algorithmic}
\State \textbf{input:} Initial parameters $\theta_0$, SGD step size $\eta_1$, SignSGD step size $\eta_2$, momentum $\beta$, dampening $\tau$, threshold $r$.
\State \textbf{initialize:} $m_0 = \mathbf{0}$.
\For{$t \gets 1,\ldots,T$}
\State $g_t \gets \nabla_\theta L_t(\theta_{t-1})$ \Comment{Get stochastic gradient}
\State $m_t \gets \beta m_{t-1} + (1 - \tau) g_t$ \Comment{Update momentum with dampening}
\State $\hat m_t \gets m_t / (1 - \tau^t)$ \Comment{Debias}
\State $v_{\textrm{mask}} \gets \mathbf{1}\{|\hat m_t| \leq r\}$ \Comment{Split parameters by threshold}
\State $\theta_t \gets \theta_{t-1} - \eta_1 (\hat m_t \odot v_{\textrm{mask}}) - \eta_2  (\textrm{sign}(\hat m_t) \odot (1-v_{\textrm{mask}})) $ \Comment{unmasked SGD, masked SignSGD}
\EndFor
\end{algorithmic}
\end{algorithm}

As described in the main text, we find that simply including dampening and taking a fixed step size on gradients above a certain threshold results in performance matching that of Adam for the experiments we tried. We found that setting this threshold equal to the $q=.1$ quantile of the first gradient worked quite well---this was about \texttt{1e-4} for the ResNet-56/110 and \texttt{1e-6} for GPT-2.

Simply to have something to label it with, we name the method SplitSGD, because it performs SGD and SignSGD on different partitions of the parameters. The precise method is given above in \cref{alg:splitsgd}. We reiterate that we are not trying to suggest a new method---our goal is only to demonstrate the insight gained from knowledge of opposing signals' influence on NN optimization. For all plots, $\beta$ represents the momentum parameter and $\tau$ is dampening. Adam has a single parameter $\beta_1$ which represents both simultaneously, which we fix at 0.9, and we do the same for SplitSGD by setting $\beta = \tau = 0.9$. As in \cref{alg:splitsgd}, we let $\eta_1$ refer to the learning rate for standard SGD on the parameters with gradient below the magnitude threshold, and $\eta_2$ refers to the learning rate for the remainder which are optimized with SignSGD.

\subsection{SplitSGD on ResNet}
\begin{figure}[h!]
    \centering
    \includegraphics[width=\linewidth]{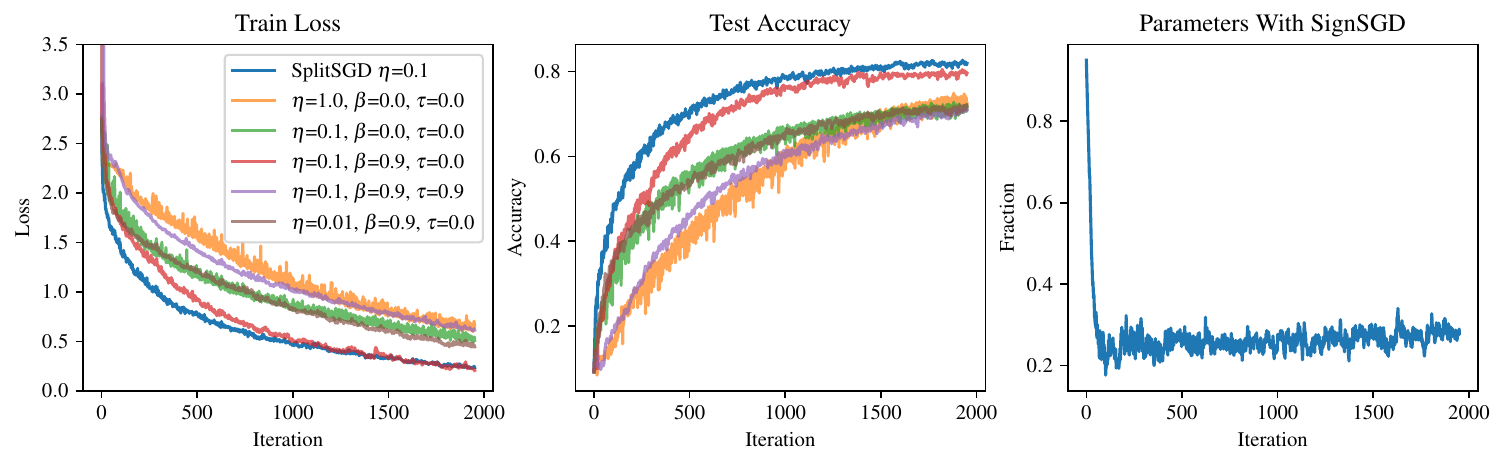}
    \caption{Standard SGD with varying learning rates and momentum/dampening parameters on a ResNet-56 on CIFAR-10, with one run of SplitSGD for comparison. Omitted SGD hyperparameter combinations performed much worse. Notice that SGD is extremely sensitive to hyperparameters. Rightmost plot is the fraction of parameters with fixed step size by SplitSGD.}
    \label{fig:sgdvssplit}
\end{figure}
We begin with a comparison on ResNets trained on CIFAR-10. \cref{fig:sgdvssplit} compares SplitSGD to standard versions of SGD with varying momentum and dampening on a ResNet-56. As expected, SGD is extremely sensitive to hyperparameters, particularly the learning rate, and even the best choice in a grid search underperforms SplitSGD. Furthermore, the rightmost plot depicts the fraction of parameters for which SplitSGD takes a fixed-size signed step. This means that after the first few training steps, 70-80\% of the parameters are being optimized simply with standard SGD (with $\beta = \tau = 0.9$).

\begin{figure}[h!]
    \centering
    \includegraphics[width=\linewidth]{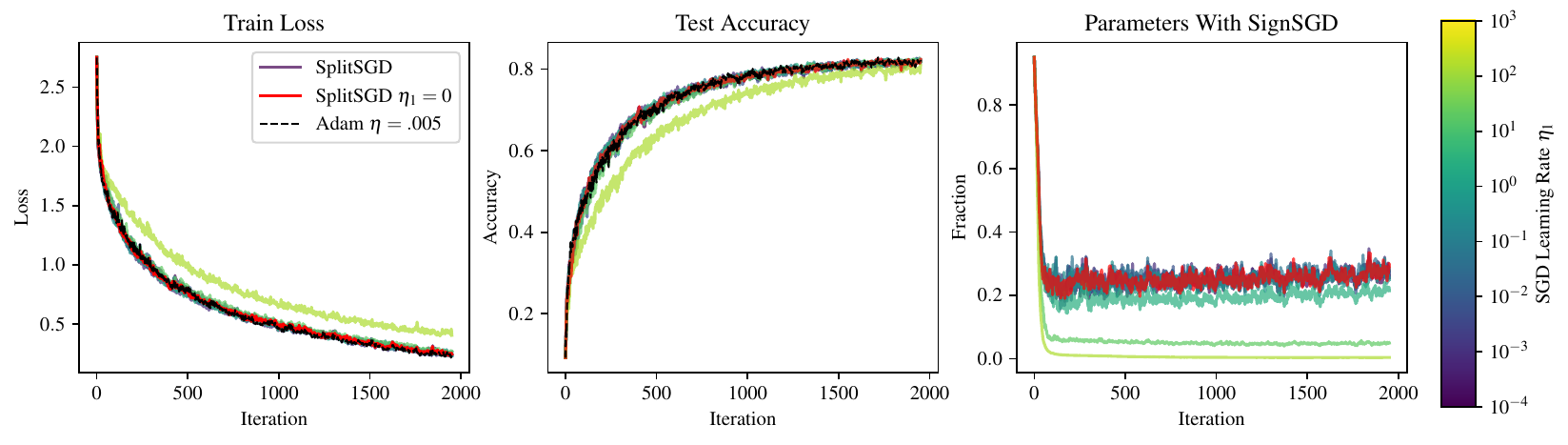}
    \caption{SplitSGD with varying SGD learning rates $\eta_1$ versus Adam on a ResNet-56 on CIFAR-10. The SignSGD learning rate is fixed at $\eta_2 = .001$; Adam uses $\eta = .005$, which was found to be the best performing choice via oracle selection grid search. The rightmost plot is the fraction of parameters with fixed step size by SplitSGD---that is, 1 minus this value is the fraction of parameters taking a regular gradient step with step size as given in the legend. This learning rate ranges over several orders of magnitude, is used for \textasciitilde 70-80\% of parameters, and can even be set to 0, with no discernible difference in performance.}
    \label{fig:splitvsadam_varysgdlr}
\end{figure}
Next, \cref{fig:splitvsadam_varysgdlr} plots SplitSGD with varying $\eta_1$ and $\eta_2$ fixed at .001. This is compared to Adam with learning rate .005, which was chosen via oracle grid search. Even though the SGD learning rate $\eta_1$ ranges over \emph{seven orders of magnitude} and is used for \textasciitilde 70-80\% of parameters, we see  no real difference in the train loss or test accuracy of SplitSGD. In fact, we find that we can even eliminate it completely! This suggests that for most of parameters and most of training, it is only a small fraction of parameters in the entire network which are influencing the overall performance. We posit a deeper connection here to the ``hidden'' progress described in grokking \citep{barak2022hidden, nanda2023progress}---if the correct subnetwork and its influence on the output grows slowly during training, that behavior will not be noticeable until the dominating signals are first downweighted.

\cref{fig:adamvssplit1,fig:adamvssplit2} depict the train loss and test accuracy of Adam and SplitSGD for varying learning rates (the standard SGD learning rate $\eta_1$ is fixed at 0.1). We see that SplitSGD is at least as robust as Adam to learning rate choice, if not more. The results also suggest that SplitSGD benefits from a slightly smaller learning rate than Adam, which we attribute to the fact that it will \emph{always} take step sizes of that fixed size, whereas the learning rate for Adam represents an upper bound on the step size for each parameter.

\begin{figure}[h!]
    \centering
    \includegraphics[width=\linewidth]{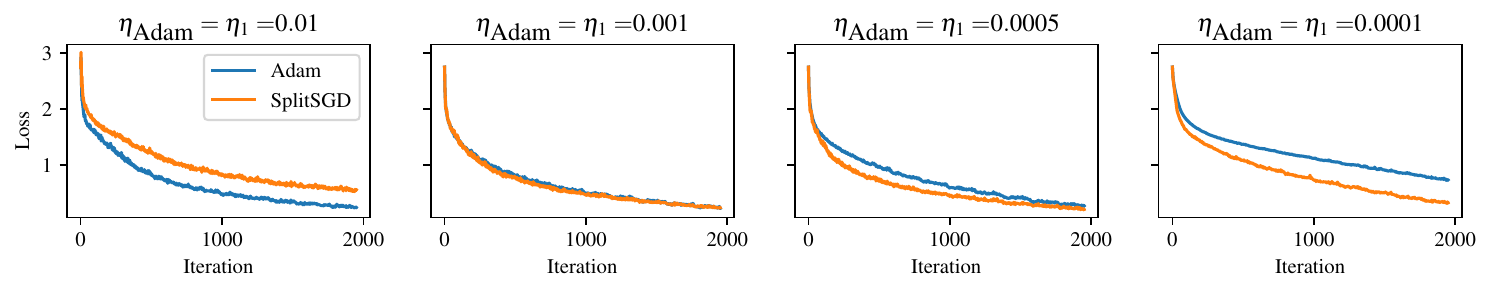}
    \caption{Train loss of Adam and SplitSGD for varying learning rates. The regular SGD step size for SplitSGD is fixed at 0.1. SplitSGD seems at least as robust to choice of learning rate as Adam, and it appears to benefit from a slightly smaller learning rate because it cannot adjust per-parameter.}
    \label{fig:adamvssplit1}
\end{figure}
\begin{figure}[h!]
    \centering
    \includegraphics[width=\linewidth]{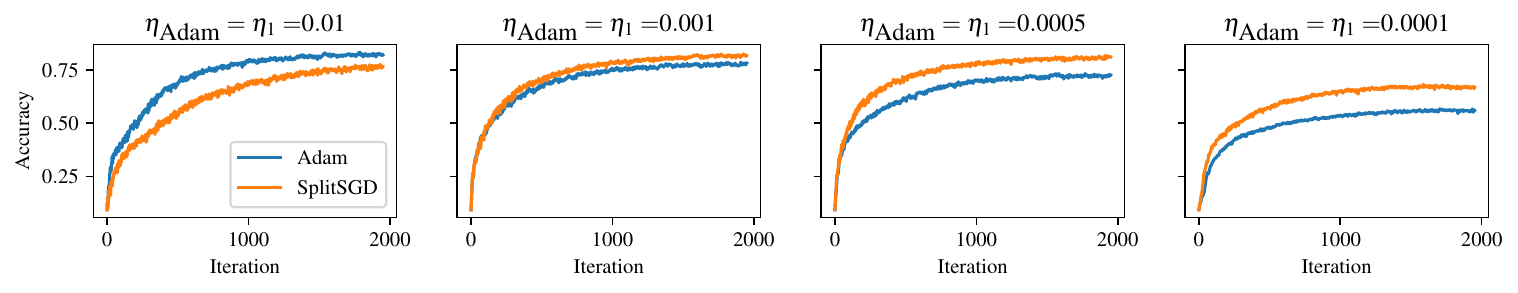}
    \caption{Test accuracy of Adam and SplitSGD for varying learning rates. The regular SGD step size for SplitSGD is fixed at 0.1. SplitSGD seems at least as robust to choice of learning rate as Adam, and it appears to benefit from a slightly smaller learning rate because it cannot adjust per-parameter.}
    \label{fig:adamvssplit2}
\end{figure}

We repeat these experiments with a ResNet-110, with similar findings. \cref{fig:resnet110-adamvssgd} compares the train loss and test accuracy of SGD with $\beta=0.9, \tau=0$ to Adam, and again the sensitivity of this optimizer to learning rate is clear. \cref{fig:resnet110-ablatedampen} compares Adam to SplitSGD (both with fixed-step learning rate .0003) but ablates the use of dampening: we find that the fixed-size signed steps appear to be more important for early in training, while dampening is helpful for maintaining performance later. It is not immediately clear what causes this bifurcation, nor if it will necessarily transfer to attention models.

Finally, \cref{fig:resnet110-adamvssplit} compares Adam to the full version of SplitSGD; we see essentially the same performance, and furthermore SplitSGD maintains its robustness to the choice of standard SGD learning rate.

\begin{figure}[b!]
    \centering
    \begin{subfigure}{.48\linewidth}
        \includegraphics[width = \linewidth]{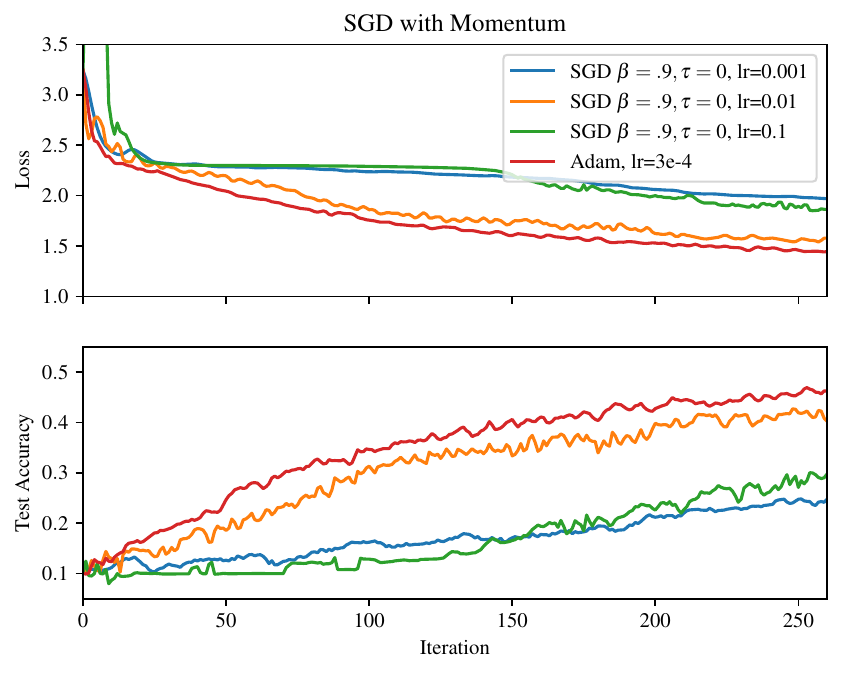}
        \caption{Adam versus standard SGD with Momentum. SGD remains extremely sensitive to choice of learning rate.}
        \label{fig:resnet110-adamvssgd}
    \end{subfigure}
    \begin{subfigure}{.48\linewidth}
        \includegraphics[width = \linewidth]{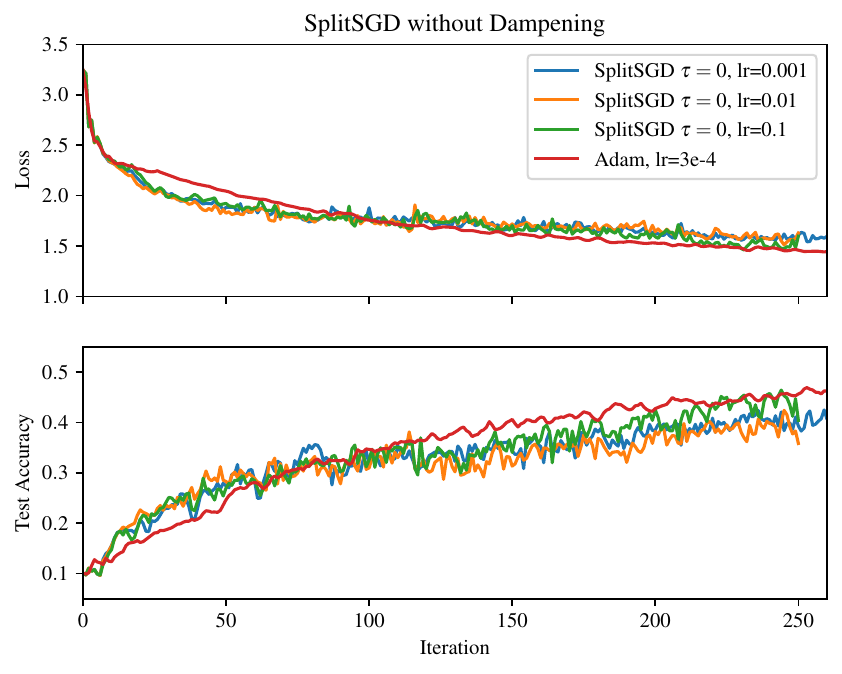}
        \caption{Adam vs. SplitSGD with $\tau=0$. Fixed-size learning rate for both is .0003.}
        \label{fig:resnet110-ablatedampen}
    \end{subfigure}
\end{figure}

\begin{figure}[h!]
    \centering
    \begin{subfigure}{.45\linewidth}
        \includegraphics[width = \linewidth]{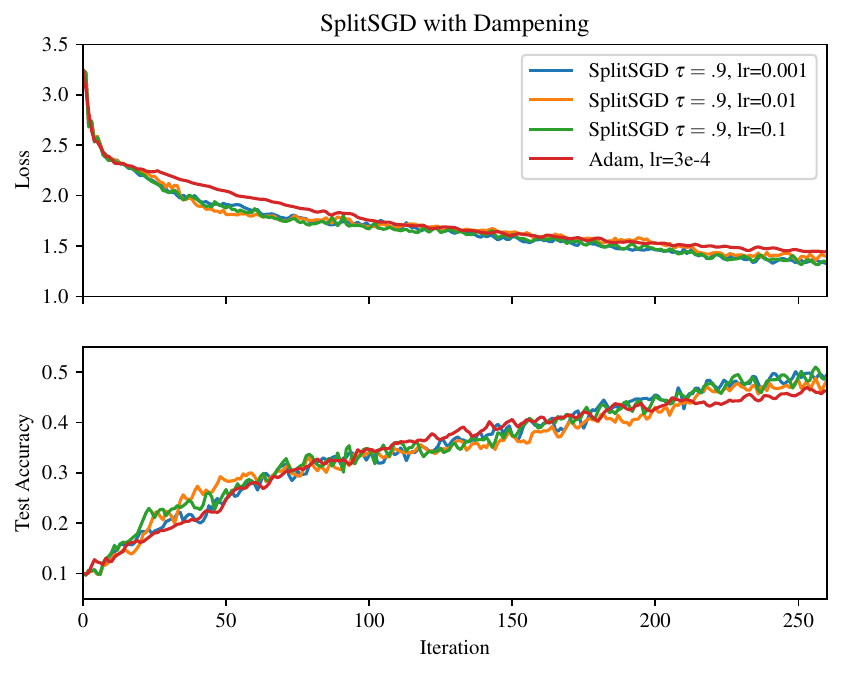}
        \caption{Adam vs. SplitSGD with $\tau=0.9$.  Fixed-size learning rate for both is .0003.}
        \label{fig:resnet110-adamvssplit}
    \end{subfigure}
    \begin{subfigure}{.48\linewidth}
        \includegraphics[width = \linewidth]{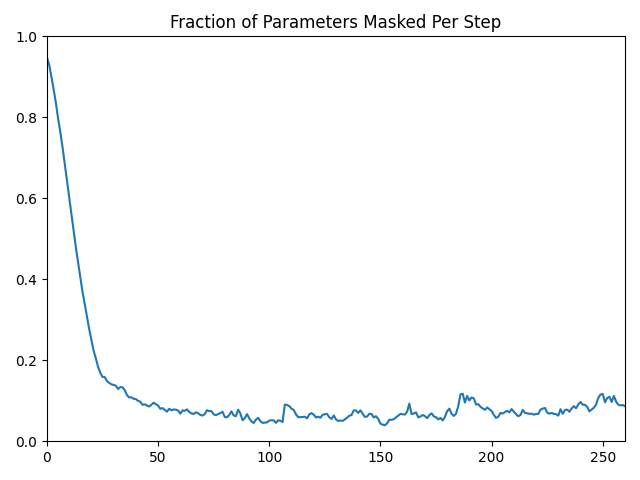}
        \caption{The fraction of parameters for which a fixed-size signed step was taken for each gradient step.}
        \label{fig:resnet110-frac-masked}
    \end{subfigure}
\end{figure}

\subsection{SplitSGD on GPT-2}
\label{appsec:split-gpt2}

For the transformer, we use the public nanoGPT repository which trains GPT-2 on the OpenWebText dataset. As a full training run would be too expensive, we compare only for the early stage of optimization. All hyperparameters are the defaults from that repository, with the SGD learning rate $\eta_1$ set equal to the other learning rate $\eta_2$. We observe that not only do the two methods track each other closely in training loss, it appears that they experience \emph{exactly} the same oscillations. Though we do not track the parameters themselves, this suggests that these two methods follow very similar optimization trajectories as well, which we believe is an intriguing possibility worth further study.

\begin{figure}[ht!]
    \centering
    \begin{subfigure}{.48\linewidth}
        \includegraphics[width = \linewidth]{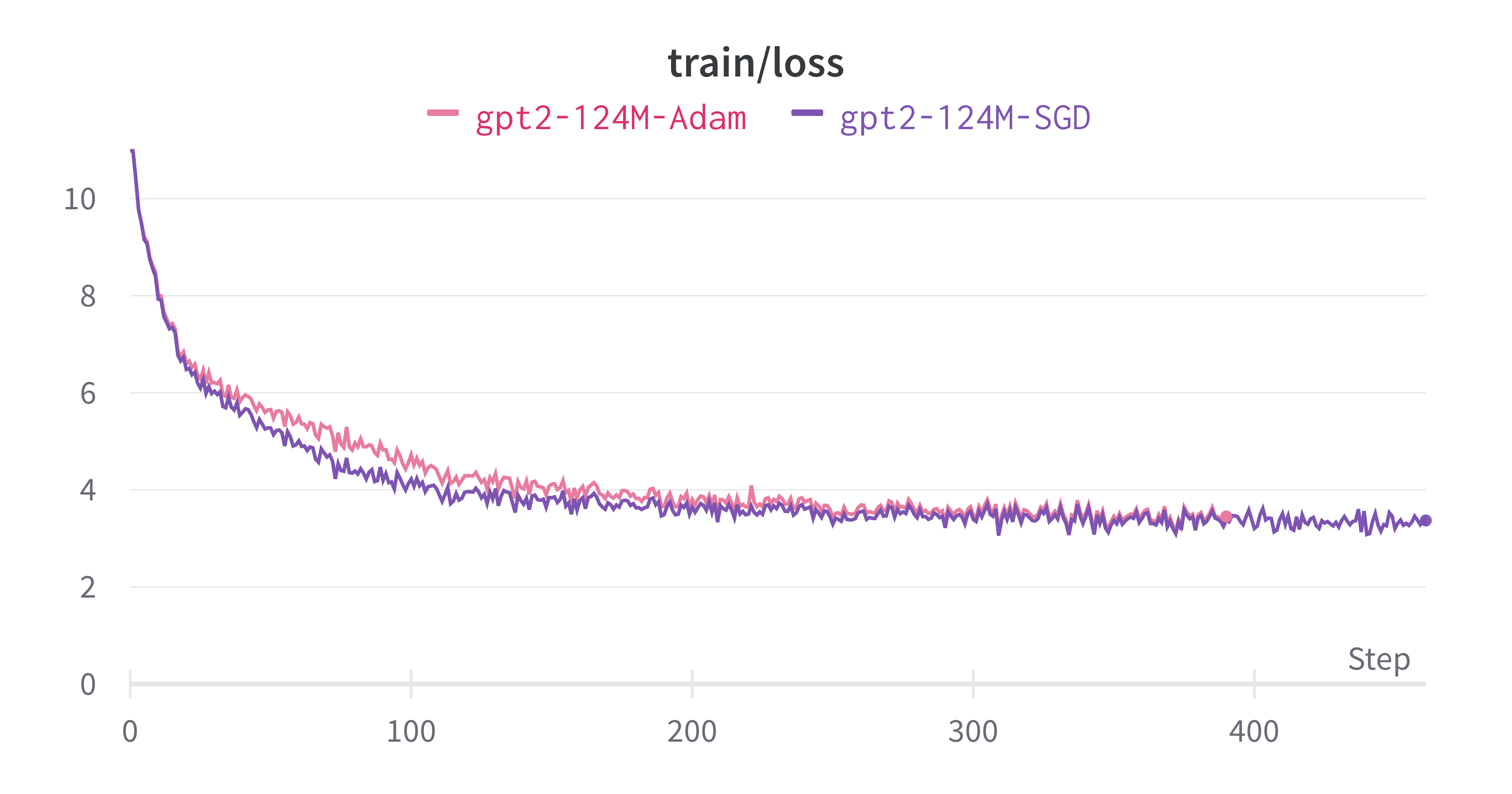}
    \end{subfigure}
    \begin{subfigure}{.48\linewidth}
        \includegraphics[width = \linewidth]{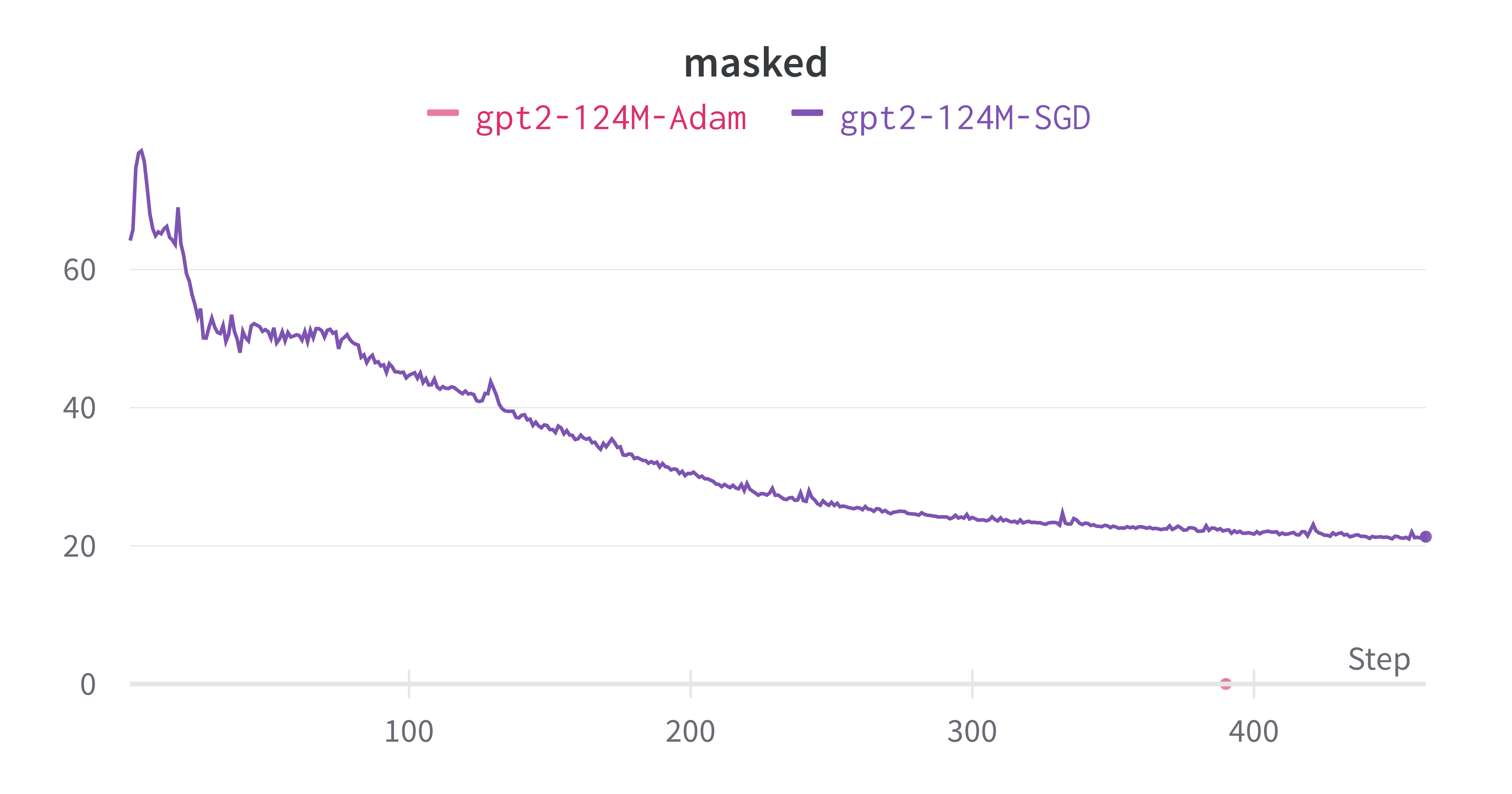}
    \end{subfigure}
    \caption{Adam versus SplitSGD on the initial stage of training GPT-2 on the OpenWebText dataset, and the fraction of parameters with a fixed-size signed step. All hyperparameters are the defaults from the nanoGPT repository. Observe that not only is their performance similar, they appear to have \emph{exactly} the same loss oscillations.}
    \label{fig:gpt-masksgd}
\end{figure}

\clearpage

\newpage
\section{Proofs of Theoretical Results}
\label{app:proofs}

Before we begin the analysis, we must identify the quantities of interest during gradient flow and the system of equations that determines how they evolve.

We start by writing out the loss:
\begin{align}
    2 L(\theta) &= \E[(c(b^\top x + b_o^\top x_o) - (\beta^\top x + d_2^{-1/2} \mathbf{1}^\top |x_o|))^2] \\
    &= \E[((cb - \beta)^\top x)^2] + \E[((cb_o - d_2^{-1/2} \sign(x_o) \mathbf{1})^\top x_o)^2] \\
    &= \|cb - \beta\|^2 + \frac{p}{2} \left( \left(\sqrt{\frac{\alpha}{p}} (cb_o - 1)\right)^2 + \left(\sqrt{\frac{\alpha}{p}} (cb_o + 1)\right)^2 \right) \\
    &= \|cb - \beta\|^2 + \alpha (c^2 \|b_o\|^2 + 1).
\end{align}

This provides the gradients
\begin{align}
    \nabla_b L &= c (cb - \beta), \\
    \nabla_{b_o} L &= \alpha c^2 b_o, \\
    \nabla_c L &= b^\top (cb - \beta) + \alpha \|b_o\|^2 c.
\end{align}
We will also make use of the Hessian to identify its top eigenvalue; it is given by
\begin{align}
    \nabla^2_\theta L(\theta) &= \begin{bmatrix} 
    c^2 I_{d_1} & \mathbf{0}_{d_1\times d_2} & 2cb \\
    \mathbf{0}_{d_2\times d_1} & \alpha c^2 I_{d_2} & 2c\alpha b_o \\
    2cb^\top & 2c\alpha b_o^\top & \|b\|^2 + \alpha \|b_o\|^2
    \end{bmatrix}.
\end{align}
The maximum eigenvalue $\lambda_{\max}$ at initialization is upper bounded by the maximum row sum of this matrix, and thus $\lambda_{\max} \leq 3 \frac{d_1 +\alpha d_2}{d_1+d_2} < 3\alpha$. Clearly, we also have $\lambda_{\max} \geq \alpha$. 

We observe that tracking the precise vectors $b, b_o$ are not necessary to uncover the dynamics when optimizing this loss. First, let us write $b := \epsilon \frac{\beta}{\|\beta\|} + \delta v$, where $v$ is the direction of the rejection of $b$ from $\beta$ (i.e., $\beta^\top v = 0$) and $\delta$ is its norm. Then we have the gradients
\begin{align}
    \nabla_\epsilon L &= (\nabla_\epsilon b)^\top (\nabla_b L) \\
    &= \frac{\beta}{\|\beta\|}^\top \left(c^2 \left(\epsilon \frac{\beta}{\|\beta\|} + \delta v \right) - c\beta\right) \\
    &= c^2\epsilon - c \|\beta\|, \\
    \nabla_\delta L &= (\nabla_\delta b)^\top (\nabla_b L) \\
    &= v^\top \left(c^2 \left(\epsilon \frac{\beta}{\|\beta\|} + \delta v \right) - c\beta\right) \\
    &= c^2 \delta, \\
    \nabla_c L &= \left(\epsilon \frac{\beta}{\|\beta\|} + \delta v \right)^\top \left( c \left(\epsilon \frac{\beta}{\|\beta\|} + \delta v \right) - \beta \right) + \alpha \|b_o\|^2 c \\
    &= c (\epsilon^2 + \delta^2 + \alpha \|b_o\|^2) - \epsilon \|\beta\|.
\end{align}
Finally, define the scalar quantity $o := \|b_o\|^2$, noting that $\nabla_o L = 2b_o^\top \nabla_{b_o} L = 2 \alpha c^2 o$. Minimizing this loss via gradient flow is therefore characterized by the following ODE on four scalars:
\begin{align}
    \dt{\epsilon} &= -c^2 \epsilon + c \|\beta\|, \\
    \dt{\delta} &= -c^2 \delta, \\
    \dt{o} &= -2\alpha c^2 o, \\
    \dt{c} &= -c (\epsilon^2 + \delta^2 + \alpha o) + \epsilon \|\beta\|. \\
\end{align}
Furthermore, we have the boundary conditions
\begin{align}
    \epsilon(0) &= \sqrt{\frac{1}{d_1+d_2}}, \\
    \delta(0) &= \sqrt{\frac{d_1-1}{d_1+d_2}}, \\
    o(0) &= \frac{d_2}{d_1+d_2}, \\
    c(0) &= 1.
\end{align}
Given these initializations and dynamics, we make a few observations: (i) all four scalars are initialized at a value greater than 0, and remain greater than 0 at all time steps; (ii) $\delta$ and $o$ will decrease towards 0 monotonically, and $\epsilon$ will increase monotonically until $c\epsilon = \|\beta\|$; (iii) $c$ will be decreasing at initialization. Lastly, for conciseness later on we define the quantities 
\begin{align}
    r &:= (\epsilon(0)^2 + \delta(0)^2 + \alpha o(0)) = \frac{d_1 + \alpha d_2}{d_1+d_2}, \\
    k &:= \frac{d_2}{d_1}, \\
    m &:= \frac{d_1}{d_1+d_2} = \frac{1}{1 + k}.
\end{align}

Before we can prove the main results, we present a lemma which serves as a key tool for deriving continuously valid bounds on the scalars we analyze:

\begin{lemma}
    Consider a vector valued ODE with scalar indices $v_1, v_2, \ldots$, where each index is described over the time interval $[t_{\min}, t_{\max}]$ by the continuous dynamics $\dt{v_i(t)} = a_i(v_{-i}(t)) \cdot v_i(t) + b_i(v_{-i}(t))$ with $a_i \leq 0, b_i \geq 0$ for all $i, t$ ($v_{-i}$ denotes the vector $v$ without index $i$). That is, each scalar's gradient is an affine function of that scalar with a negative coefficient. Suppose we define continuous functions $\hat a_i, \hat b_i : \R \to \R$ such that $\forall i, t$, $\hat a_i(t) \leq a_i(v_{-i}(t))$ and $\hat b_i(t) \leq b_i(v_{-i}(t))$. Let $\hat v$ be the vector described by these alternate dynamics, with the boundary condition $\hat v_i(t_{\min}) = v_i(t_{\min})$ and $v_i(t_{\min}) \geq 0$ for all $i$ (if a solution exists). Then for $t\in [t_{\min}, t_{\max}]$ it holds that
    \begin{align}
        \hat v(t) &\leq v(t),
    \end{align}
    elementwise. If $\hat a_i, \hat b_i$ upper bound $a_i, b_i$, the inequality is reversed.
\end{lemma}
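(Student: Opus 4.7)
My plan is to exploit the fact that although the true ODE couples all components through $v_{-i}$, the comparison dynamics are \emph{decoupled}: since $\hat a_i$ and $\hat b_i$ depend only on $t$, each $\hat v_i$ satisfies a standalone scalar linear ODE independent of the other hat components. On the true side, freezing $v_{-i}$ as the given trajectory, $v_i$ likewise satisfies a scalar linear ODE $\dot v_i = A_i(t)\, v_i + B_i(t)$ with time-varying coefficients $A_i(t) := a_i(v_{-i}(t))$ and $B_i(t) := b_i(v_{-i}(t))$, where by hypothesis $A_i(t) \geq \hat a_i(t)$ and $B_i(t) \geq \hat b_i(t)$ for every $t$. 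Thus the claimed elementwise inequality $\hat v \leq v$ reduces to a collection of one-dimensional scalar comparisons, one per index.

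To execute the comparison for a fixed $i$, I would first verify that $v_i(t) \geq 0$ on $[t_{\min}, t_{\max}]$: since $v_i(t_{\min}) \geq 0$, if $v_i$ ever reached zero then at that instant $\dot v_i = B_i \geq 0$, so the trajectory cannot cross into the negatives. Next I would set $w_i(t) := v_i(t) - \hat v_i(t)$, note $w_i(t_{\min}) = 0$, and compute
\begin{align}
\dot w_i &= A_i v_i + B_i - \hat a_i \hat v_i - \hat b_i \\
&= \hat a_i\, w_i + (A_i - \hat a_i)\, v_i + (B_i - \hat b_i).
\end{align}
The crucial observation is that the last two terms are each nonnegative: $A_i - \hat a_i \geq 0$ and $B_i - \hat b_i \geq 0$ by hypothesis, and $v_i \geq 0$ by the previous step. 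So $w_i$ satisfies a scalar linear ODE $\dot w_i = \hat a_i(t)\, w_i + F(t)$ with $F(t) \geq 0$ and $w_i(t_{\min}) = 0$. The integrating-factor representation then gives
\begin{align}
w_i(t) = \int_{t_{\min}}^{t} \exp\!\left(\int_{s}^{t} \hat a_i(u)\, du\right) F(s)\, ds \geq 0,
\end{align}
since the exponential is positive. This yields $\hat v_i(t) \leq v_i(t)$ as required, and the reversed-inequality case (upper bounds $\hat a_i \geq a_i,\, \hat b_i \geq b_i$) is entirely symmetric with $v_i$ and $\hat v_i$ swapping roles.

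The main step requiring care is establishing $v_i \geq 0$ on the interval, as this nonnegativity is exactly what converts the bound $A_i \geq \hat a_i$ into a nonnegative contribution to the forcing $F$. Notably, one does \emph{not} need $\hat v_i \geq 0$ or $\hat b_i \geq 0$ --- the latter is important because the assumption is only $\hat b_i \leq b_i$, so a lower-bounding $\hat b_i$ could in principle be negative, yet the argument still goes through because it is $v_i$ (not $\hat v_i$) that multiplies the potentially sign-indefinite difference $A_i - \hat a_i$. The only other potential subtlety is existence of the hat solution, which is automatic because each hat component is a scalar linear ODE with continuous coefficients on a bounded interval (and is indeed flagged by the parenthetical ``if a solution exists'' in the statement).
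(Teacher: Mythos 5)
Your proof is correct, and it takes a genuinely different route from the paper's. The paper proves the comparison by contradiction: it defines $w := \hat v - v$, supposes there is a first time $t_0$ and index $i$ where $w_i$ becomes positive, notes $\hat v_i(t_0) = v_i(t_0)$ there, and derives $\dt{w_i(t_0)} \leq \bigl(\hat a_i(t_0) - a_i(v_{-i}(t_0))\bigr)v_i(t_0) \leq 0$, contradicting the (claimed) strict positivity of the derivative at the crossing. You instead work with $w_i := v_i - \hat v_i$, rearrange the linear dynamics so that $\dot w_i = \hat a_i w_i + F_i$ with a nonnegative forcing $F_i := (A_i - \hat a_i)v_i + (B_i - \hat b_i)$, and conclude via the variation-of-constants (Duhamel) formula that $w_i \geq 0$ outright. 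The integrating-factor route is somewhat more robust: the paper's argument implicitly relies on the derivative at the first crossing being strictly positive, which the definition of $t_0$ does not quite deliver (one only gets $\dt{w_i(t_0)} \geq 0$, so the conclusion $\dt{w_i(t_0)} \leq 0$ alone does not close the contradiction without a small perturbation argument), whereas your representation sidesteps sign-of-derivative-at-a-boundary-point issues entirely. Both proofs hinge on the same key fact, namely $v_i(t) \geq 0$ on the interval, which the paper asserts without proof; you at least sketch the barrier argument, and in fact the same integrating-factor representation applied to $v_i$ itself (with $A_i \leq 0$, $B_i \geq 0$, $v_i(t_{\min}) \geq 0$) proves it cleanly, so you might as well invoke that rather than the informal ``cannot cross zero'' reasoning. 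Your remark that only $v_i \geq 0$ (not $\hat v_i \geq 0$ or $\hat b_i \geq 0$) is needed matches what the paper's argument also uses, just stated more explicitly.
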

\begin{proof}
    Define the vector $w(t) := \hat v(t) - v(t)$. This vector has the dynamics 
    \begin{align}
        \dt{w_i} &= \dt{\hat v_i} - \dt{v_i} \\
        &= \hat a_i(t) \cdot \hat v_i(t) + \hat b_i(t) - a_i(v_{-i}(t)) \cdot v_i(t) - b_i(v_{-i}(t)) \\
        &\leq \hat a_i(t) \cdot \hat v_i(t) - a_i(v_{-i}(t)) \cdot v_i(t).
    \end{align}
    The result will follow by showing that $w(t) \leq \mathbf{0}$ for all $t \in [t_{\min}, t_{\max}]$ (this clearly holds at $t_{\min}$). Assume for the sake of contradiction there exists a time $t' \in (t_{\min}, t_{\max}]$ and index $i$ such that $w_i(t') > 0$ (let $i$ be the first such index for which this occurs, breaking ties arbitrarily). By continuity, we can define $t_0 := \max\;\{t \in [t_{\min}, t'] \;:\; w_i(t) \leq 0\}$. By definition of $t_0$ it holds that $w_i(t_0) = 0$ and $\forall \epsilon>0$, $w_i(t_0 + \epsilon) - w_i(t_0) = w_i(t_0 + \epsilon) > 0$, and thus $\dt{w_i(t_0)} > 0$. But by the definition of $w$ we also have
    \begin{align}
        \hat v_i(t_0) &= v_i(t_0) + w_i(t_0) \\
        &= v_i(t_0),
    \end{align}
    and therefore 
    \begin{align}
        \dt{w_i(t_0)} &\leq \hat a_i(t_0) \cdot \hat v_i(t_0) - a_i(v_{-i}(t_0)) \cdot v_i(t_0) \\
        &= \bigl( \hat a_i(t_0) - a_i(v_{-i}(t_0)) \bigr) \cdot v_i(t_0) \\
        &\leq 0,
    \end{align}
    with the last inequality following because $\hat a_i(t) \leq a_i(v_{-i}(t))$ and $v_i(t) > 0$ for all $i, t \in [t_{\min}, t_{\max}]$. Having proven both $\dt{w_i(t_0)} > 0$ and $\dt{w_i(t_0)} \leq 0$, we conclude that no such $t'$ can exist. The other direction follows by analogous argument.
\end{proof}

We make use of this lemma repeatedly and its application is clear so we invoke it without direct reference. We are now ready to prove the main results:

\subsection{Proof of \cref{thm:sharpness-decrease}}
\begin{proof}
At initialization, we have $\|\beta\| \geq \frac{d_1}{\sqrt{d_1+d_2}} \implies \|\beta\|\epsilon(0) \geq \frac{d_1}{d_1+d_2} = c(0)(\epsilon(0)^2 + \delta(0)^2)$. Therefore, we can remove these terms from $\dt{c}$ at time $t=0$, noting simple that $\dt{c} \geq -\alpha o c$. Further, so long as $c$ is still decreasing (and therefore less than $c(0) = 1$),
\begin{align}
    \dt{(\|\beta\|\epsilon-c(\epsilon^2 + \delta^2))} &\geq \dt{(\|\beta\|\epsilon-(\epsilon^2 + \delta^2))} \\
    &= (\|\beta\| - 2\epsilon) \dt{\epsilon} - 2\delta \dt{\delta} \\
    &= (\|\beta\| - 2\epsilon) (-c^2 \epsilon + \|\beta\|c) - 2\delta (-c^2 \delta) \\
    &= -c^2 (\epsilon \|\beta\| - 2 (\epsilon^2 + \delta^2) + c(\|\beta\|^2 - 2\epsilon) \\
    &\geq -c (\epsilon \|\beta\| - 2 (\epsilon^2 + \delta^2)) + c(\|\beta\|^2 - 2\epsilon) \\
    &= c (\|\beta\|^2 - 2\epsilon - \epsilon \|\beta\| + 2 (\epsilon^2 + \delta^2)) \\
    \label{eq:non-negative}
    &\geq c (\|\beta\|^2 - \epsilon (2 + \|\beta\|)).
\end{align}
Since $c >0$ at all times, this is non-negative so long as the term in parentheses is non-negative, which holds so long as $\epsilon \leq \frac{\|\beta\|^2}{\|\beta\|+2}$. Further, since $\epsilon c \leq \|\beta\|$ we have
\begin{align}
    \dt{\epsilon^2} &= 2\epsilon \dt{\epsilon} \\
    &= -2c^2 \epsilon^2 + 2\epsilon c \|\beta\| \\
    &\leq 2 \|\beta\|^2.
\end{align}
This implies $\epsilon(t)^2 \leq \epsilon(0)^2 + 2t\|\beta\|^2$. Therefore, for $t \leq \frac{\ln \nicefrac{\|\beta\|}{2}}{2\|\beta\|}$ we have $\epsilon(t)^2 \leq \frac{1}{d_1+d_2} + \|\beta\| \ln \nicefrac{\|\beta\|}{2} \leq \frac{\|\beta\|^4}{(\|\beta\|+2)^2}$ (this inequality holds for $\|\beta\| \geq 2$). This satisfies the desired upper bound.

Thus the term in \cref{eq:non-negative} is non-negative for all $t \leq \frac{\ln \nicefrac{\|\beta\|}{2}}{2\|\beta\|}$, and so we have $\dt{c} \geq -\alpha o c$ under the above conditions. Since the derivative of $o$ is negative in $c$, a lower bound on $\dt{c}$ gives us an upper bound on $\dt{o}$, which in turn maintains a valid lower bound on $\dt{c}$ This allows us to solve for just the ODE given by 
\begin{align}
    \dt{c^2} &= -2\alpha c^2 o ,\\
    \dt{o} &= -2\alpha c^2 o.
\end{align}
Recalling the initial values of $c^2, o$, The solution to this system is given by
\begin{align}
    c(t)^2 &= \frac{m}{1 - \frac{(1-m)}{\exp(2\alpha m t)}},\\
    o(t) &= \frac{m }{\frac{\exp(2\alpha m t)}{1-m} - 1} \\
    &= \frac{m}{\exp(2\alpha m t) (1 + k^{-1}) - 1}
\end{align}
Since these are bounds on the original problem, we have $c(t)^2 \geq m$ and $o(t)$ shrinks exponentially fast in $t$. In particular, note that under the stated condition $\sqrt{\alpha} \geq \frac{\|\beta\| \ln k}{m (\ln \nicefrac{\|\beta\|}{2})}$ (recalling $k := \frac{d_2}{d_1} > 1$), we have $\frac{\ln k}{2\sqrt{\alpha} m} \leq \frac{\ln \nicefrac{\|\beta\|}{2}}{2\|\beta\|}$. Therefore we can plug in this value for $t$, implying $o(t) \leq m \left(\frac{d_1}{d_2}\right)^{\sqrt{\alpha}} = m k^{-\sqrt{\alpha}}$ at some time before $t=\frac{\ln \nicefrac{\|\beta\|}{2}}{2\|\beta\|}$.

Now we solve for the time at which $\dt{c} \geq 0$. Returning to \cref{eq:non-negative}, we can instead suppose that $\epsilon \leq \frac{\|\beta\|^2 - \gamma}{\|\beta\|+2} \implies \|\beta\|^2 - \epsilon (2 + \|\beta\|) \geq \gamma$ for some $\gamma > 0$. If this quantity was non-negative and has had a derivative of at least $\gamma$ until time $t = \frac{\ln k}{2\sqrt{\alpha} m}$, then its value at that time must be at least $\frac{\gamma \ln k}{2\sqrt{\alpha} m}$. For $\dt{c}$ to be non-negative, we need this to be greater than $c(t)^2 \alpha o(t)$, so it suffices to have $\frac{\gamma \ln k}{2\sqrt{\alpha} m} \geq  \frac{\alpha m}{\exp(2\alpha m t) (1 + k^{-1}) - 1} \impliedby \gamma \ln k \geq  \frac{2\alpha^{3/2} m^2}{k^{\sqrt{\alpha}} (1 + k^{-1}) - 1} \impliedby \gamma \geq \frac{2\alpha^{3/2} m^2 k^{-\sqrt{\alpha}}}{\ln k}$. Observe that the stated lower bound on $\alpha$ directly implies this inequality.

Finally, note that $\|b\|^2 = \epsilon^2 + \delta^2$, and therefore
\begin{align}
    \dt{\|b\|^2} &= 2\epsilon \dt{\epsilon} + 2\delta \dt{\delta} \\
    &= -2c^2 (\epsilon^2 + \delta^2) + 2 c \epsilon \|\beta\|.
\end{align}
Since $c(0) = 1$ and $c\epsilon < \|\beta\|$, this means $\|b\|^2$ will also be decreasing at initialization. Thus we have shown that all relevant quantities will decrease towards 0 at initialization, but that by time $t = \frac{\ln k}{2\sqrt{\alpha} m}$, we will have $\dt{c} \geq 0$.
\end{proof}

\subsection{Proof of Proof of \cref{thm:sharpness-increase}}
\begin{proof}
Recall from the previous section that we have shown that at some time $t_1 \leq \frac{\ln k}{2\sqrt{\alpha} m}$, $c(t)^2$ will be greater than $m$ and increasing, and $o(t)$ will be upper bounded by $m k^{-\sqrt{\alpha}}$. Furthermore, $\epsilon(t)^2 \leq \frac{1}{d_1+d_2} + 2t\|\beta\|^2$. To show that the sharpness reaches a particular value, we must demonstrate that $c$ grows large enough before the point $c\epsilon \approx \|\beta\|$ where this growth will rapidly slow. To do this, we study the relative growth of $c$ vs. $\epsilon$.

Recall the derivatives of these two terms:
\begin{align}
    \dt{c} &= -(\epsilon^2 + \delta^2 + \alpha \outlierbeta^2) c + \|\beta\| \epsilon, \\
    \dt{\epsilon} &= -c^2 \epsilon + \|\beta\| c.
\end{align}
Considering instead their squares,
\begin{align}
    \dt{c^2} &= 2c \dt{c} \\
    &= -2 (\epsilon^2 + \delta^2 + \alpha \outlierbeta^2) c^2 + 2 \|\beta\| \epsilon c, \\
    \dt{\epsilon^2} &= 2\epsilon \dt{\epsilon} \\
    &= -2\epsilon^2 c^2 + 2 \|\beta\| \epsilon c.
\end{align}
Since $\delta, o$ decrease monotonically, we have $\dt{c^2} \geq -2(\epsilon^2 + \frac{d_1}{d_1+d_2} + \alpha m \left(\frac{d_1}{d_2}\right)^{\sqrt{\alpha}}) c^2 + 2\|\beta\| \epsilon$. Thus if we can show that 
\begin{align}
    \|\beta\| \epsilon c &\geq (\epsilon^2 + 2 (\frac{d_1}{d_1+d_2} + \alpha m \left(\frac{d_1}{d_2}\right)^{\sqrt{\alpha}})) c^2,
\end{align}
we can conclude that $\dt{c^2} \geq (\epsilon^2 c^2 + \|\beta\| \epsilon c) = \frac{1}{2} \dt{\epsilon^2}$---that is, that $c(t)^2$ grows at least half as fast as $\epsilon(t)^2$. And since $\delta, o$ continue to decrease, this inequality will continue to hold thereafter.

Simplifying the above desired inequality, we get
\begin{align}
    \|\beta\| \frac{\epsilon}{c} &\geq \epsilon^2 + 2 m (1 + \alpha k^{-\sqrt{\alpha}}).
\end{align}
Noting that $\frac{\epsilon}{c} \geq 1$ and $m = \frac{d_1}{d_1+d_2} \leq \frac{1}{2}$, and recalling the upper bound on $\epsilon(t)^2$, this reduces to proving
\begin{align}
    \|\beta\| &\geq \frac{1}{d_1+d_2} + 2t\|\beta\|^2 + 1 + \alpha k^{-\sqrt{\alpha}}.
\end{align}
Since this occurs at some time $t_1 \leq \frac{\ln k}{2\sqrt{\alpha}m}$, and since $m^{-1} = 1+k$, we get
\begin{align}
    \|\beta\| &\geq \frac{1}{d_1+d_2} + \frac{\|\beta\|^2 (1+k) \ln k}{\sqrt{\alpha}} + 1 + \alpha k^{-\sqrt{\alpha}}.
\end{align}
The assumed lower bound on $\sqrt{\alpha}$ means the sum of the first three terms can be upper bounded by a small $1+o(1)$ term (say, $\nicefrac{9}{5}$) and recalling $\|\beta\| \geq \nicefrac{24}{5}$ it suffices to prove
\begin{align}
    &\|\beta\| \geq \frac{9}{5} + \alpha k^{-\sqrt{\alpha}} \\
    \impliedby &\alpha k^{-\sqrt{\alpha}} \leq 3.
\end{align}
Taking logs,
\begin{align}
    \frac{2 \ln \sqrt{\alpha}}{\ln k} - \sqrt{\alpha} \leq \ln 3,
\end{align}
which is clearly satisfied for $\sqrt{\alpha} \geq 1 + k \ln k$. As argued above, this implies $\dt{c^2} \geq \frac{1}{2} \dt{\epsilon^2}$ by some time $t_2 \leq \frac{\ln k}{2\sqrt{\alpha} m}$.

Consider the time $t_2$ at which this first occurs, whereby $c(t_2)^2$ is growing by at least one-half the rate of $\epsilon(t_2)^2$. Here we note that we can derive an upper bound on $c$ and $\epsilon$ at this time using our lemma and the fact that
\begin{align}
    \dt{c} &\leq \|\beta\| \epsilon, \\
    \dt{\epsilon} &\leq \|\beta\| c.
\end{align}
The solution to this system implies
\begin{align}
    c(t_2) & \leq \frac{1}{2} \left( \frac{\exp(\|\beta\| t_2) - \exp(-\|\beta\| t_2)}{\sqrt{d_1+d_2}} + \exp(\|\beta\| t_2) + 1 \right) \\
    &\leq \frac{1}{2} \left( \exp(\|\beta\| t_2) \left(1 + \frac{1}{\sqrt{d_1+d_2}} \right) + 1 \right) \\
    &\leq \frac{1}{2} \left( \exp\left( \frac{\|\beta\| \ln k}{2\sqrt{\alpha} m}\right) \left(1 + \frac{1}{\sqrt{d_1+d_2}} \right) + 1 \right), \\
    \epsilon(t_2) &\leq \frac{1}{2} \left( \exp(\|\beta\| t_2) \left(1 + \frac{1}{\sqrt{d_1+d_2}} \right) + \frac{1}{\sqrt{d_1+d_2}} - 1 \right) \\
    &\leq \frac{1}{2} \left( \exp\left( \frac{\|\beta\| \ln k}{2\sqrt{\alpha} m}\right) \left(1 + \frac{1}{\sqrt{d_1+d_2}} \right) + \frac{1}{\sqrt{d_1+d_2}} - 1 \right)
\end{align}

Then for $\alpha > \left( \frac{\|\beta\| \ln k}{m (\ln \|\beta\| - \ln 2)} \right)^2$, the exponential term is upper bounded by $\frac{\sqrt{\|\beta\|}}{2}$, giving
\begin{align}
    c(t_2) & \leq \frac{1}{2} \left( \frac{\sqrt{\|\beta\|}}{2} \left(1 + \frac{1}{\sqrt{d_1+d_2}} \right) + 1 \right) \\
    &\leq \frac{\sqrt{\|\beta\|}}{2}, \\
    \epsilon(t_2) &\leq \frac{1}{2} \left( \frac{\sqrt{\|\beta\|}}{2} \left(1 + \frac{1}{\sqrt{d_1+d_2}} \right) + \frac{1}{\sqrt{d_1+d_2}} - 1 \right) \\
    &\leq \frac{\sqrt{\|\beta\|}}{2}.
\end{align}
We know that optimization will continue until $\epsilon^2 c^2 = \|\beta\|^2$, and also that $\dt{c^2} \geq \frac{1}{2} \dt{\epsilon^2}$. Since $c \leq \epsilon$, this implies that $\epsilon^2 \geq \|\beta\|$ before convergence. Suppose that starting from time $t_2$, $\epsilon^2$ grows until time $t'$ by an additional amount $s$. Then we have
\begin{align}
    s &= \epsilon(t')^2 - \epsilon(t_2)^2 \\
    &= \int_{t_2}^{t'} \dt{\epsilon(t)^2} \\
    &\leq \int_{t_2}^{t'} 2\dt{c(t)^2} \\
    &= 2 (c(t')^2 - c(t_2)^2).
\end{align}
In other words, $c^2$ must have grown by at least half that amount. Since $\epsilon(t_2)^2 \leq \frac{\|\beta\|}{4}$ and therefore $\epsilon(t')^2 \leq \frac{\|\beta\|}{4} + s$, even if $c(t')^2$ is the minimum possible value of $\frac{s}{2}$ we must have at convergence $\frac{s}{2} = c^2 = \frac{\|\beta\|^2}{\epsilon^2} \geq \frac{\|\beta\|^2}{\frac{\|\beta\|}{4} + s}$.
This is a quadratic in $s$ and solving tells us that we must have $s \geq \frac{5}{4} \|\beta\|$. Therefore, $c(t')^2 \geq \frac{5}{8} \|\beta\|$ is guaranteed to occur. Noting our derivation of the loss Hessian, this implies the sharpness must reach at least $\frac{5}{8} \alpha \|\beta\|$ for each dimension of $b_o$.
\end{proof}

\section{Additional Samples Under Various Architectures/Seeds}
\label{app:addl-images}

To demonstrate the robustness of our finding we train a ResNet-18, VGG-11, and a Vision Transformer for 1000 steps with full-batch GD, each with multiple random initializations. For each run, we identify the 24 training examples with the most positive and most negative change in loss from step $i$ to step $i+1$, for $i\in\{100, 250, 500, 750\}$. We then display these images along with their label (above) and the network's predicted label before and after the gradient step (below). The change in the network's predicted labels display a clear pattern, where certain training samples cause the network to associate an opposing signal with a new class, which the network then overwhelmingly predicts whenever that feature is present.

Consistent with our other experiments, we find that early opposing signals tend to be ``simpler'', e.g. raw colors, whereas later signals are more nuanced, such as the presence of a particular texture. We also see that the Vision Transformer seems to learn complex features earlier, and that they are less obviously aligned with human perception---this is not surprising since they process inputs in a fundamentally different manner than traditional ConvNets.

\begin{figure}[ht!]
    \centering
    \begin{subfigure}[b]{.475\linewidth}
        \includegraphics[width=\linewidth]{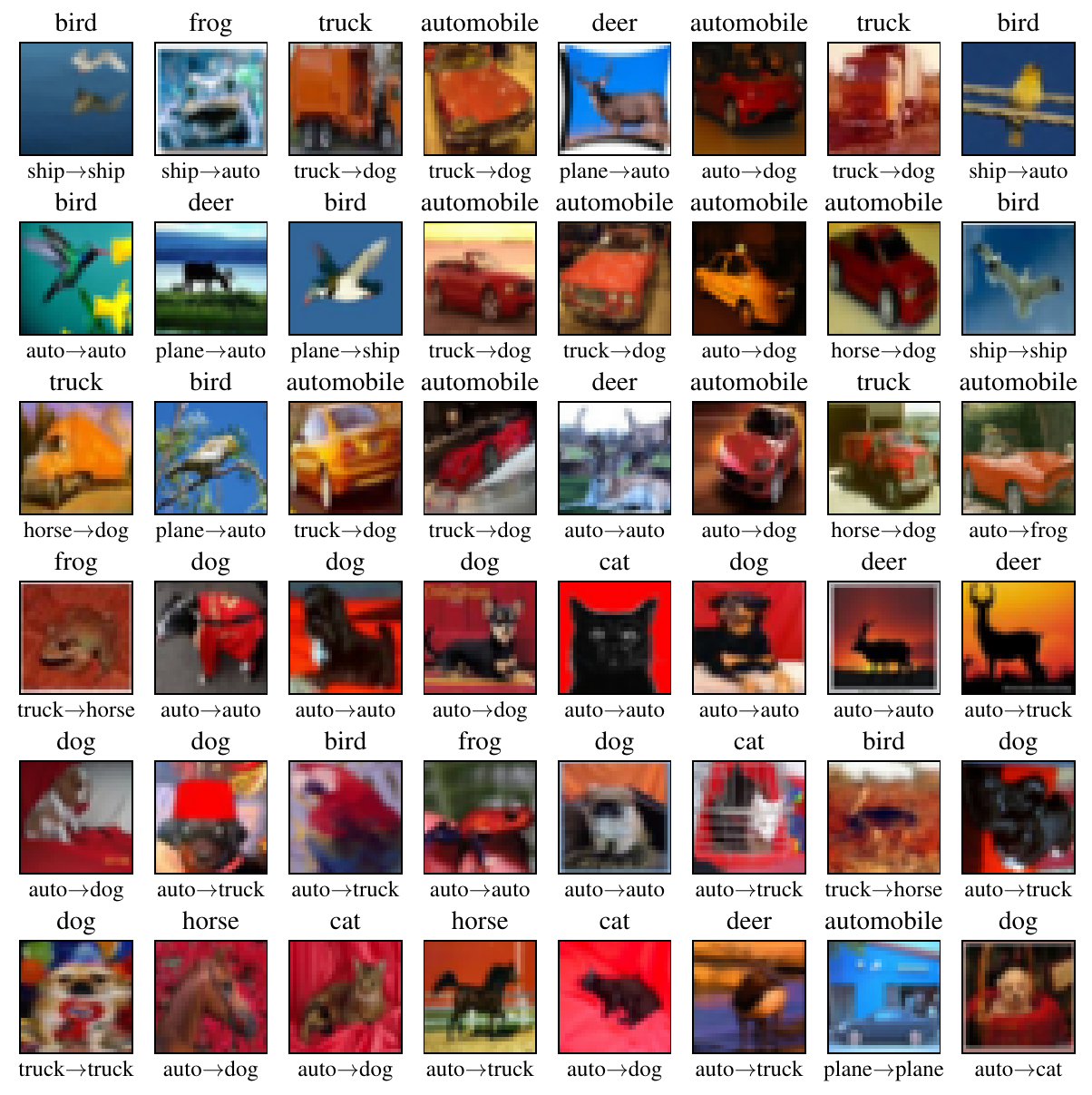}
        \caption{Step 100 to 101}
    \end{subfigure}
    \hfill
    \begin{subfigure}[b]{.475\linewidth}
        \includegraphics[width = \linewidth]{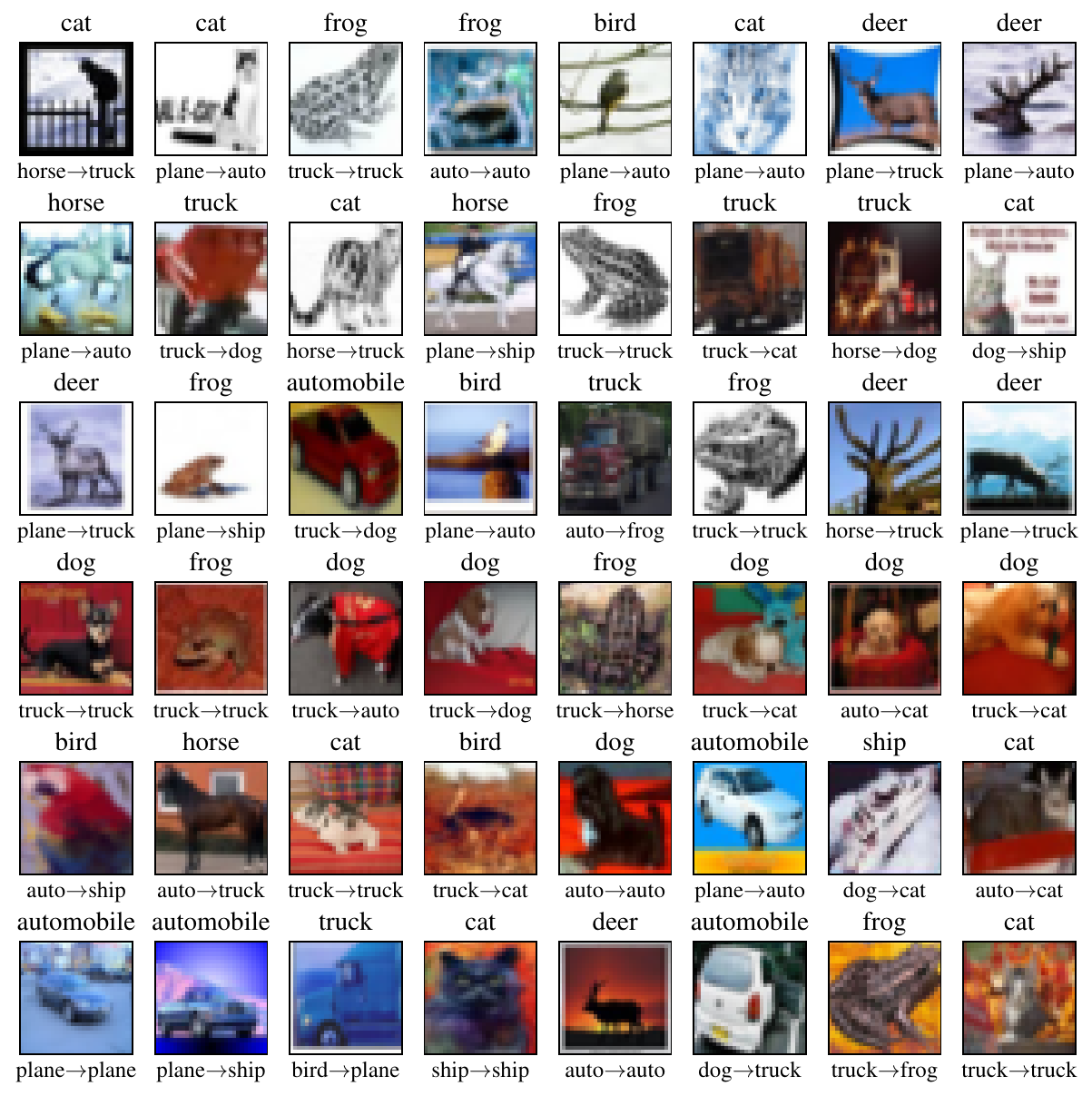}
        \caption{Step 250 to 251}
    \end{subfigure}
    \vskip\baselineskip
    \begin{subfigure}[b]{.475\linewidth}
        \includegraphics[width = \linewidth]{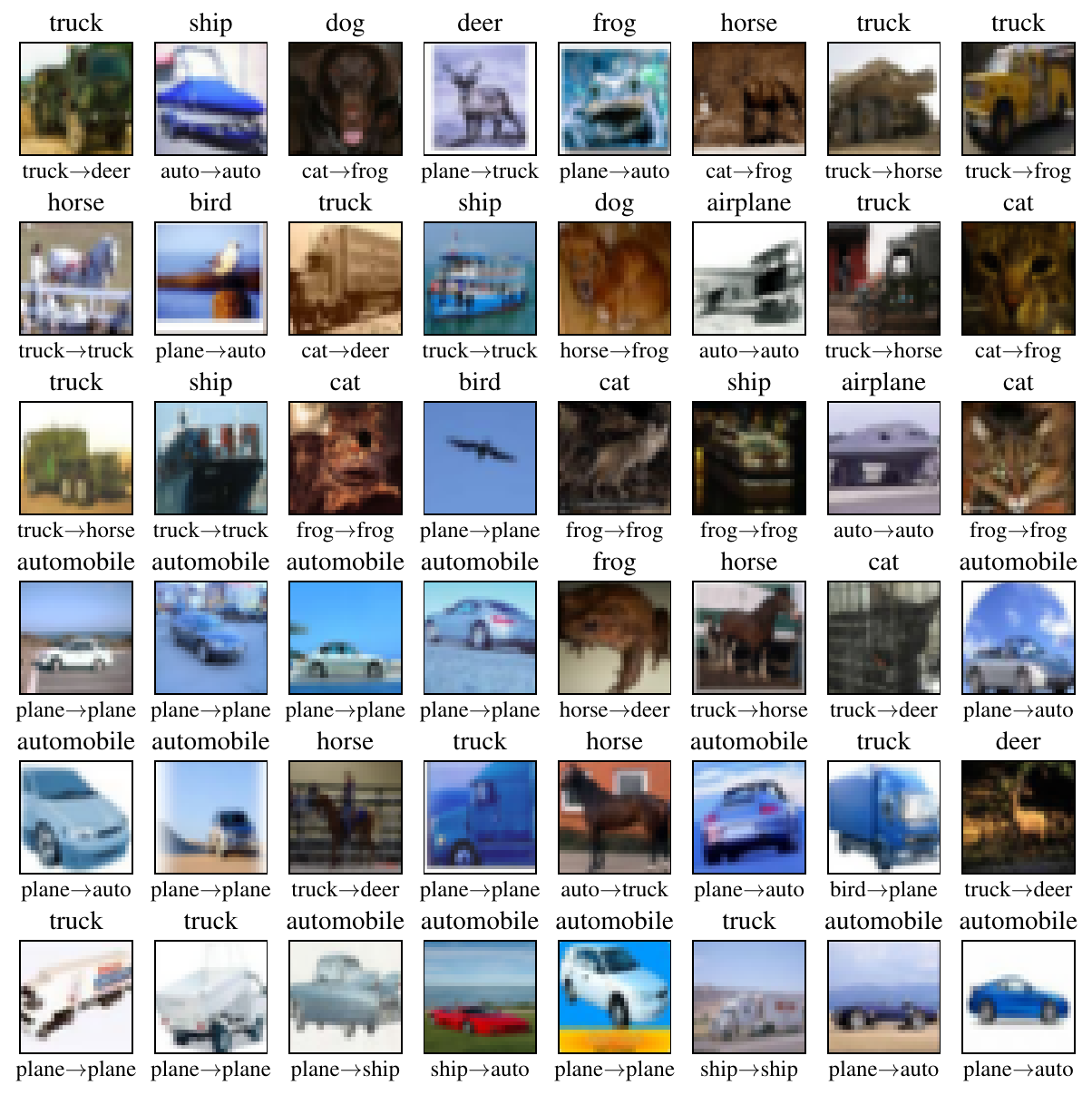}
        \caption{Step 500 to 501}
    \end{subfigure}
    \hfill
    \begin{subfigure}[b]{.475\linewidth}
        \includegraphics[width = \linewidth]{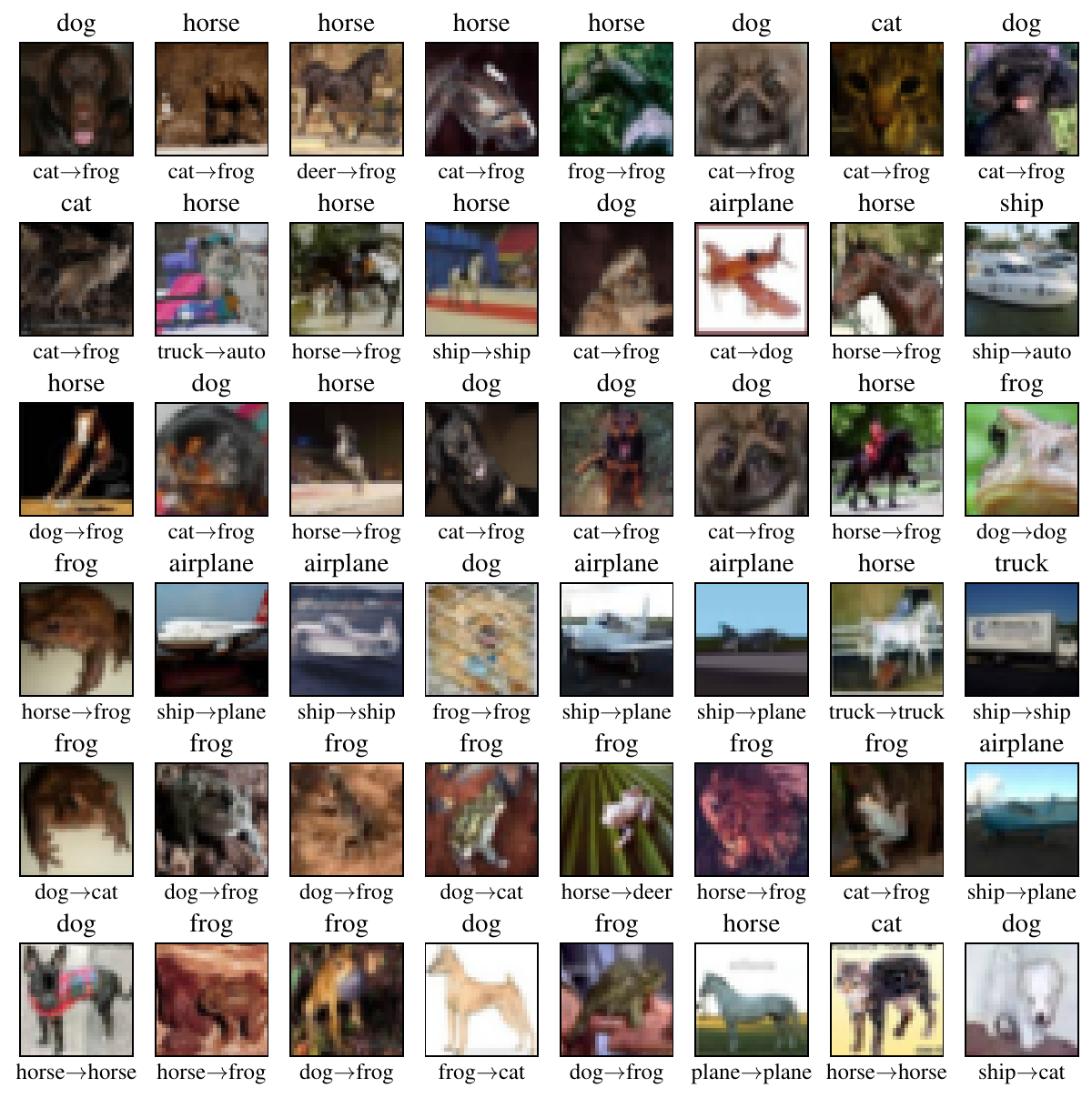}
        \caption{Step 750 to 751}
    \end{subfigure}
    \caption{\textbf{(ResNet-18, seed 1)} Images with the most positive (top 3 rows) and most negative (bottom 3 rows) change to training loss after steps 100, 250, 500, and 750. Each image has the true label (above) and the predicted label before and after the gradient update (below).}
\end{figure}

\begin{figure}[ht!]
    \centering
    \begin{subfigure}[b]{.475\linewidth}
        \includegraphics[width = \linewidth]{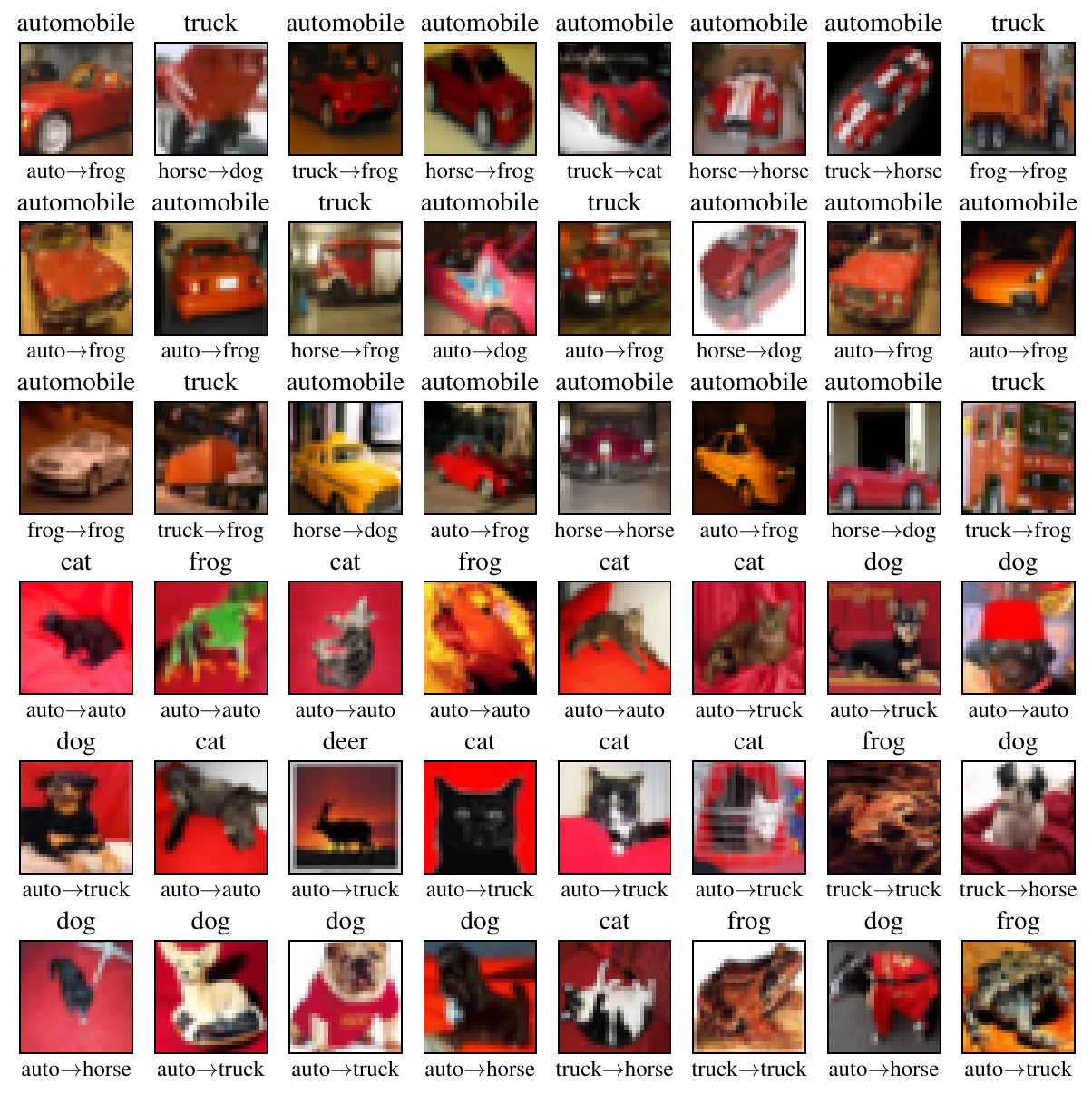}
        \caption{Step 100 to 101}
    \end{subfigure}
    \hfill
    \begin{subfigure}[b]{.475\linewidth}
        \includegraphics[width = \linewidth]{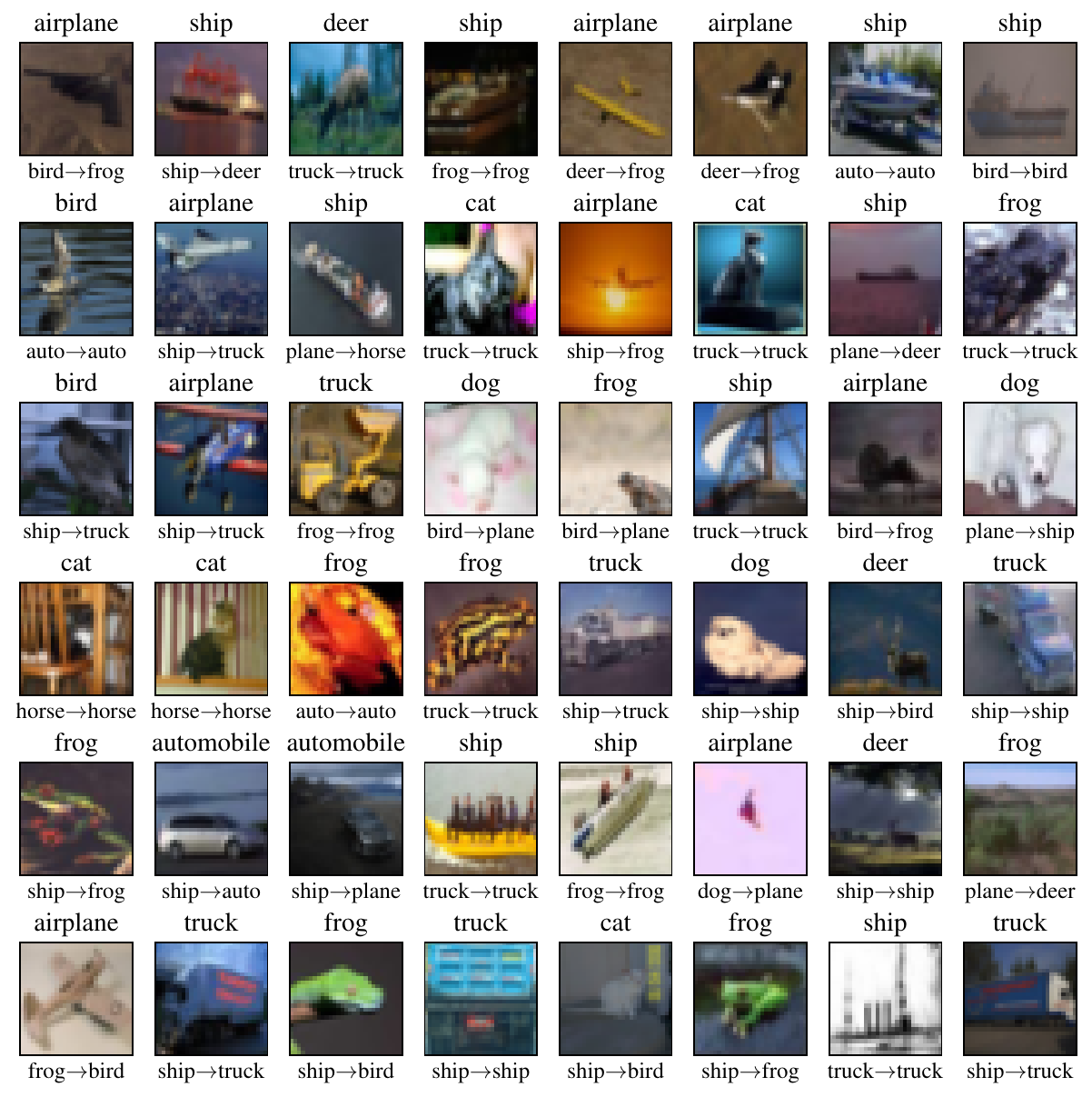}
        \caption{Step 250 to 251}
    \end{subfigure}
    \vskip\baselineskip
    \begin{subfigure}[b]{.475\linewidth}
        \includegraphics[width = \linewidth]{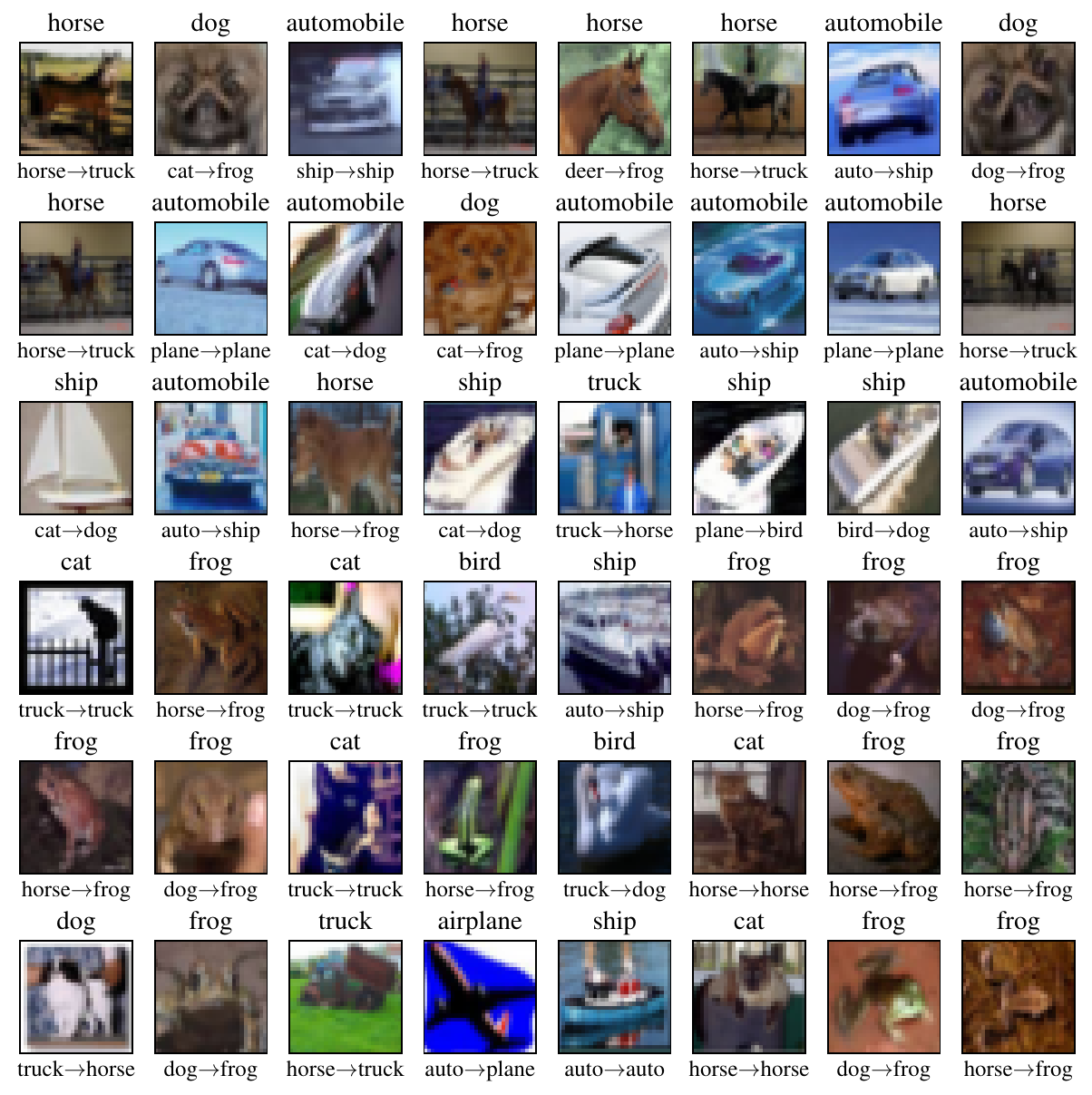}
        \caption{Step 500 to 501}
    \end{subfigure}
    \hfill
    \begin{subfigure}[b]{.475\linewidth}
        \includegraphics[width = \linewidth]{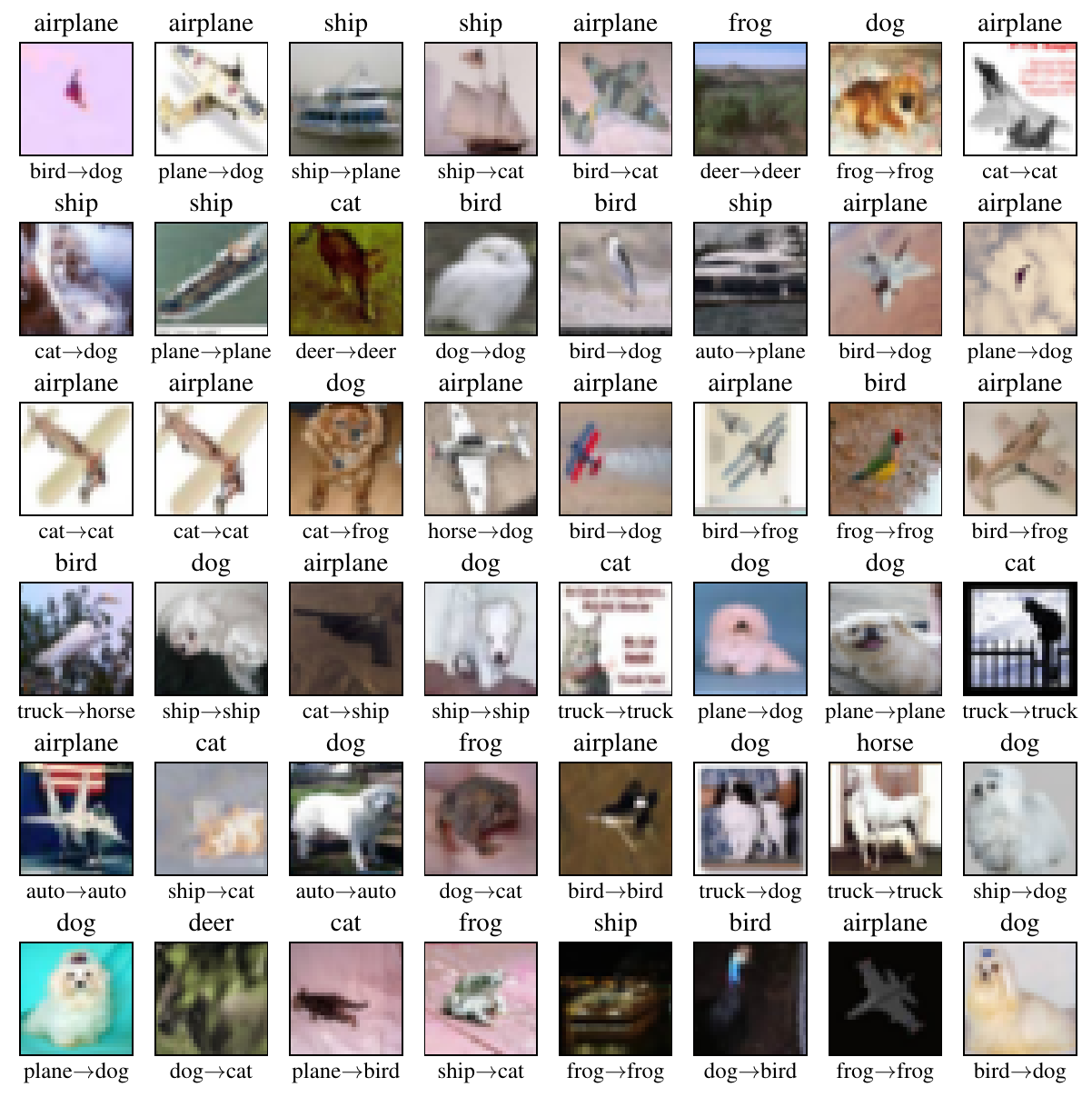}
        \caption{Step 750 to 751}
    \end{subfigure}
    \caption{\textbf{(ResNet-18, seed 2)} Images with the most positive (top 3 rows) and most negative (bottom 3 rows) change to training loss after steps 100, 250, 500, and 750. Each image has the true label (above) and the predicted label before and after the gradient update (below).}
\end{figure}

\begin{figure}[ht!]
    \centering
    \begin{subfigure}[b]{.475\linewidth}
        \includegraphics[width = \linewidth]{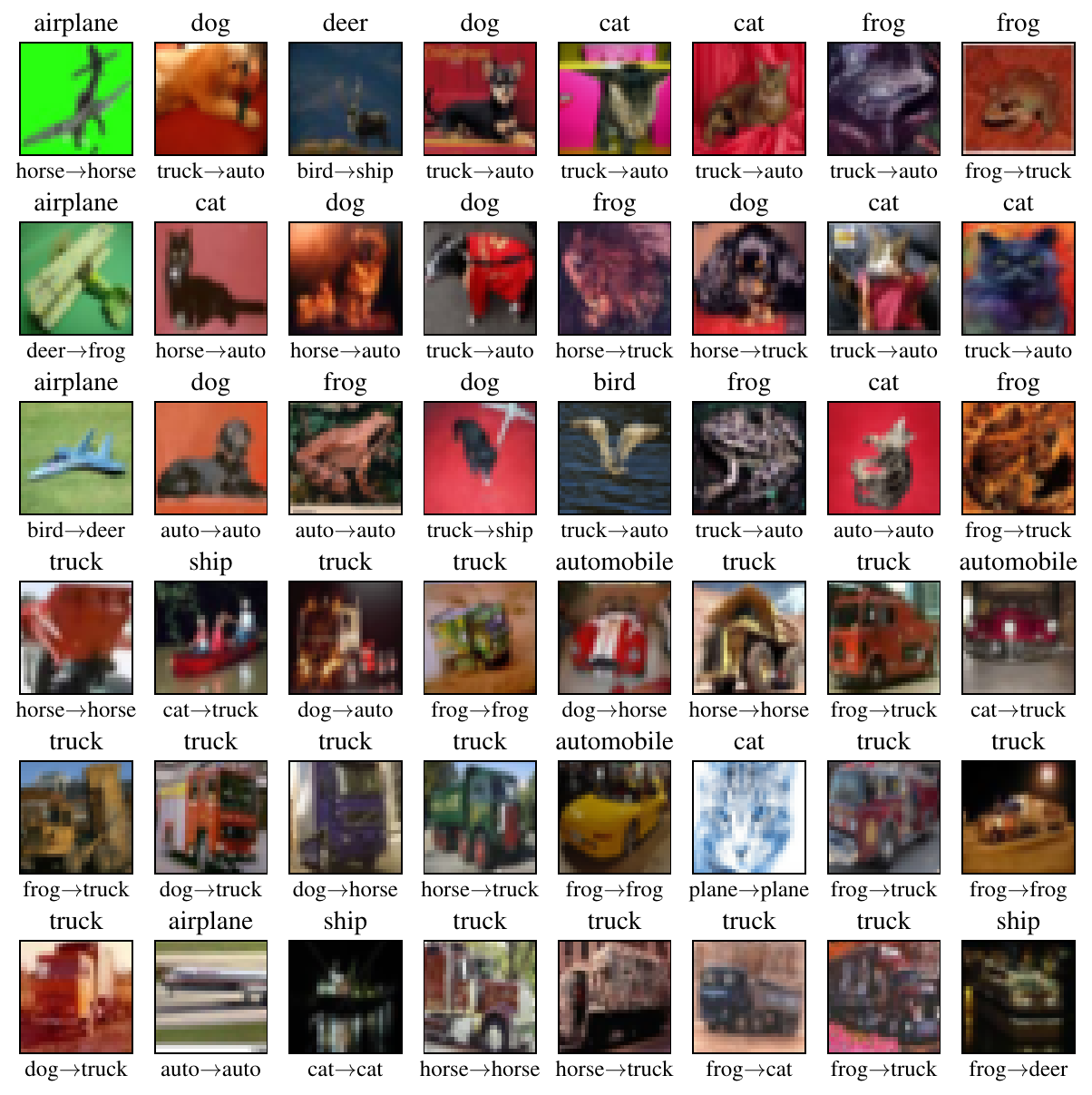}
        \caption{Step 100 to 101}
    \end{subfigure}
    \hfill
    \begin{subfigure}[b]{.475\linewidth}
        \includegraphics[width = \linewidth]{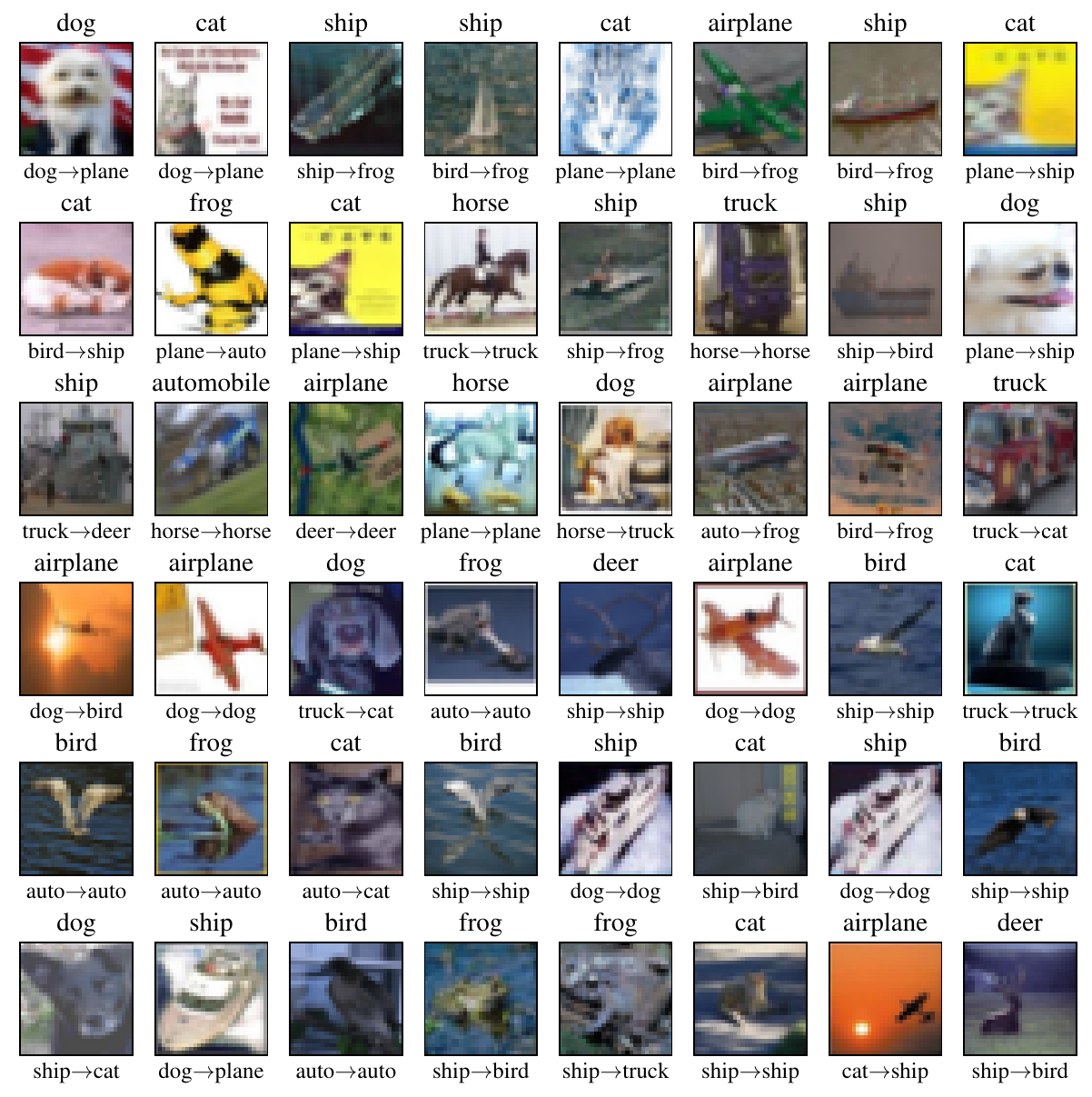}
        \caption{Step 250 to 251}
    \end{subfigure}
    \vskip\baselineskip
    \begin{subfigure}[b]{.475\linewidth}
        \includegraphics[width = \linewidth]{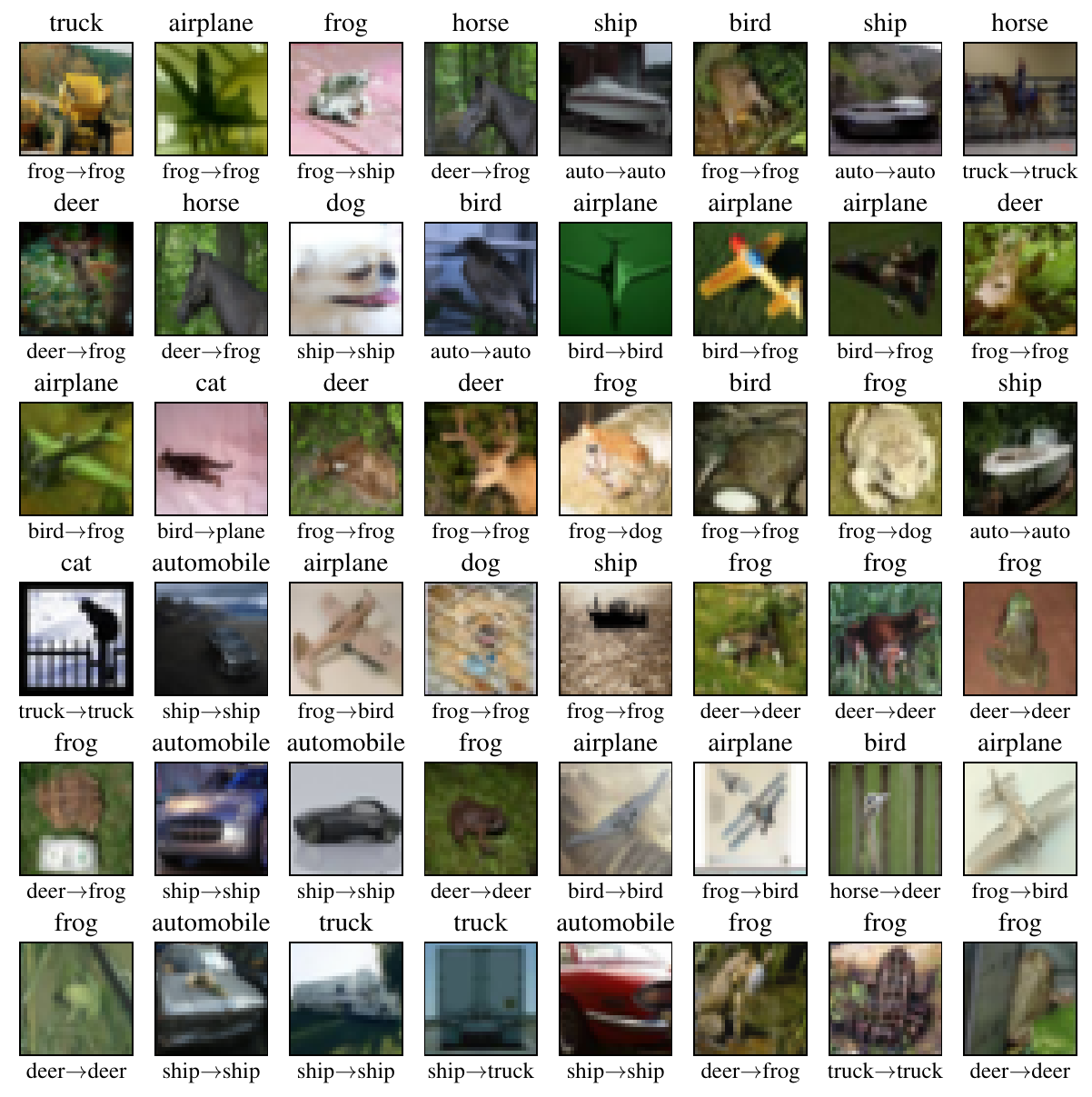}
        \caption{Step 500 to 501}
    \end{subfigure}
    \hfill
    \begin{subfigure}[b]{.475\linewidth}
        \includegraphics[width = \linewidth]{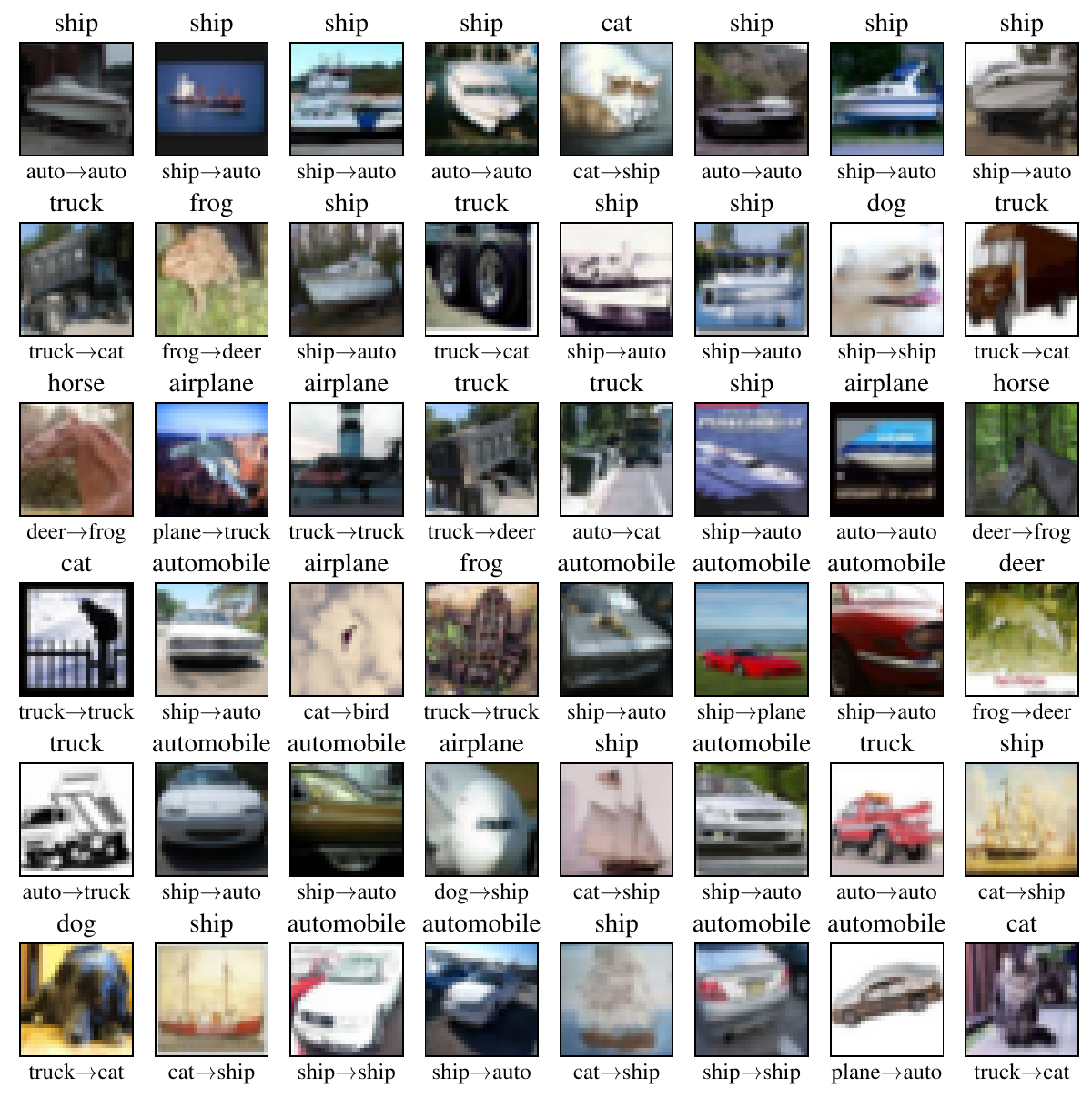}
        \caption{Step 750 to 751}
    \end{subfigure}
    \caption{\textbf{(ResNet-18, seed 3)} Images with the most positive (top 3 rows) and most negative (bottom 3 rows) change to training loss after steps 100, 250, 500, and 750. Each image has the true label (above) and the predicted label before and after the gradient update (below).}
\end{figure}

\begin{figure}[ht!]
    \centering
    \begin{subfigure}[b]{.475\linewidth}
        \includegraphics[width = \linewidth]{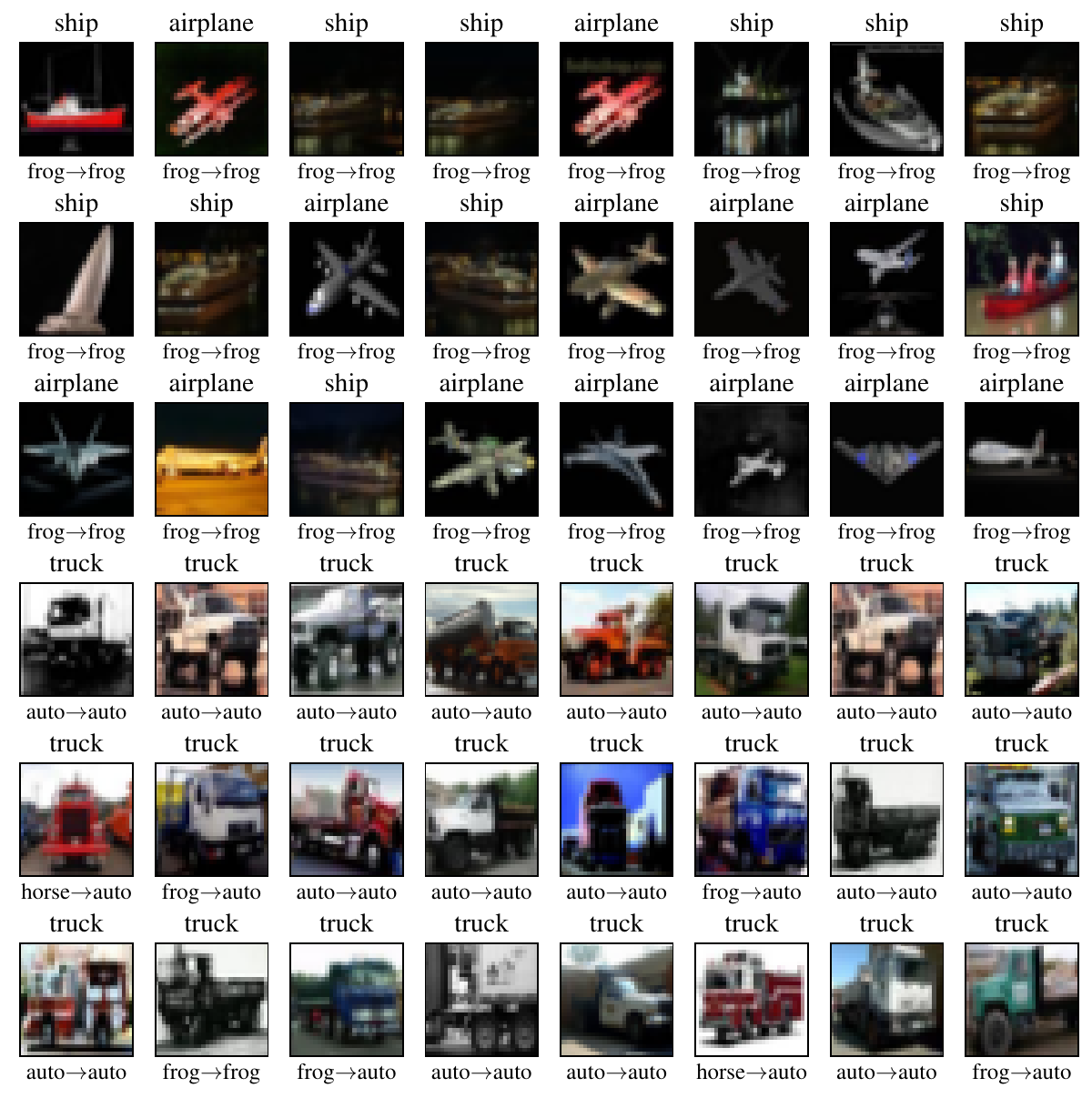}
        \caption{Step 100 to 101}
    \end{subfigure}
    \hfill
    \begin{subfigure}[b]{.475\linewidth}
        \includegraphics[width = \linewidth]{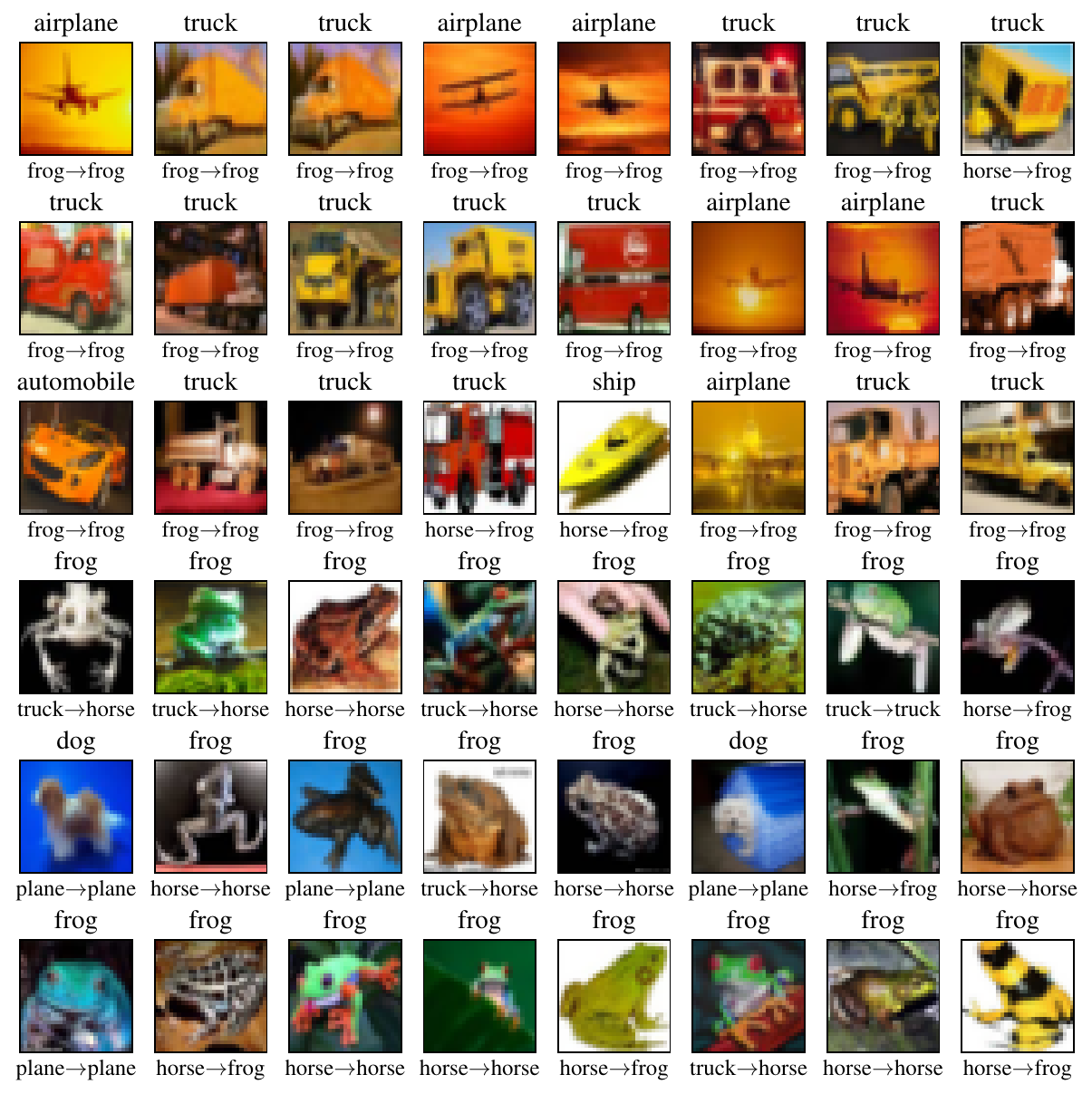}
        \caption{Step 250 to 251}
    \end{subfigure}
    \vskip\baselineskip
    \begin{subfigure}[b]{.475\linewidth}
        \includegraphics[width = \linewidth]{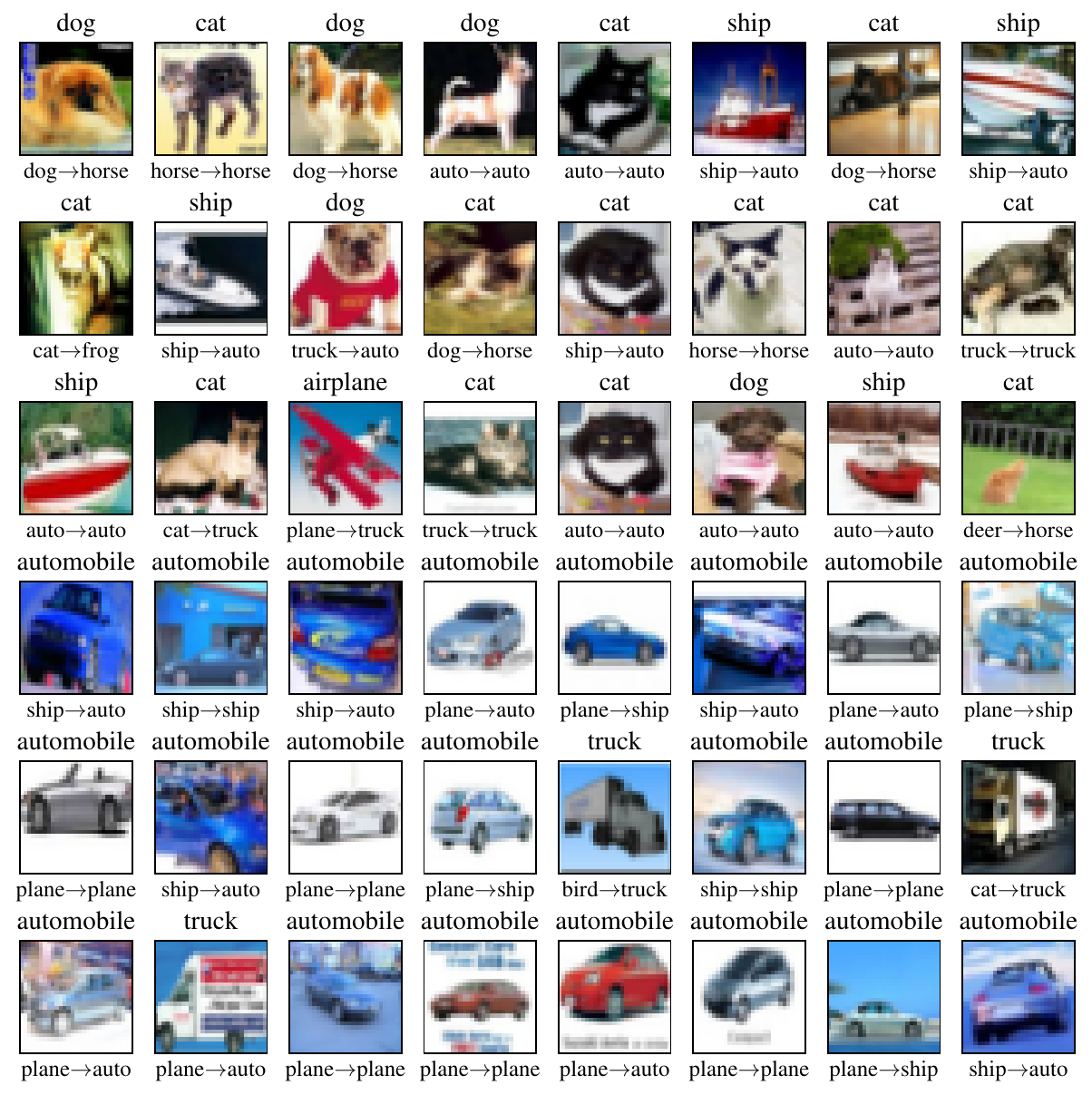}
        \caption{Step 500 to 501}
    \end{subfigure}
    \hfill
    \begin{subfigure}[b]{.475\linewidth}
        \includegraphics[width = \linewidth]{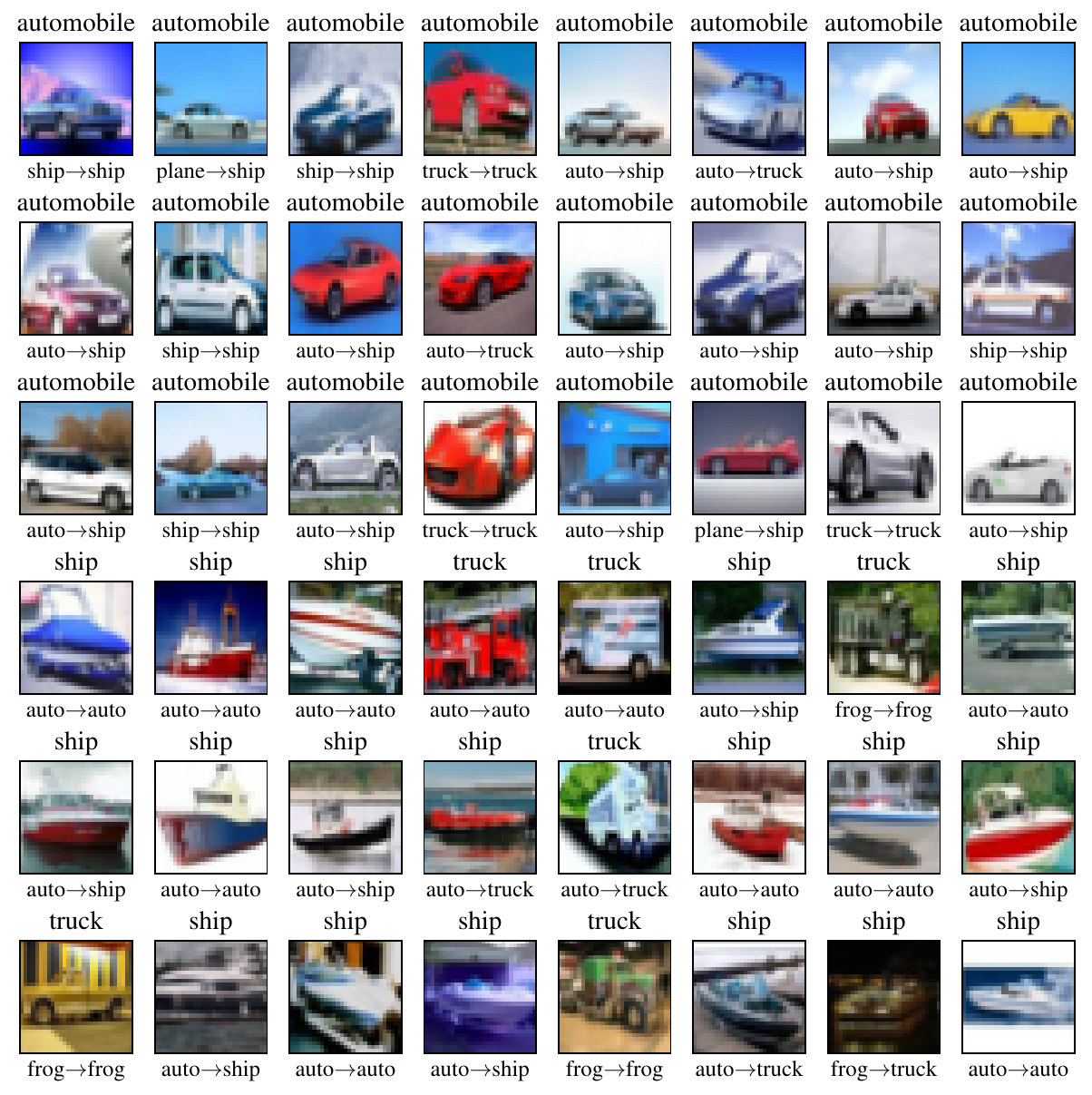}
        \caption{Step 750 to 751}
    \end{subfigure}
    \caption{\textbf{(VGG-11, seed 1)} Images with the most positive (top 3 rows) and most negative (bottom 3 rows) change to training loss after steps 100, 250, 500, and 750. Each image has the true label (above) and the predicted label before and after the gradient update (below).}
\end{figure}

\begin{figure}[ht!]
    \centering
    \begin{subfigure}[b]{.475\linewidth}
        \includegraphics[width = \linewidth]{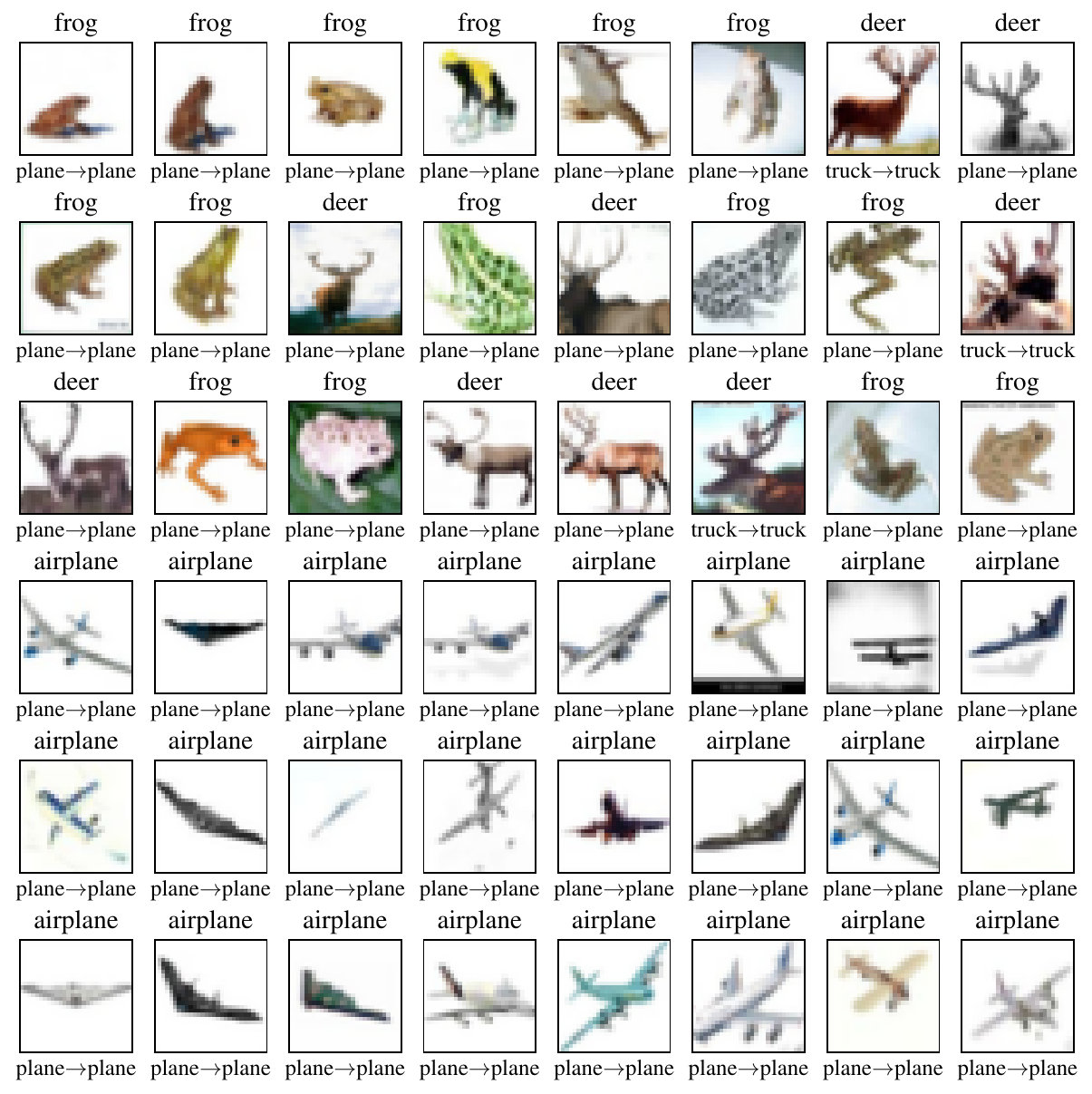}
        \caption{Step 100 to 101}
    \end{subfigure}
    \hfill
    \begin{subfigure}[b]{.475\linewidth}
        \includegraphics[width = \linewidth]{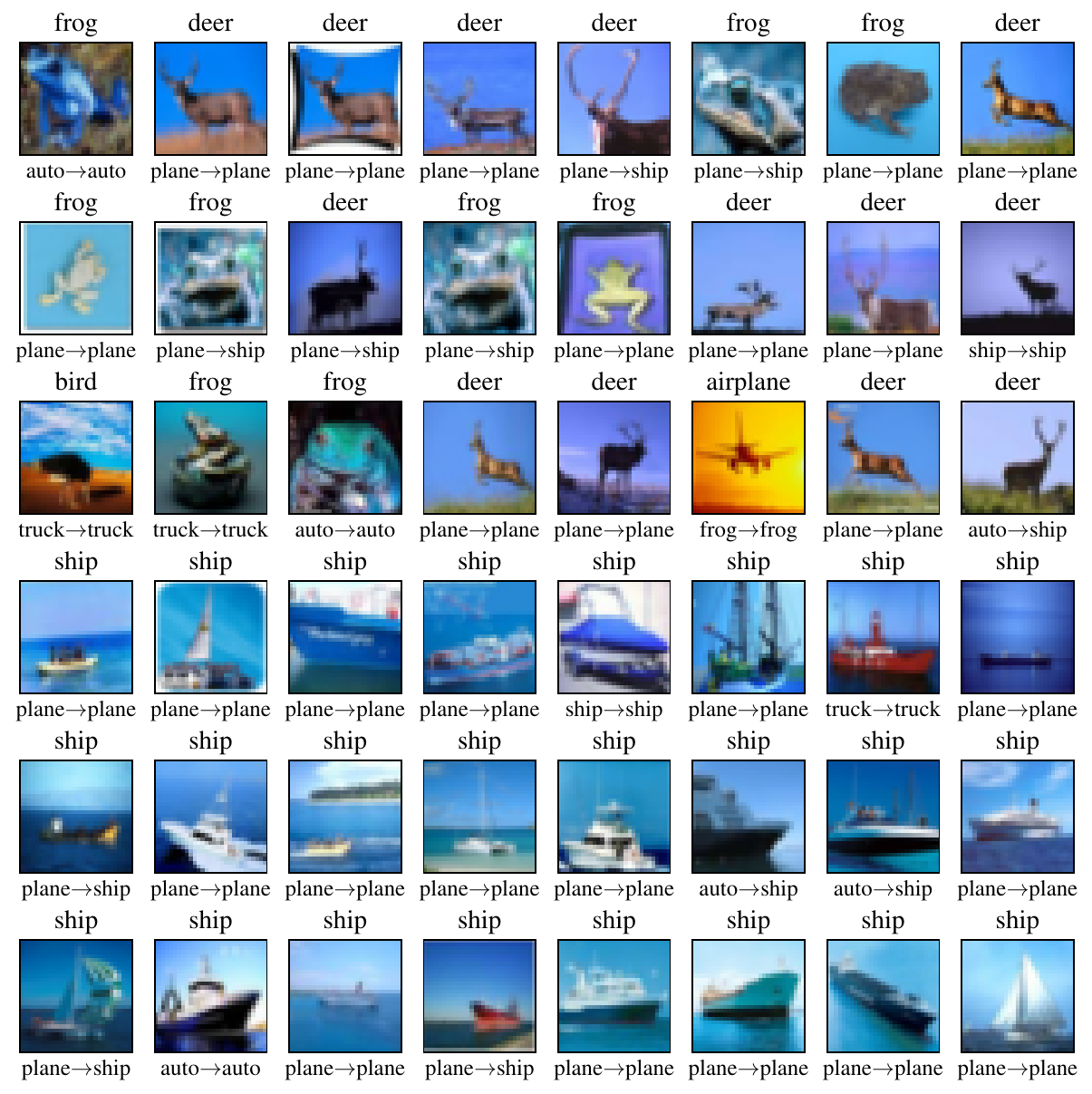}
        \caption{Step 250 to 251}
    \end{subfigure}
    \vskip\baselineskip
    \begin{subfigure}[b]{.475\linewidth}
        \includegraphics[width = \linewidth]{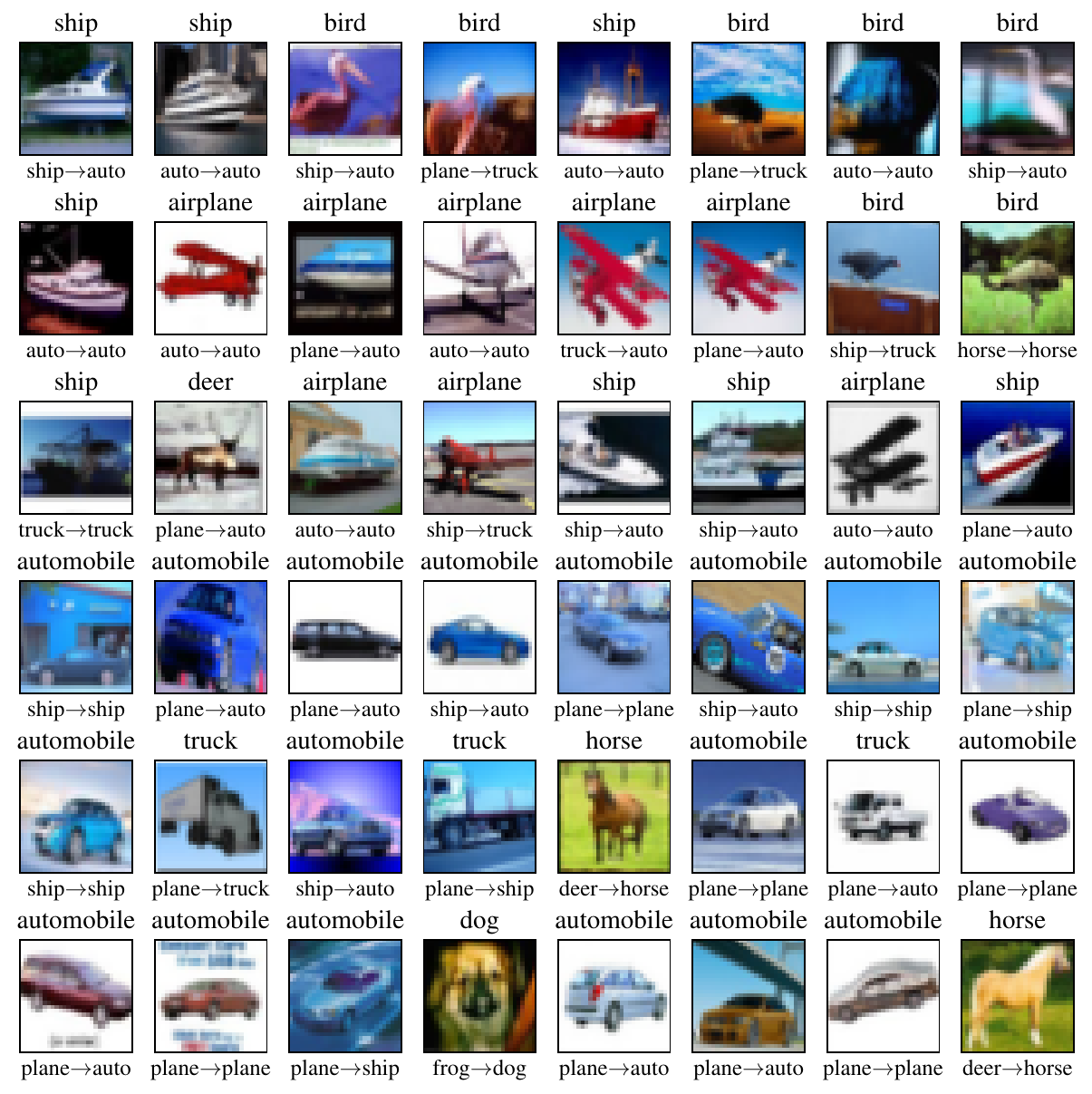}
        \caption{Step 500 to 501}
    \end{subfigure}
    \hfill
    \begin{subfigure}[b]{.475\linewidth}
        \includegraphics[width = \linewidth]{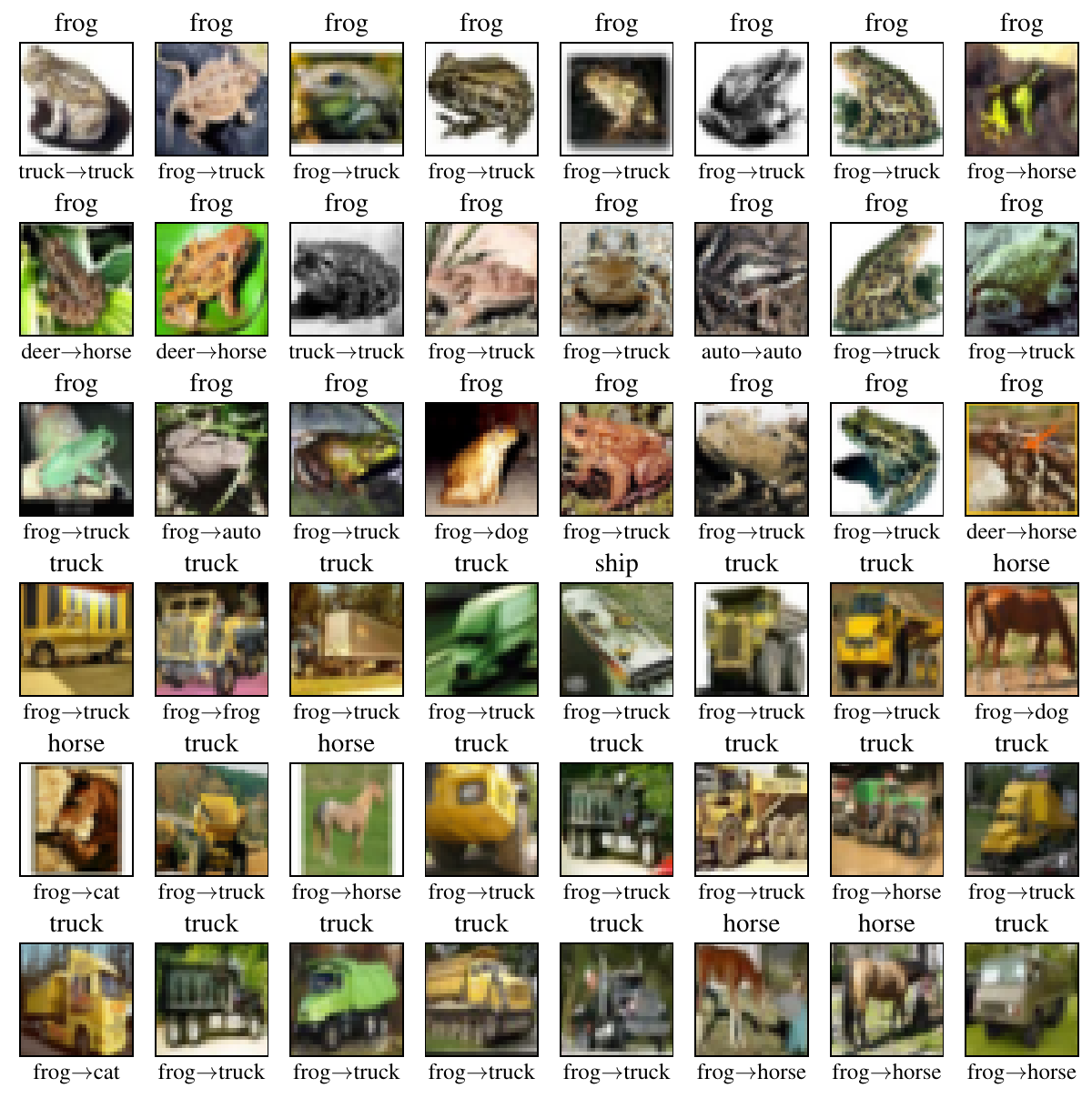}
        \caption{Step 750 to 751}
    \end{subfigure}
    \caption{\textbf{(VGG-11, seed 2)} Images with the most positive (top 3 rows) and most negative (bottom 3 rows) change to training loss after steps 100, 250, 500, and 750. Each image has the true label (above) and the predicted label before and after the gradient update (below).}
\end{figure}

\begin{figure}[ht!]
    \centering
    \begin{subfigure}[b]{.475\linewidth}
        \includegraphics[width = \linewidth]{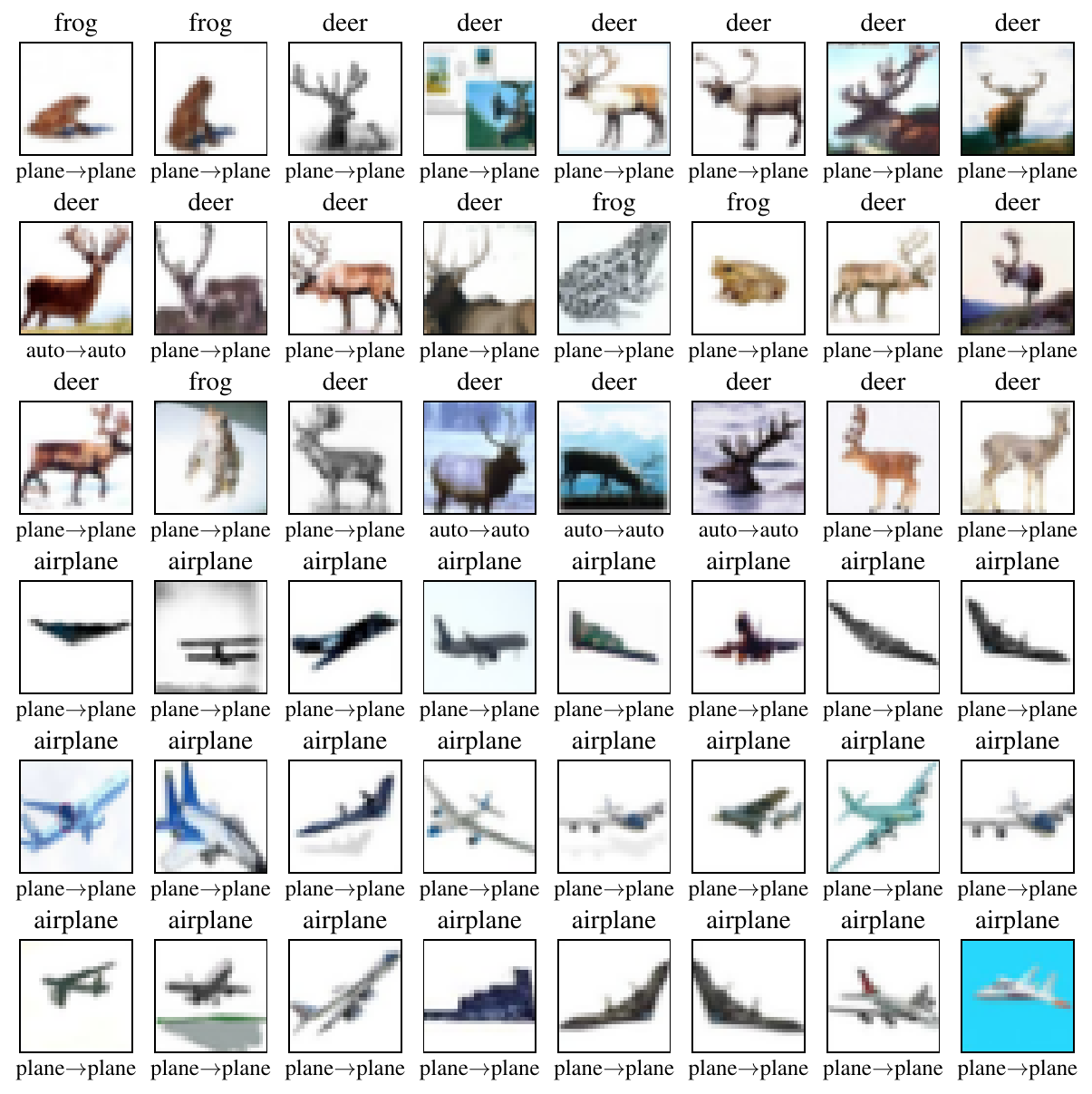}
        \caption{Step 100 to 101}
    \end{subfigure}
    \hfill
    \begin{subfigure}[b]{.475\linewidth}
        \includegraphics[width = \linewidth]{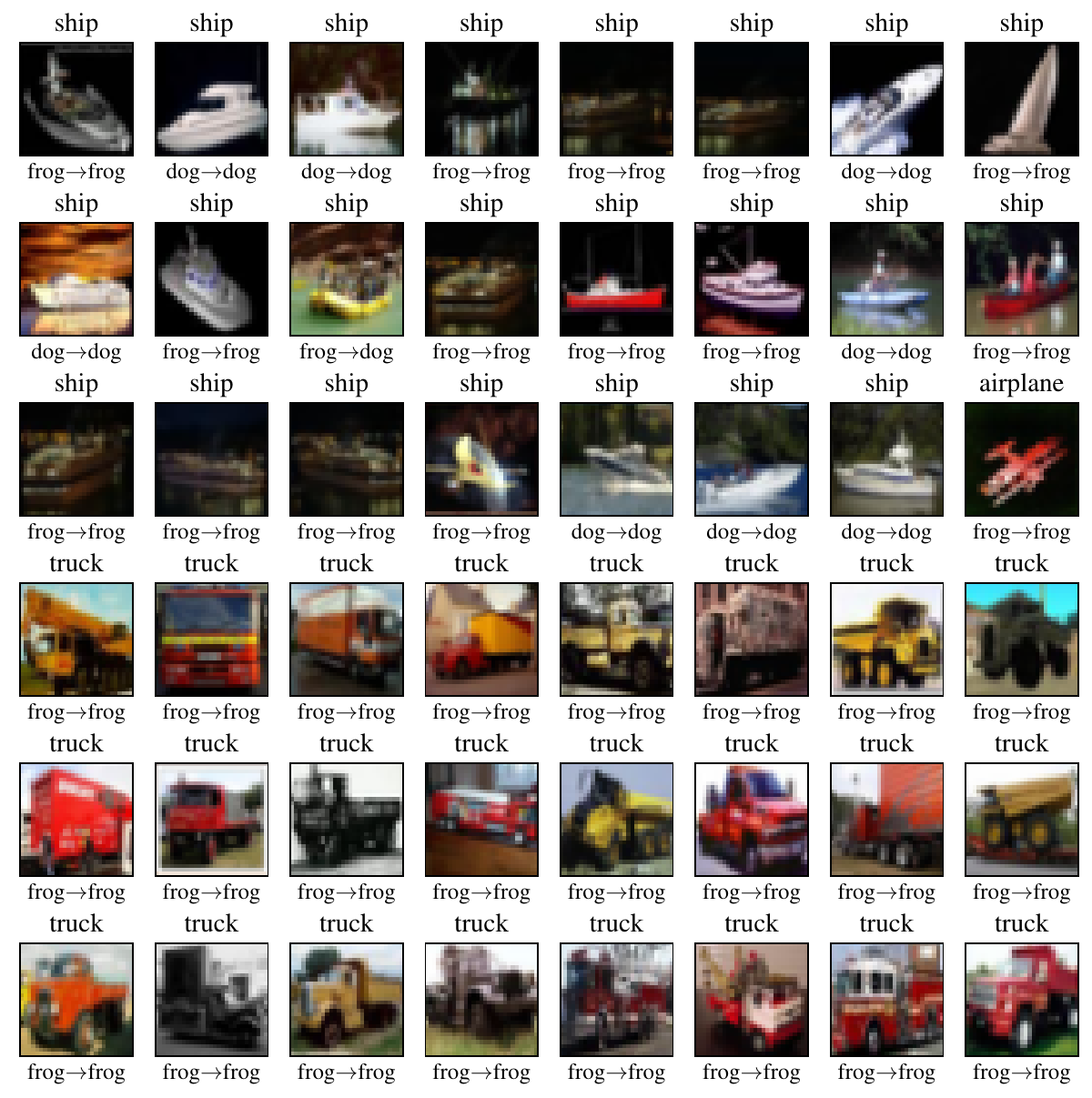}
        \caption{Step 250 to 251}
    \end{subfigure}
    \vskip\baselineskip
    \begin{subfigure}[b]{.475\linewidth}
        \includegraphics[width = \linewidth]{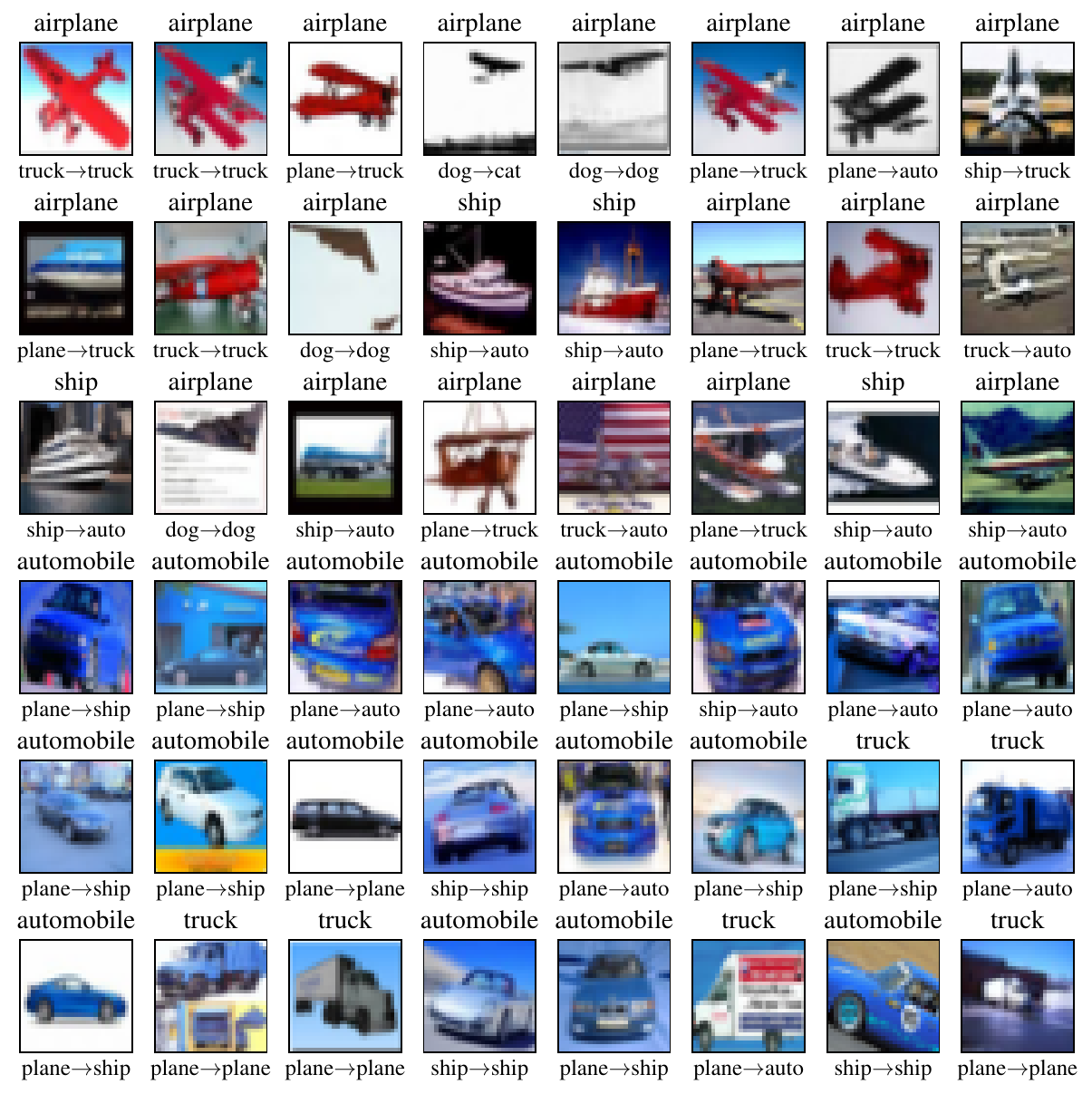}
        \caption{Step 500 to 501}
    \end{subfigure}
    \hfill
    \begin{subfigure}[b]{.475\linewidth}
        \includegraphics[width = \linewidth]{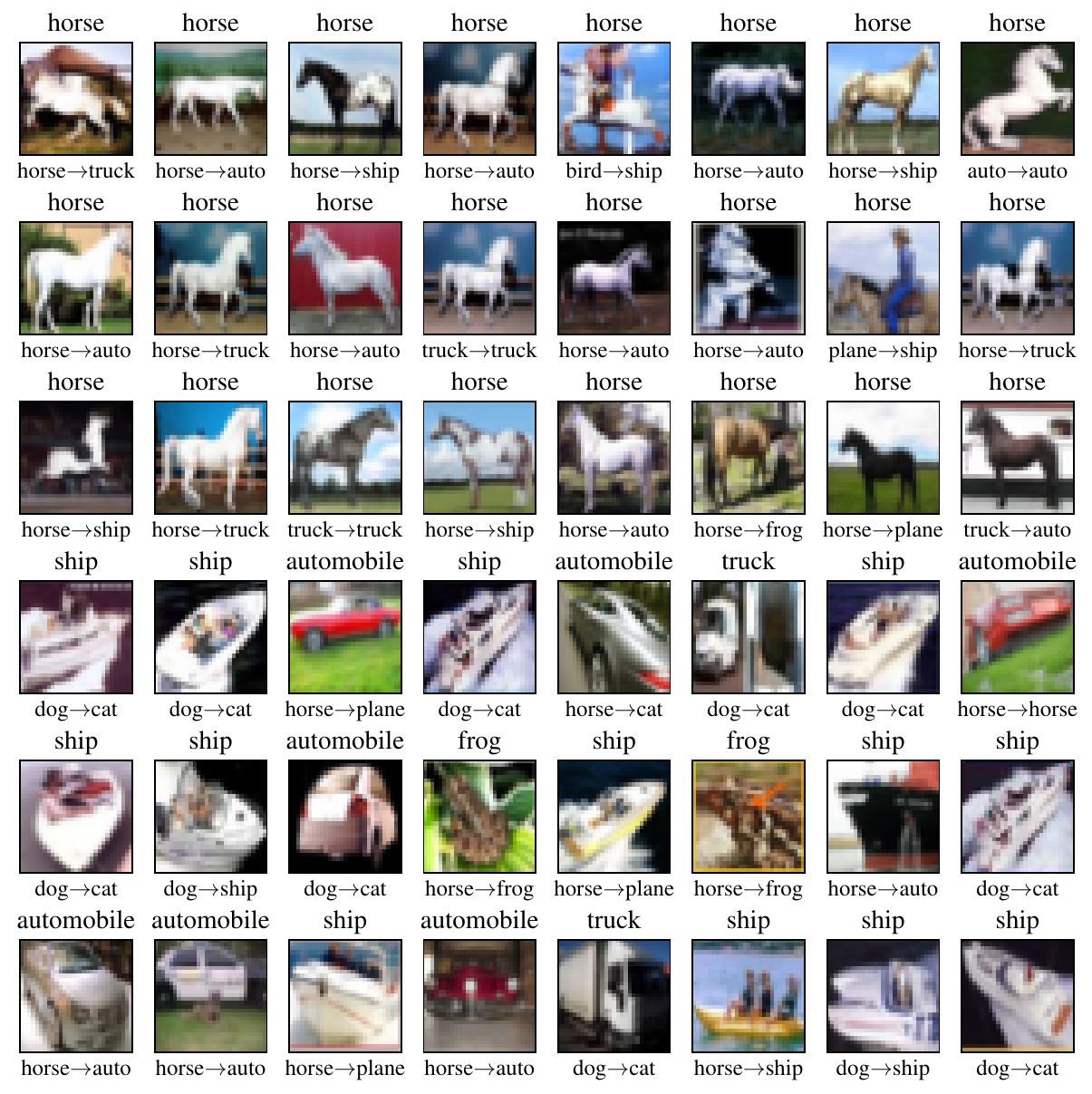}
        \caption{Step 750 to 751}
    \end{subfigure}
    \caption{\textbf{(VGG-11, seed 3)} Images with the most positive (top 3 rows) and most negative (bottom 3 rows) change to training loss after steps 100, 250, 500, and 750. Each image has the true label (above) and the predicted label before and after the gradient update (below).}
\end{figure}

\begin{figure}[ht!]
    \centering
    \begin{subfigure}[b]{.475\linewidth}
        \includegraphics[width = \linewidth]{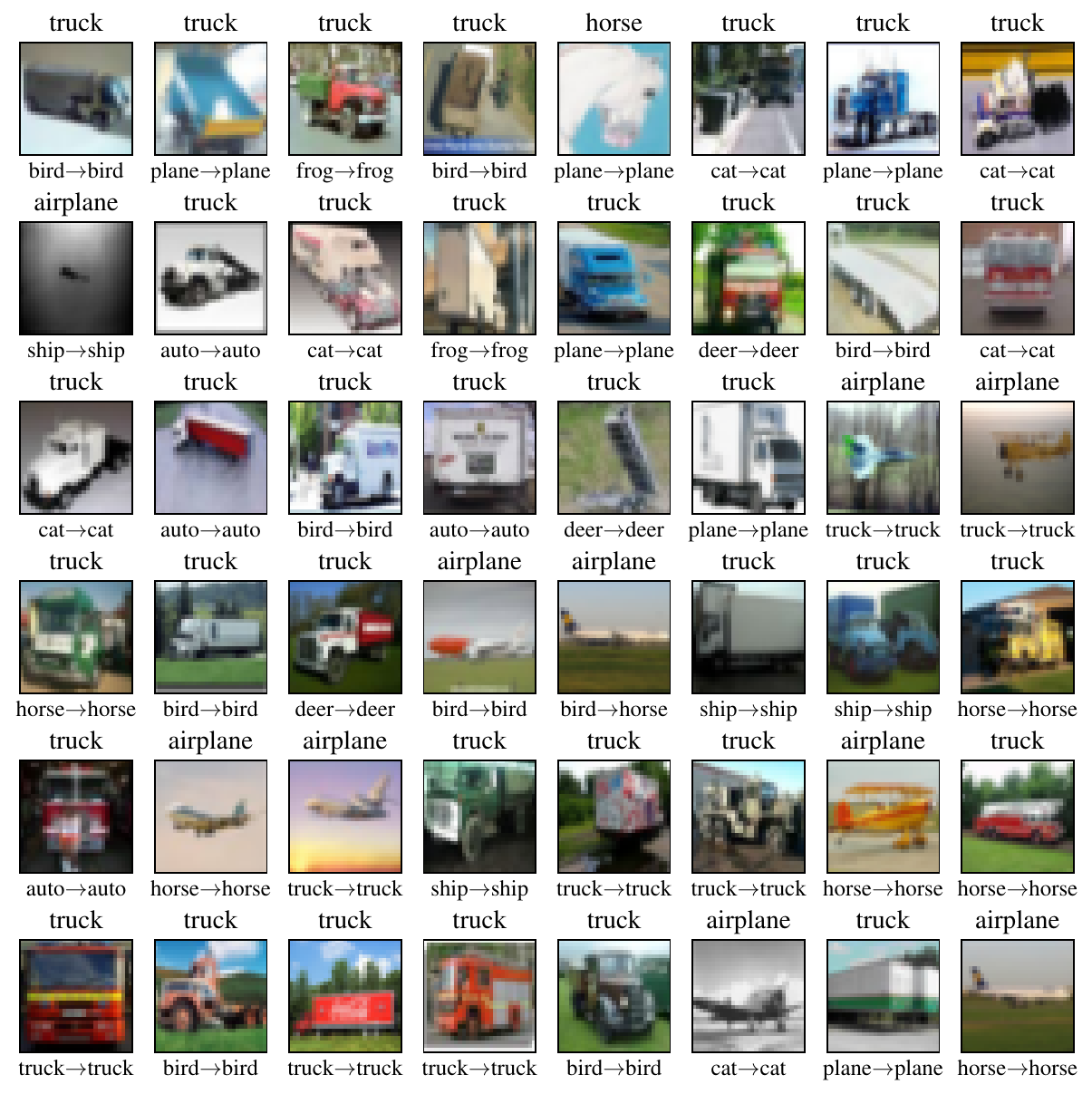}
        \caption{Step 100 to 101}
    \end{subfigure}
    \hfill
    \begin{subfigure}[b]{.475\linewidth}
        \includegraphics[width = \linewidth]{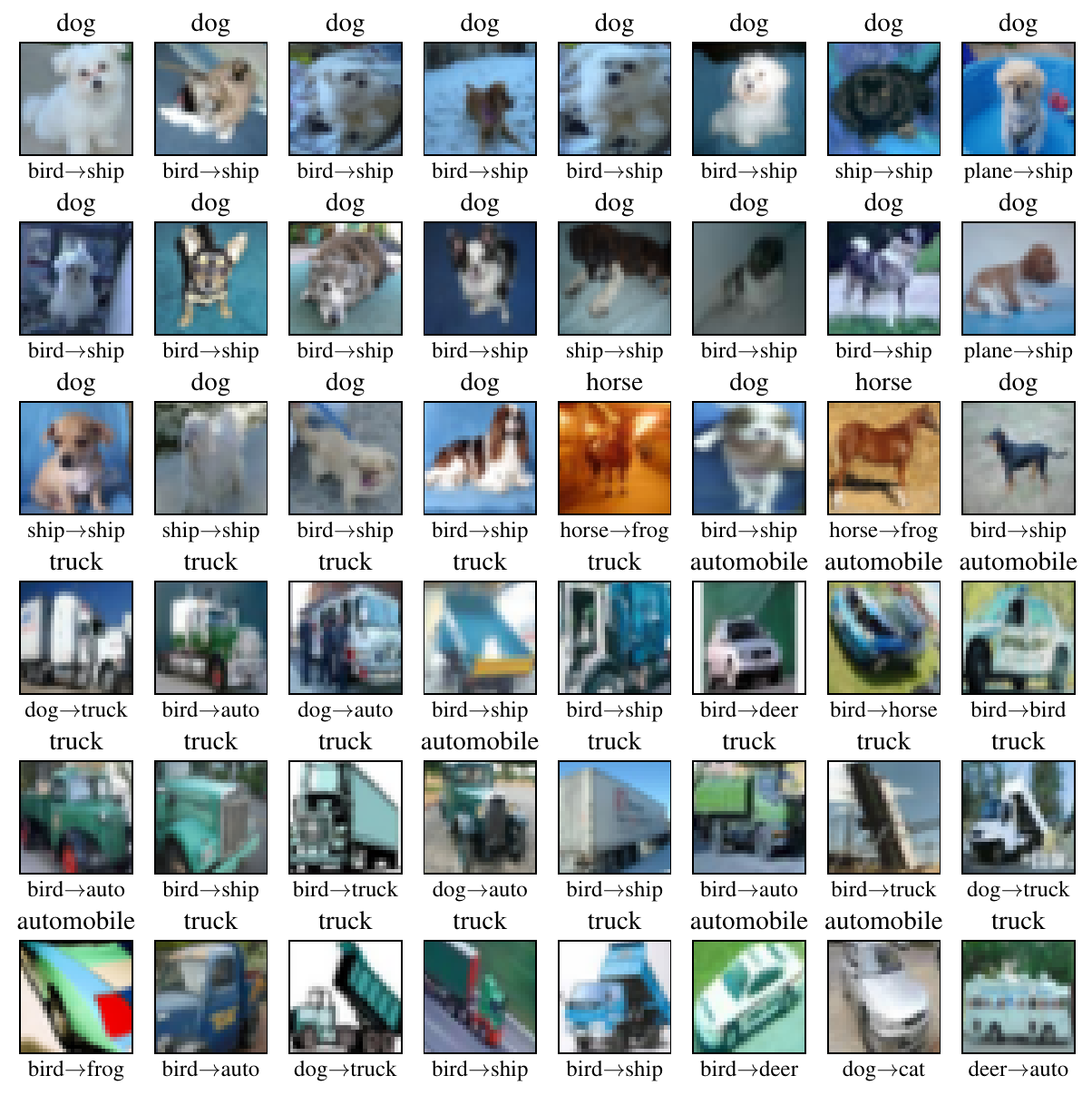}
        \caption{Step 250 to 251}
    \end{subfigure}
    \vskip\baselineskip
    \begin{subfigure}[b]{.475\linewidth}
        \includegraphics[width = \linewidth]{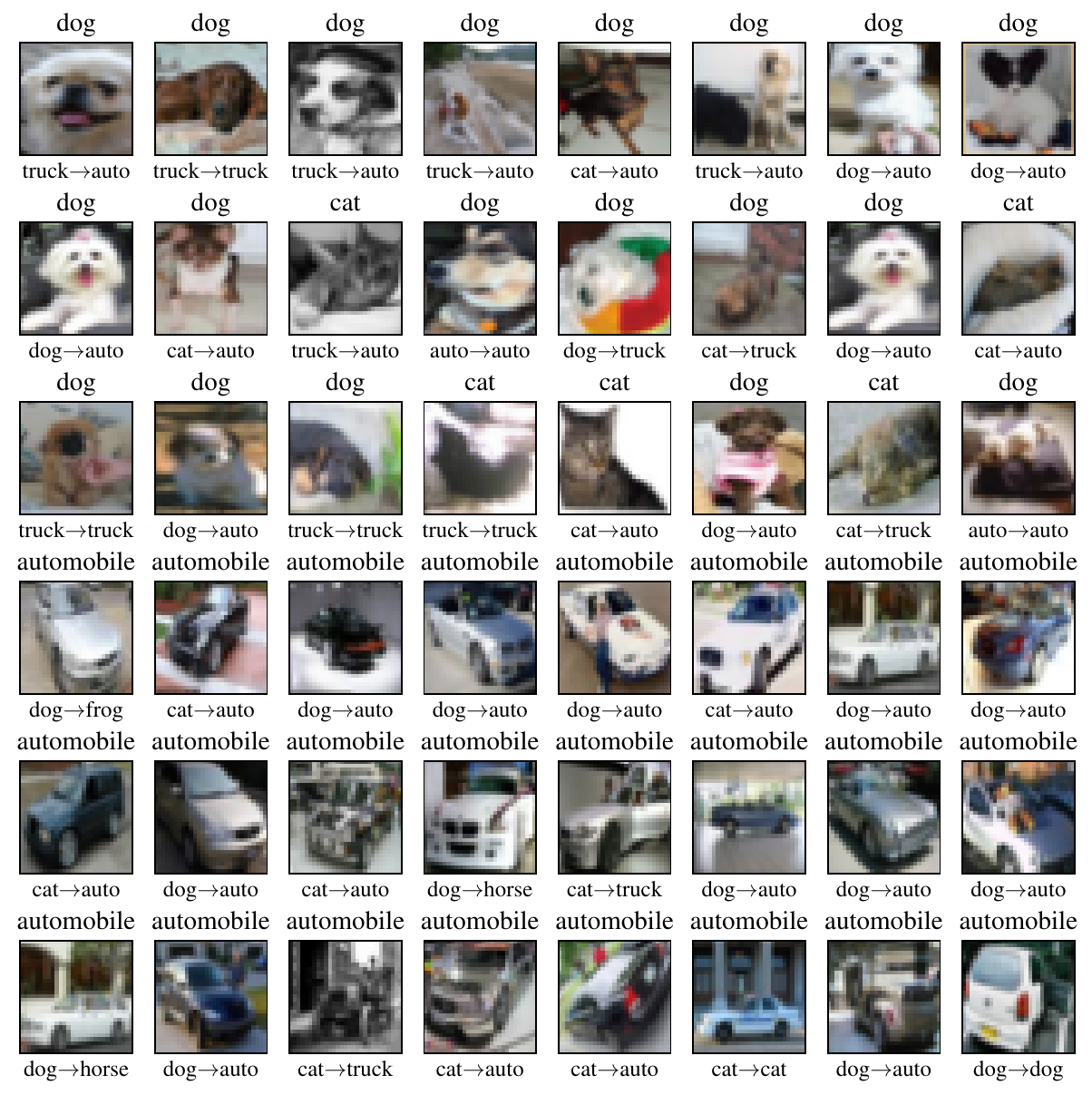}
        \caption{Step 500 to 501}
    \end{subfigure}
    \hfill
    \begin{subfigure}[b]{.475\linewidth}
        \includegraphics[width = \linewidth]{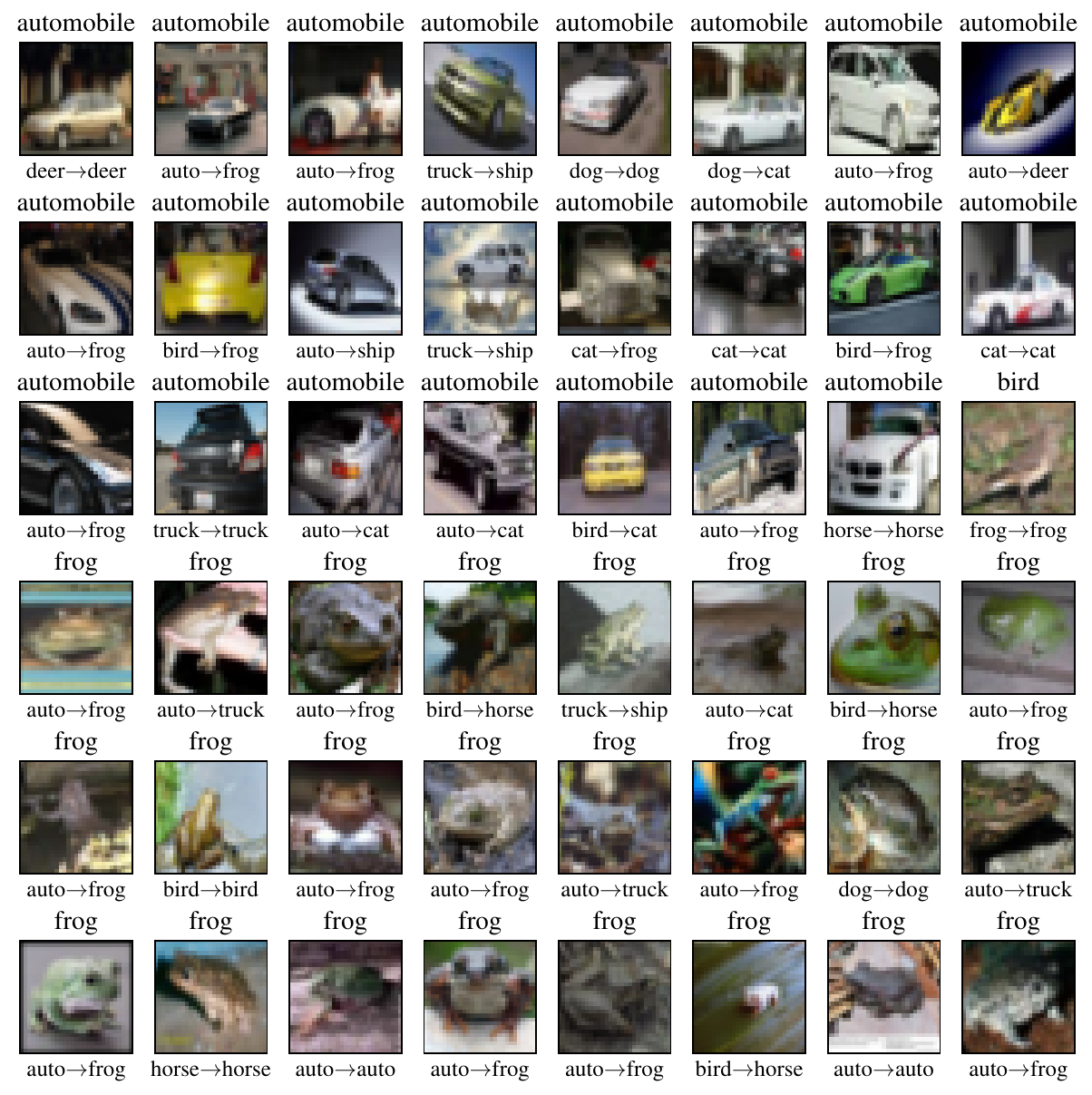}
        \caption{Step 750 to 751}
    \end{subfigure}
    \caption{\textbf{(ViT, seed 1)} Images with the most positive (top 3 rows) and most negative (bottom 3 rows) change to training loss after steps 100, 250, 500, and 750. Each image has the true label (above) and the predicted label before and after the gradient update (below).}
\end{figure}

\begin{figure}[ht!]
    \centering
    \begin{subfigure}[b]{.475\linewidth}
        \includegraphics[width = \linewidth]{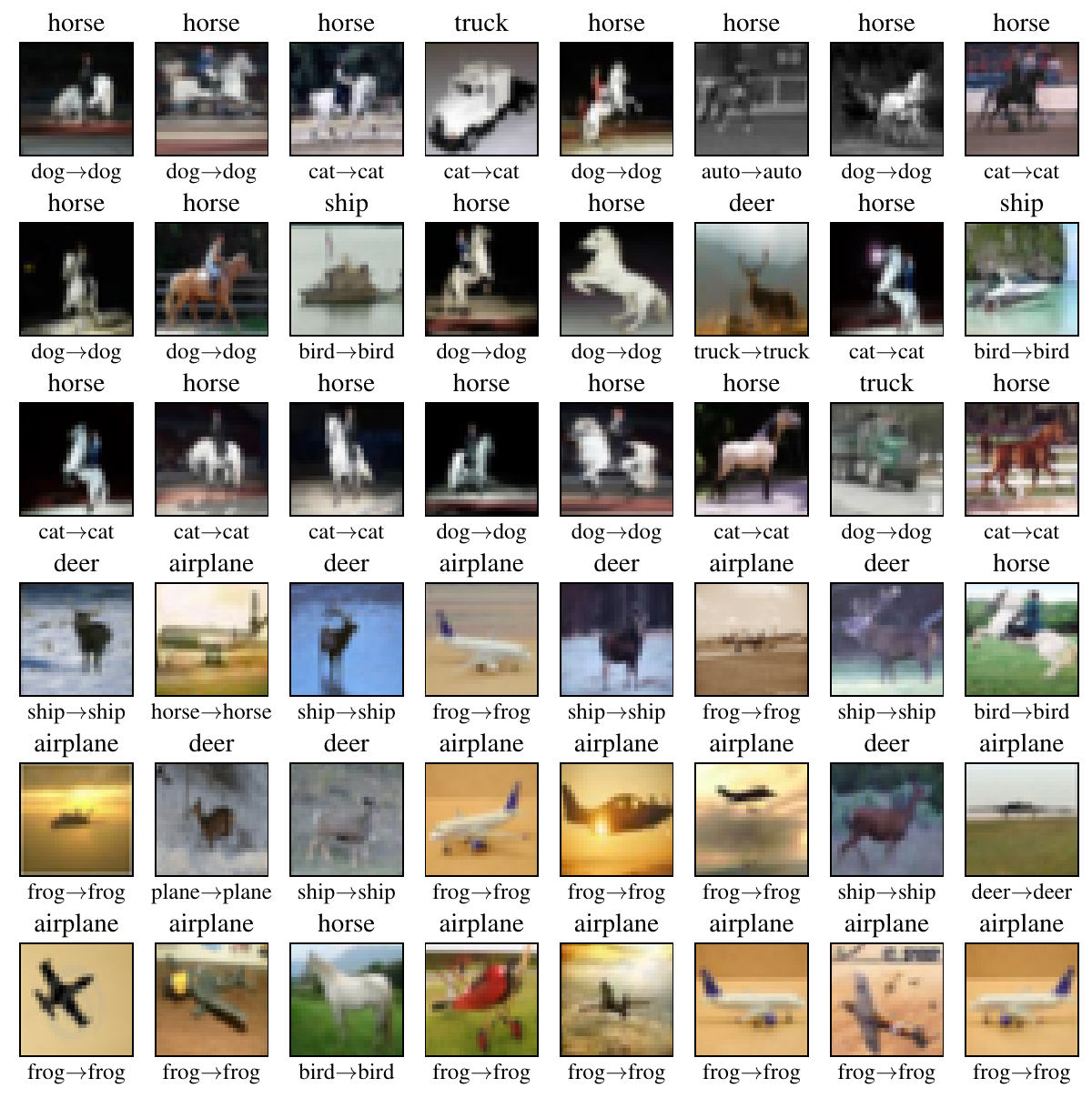}
        \caption{Step 100 to 101}
    \end{subfigure}
    \hfill
    \begin{subfigure}[b]{.475\linewidth}
        \includegraphics[width = \linewidth]{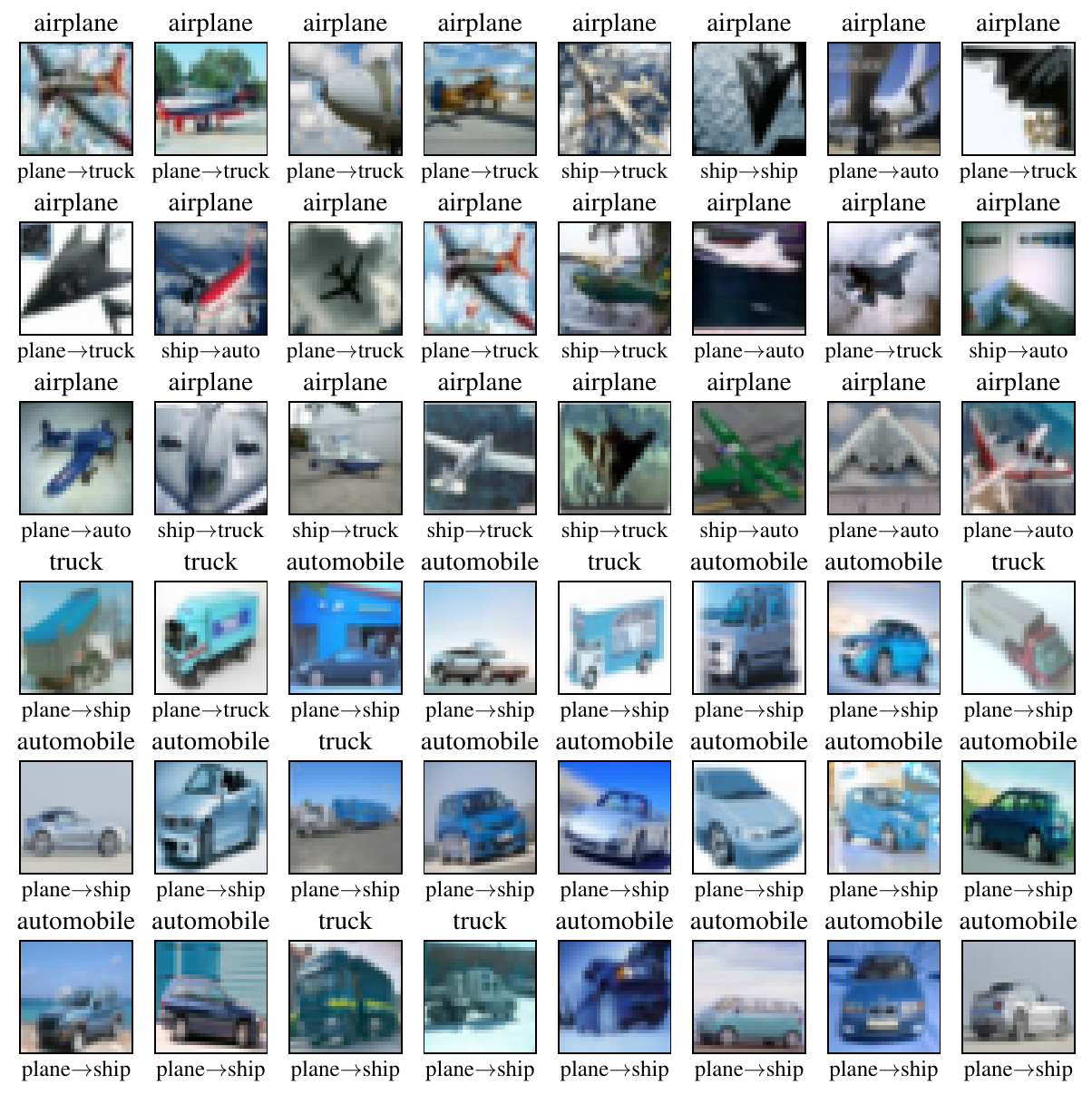}
        \caption{Step 250 to 251}
    \end{subfigure}
    \vskip\baselineskip
    \begin{subfigure}[b]{.475\linewidth}
        \includegraphics[width = \linewidth]{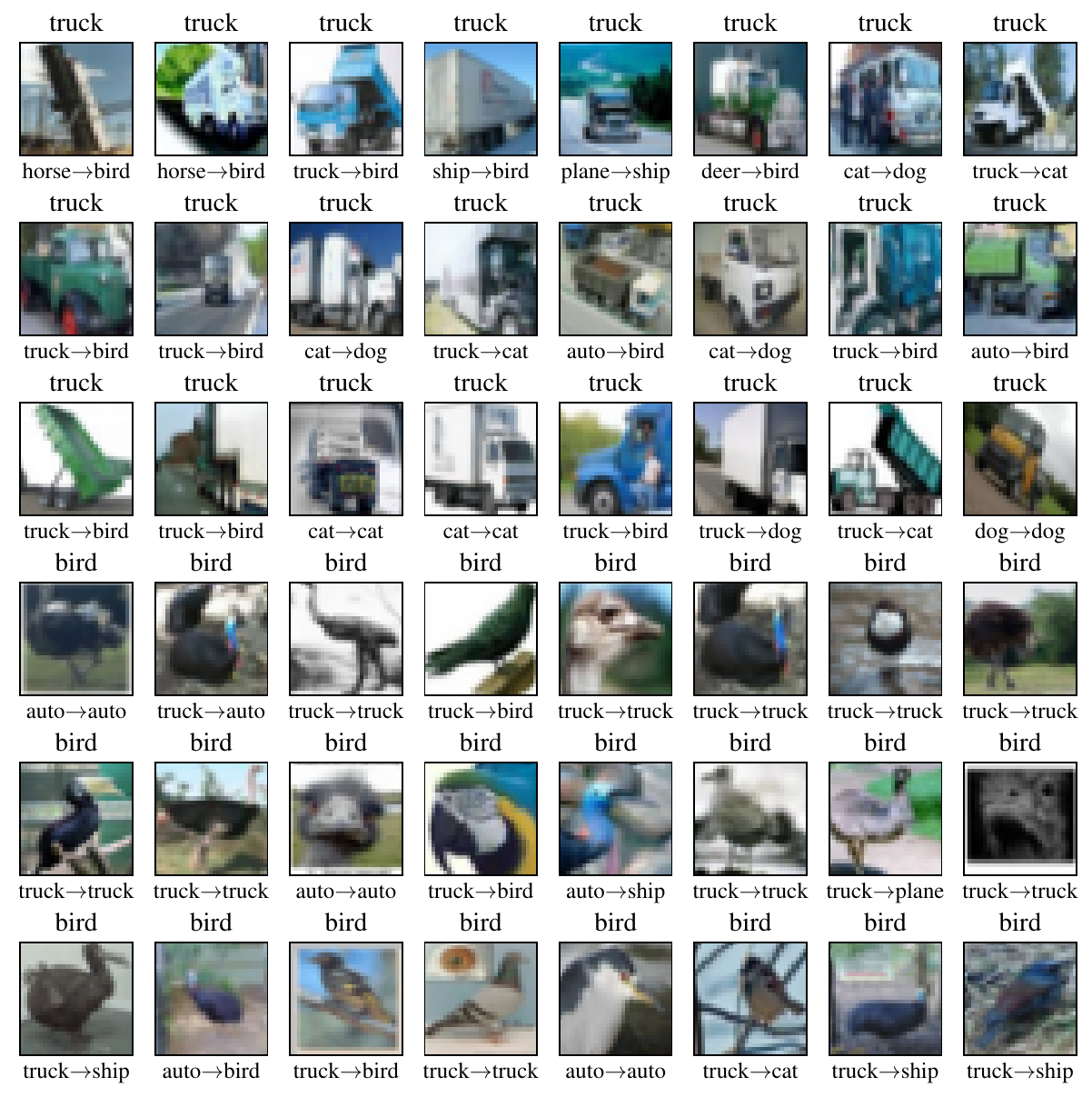}
        \caption{Step 500 to 501}
    \end{subfigure}
    \hfill
    \begin{subfigure}[b]{.475\linewidth}
        \includegraphics[width = \linewidth]{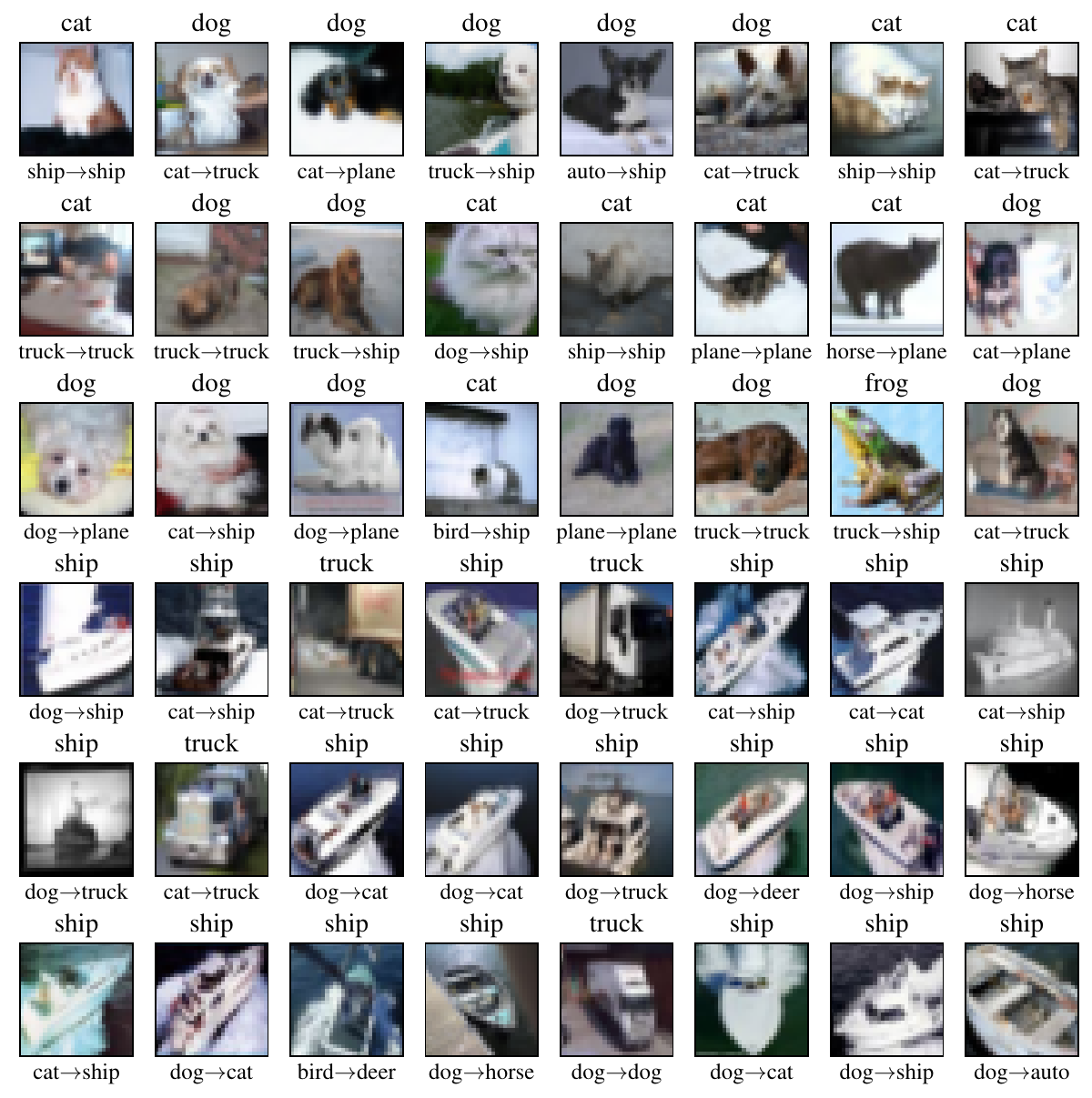}
        \caption{Step 750 to 751}
    \end{subfigure}
    \caption{\textbf{(ViT, seed 2)} Images with the most positive (top 3 rows) and most negative (bottom 3 rows) change to training loss after steps 100, 250, 500, and 750. Each image has the true label (above) and the predicted label before and after the gradient update (below).}
\end{figure}

\begin{figure}[ht!]
    \centering
    \begin{subfigure}[b]{.475\linewidth}
        \includegraphics[width = \linewidth]{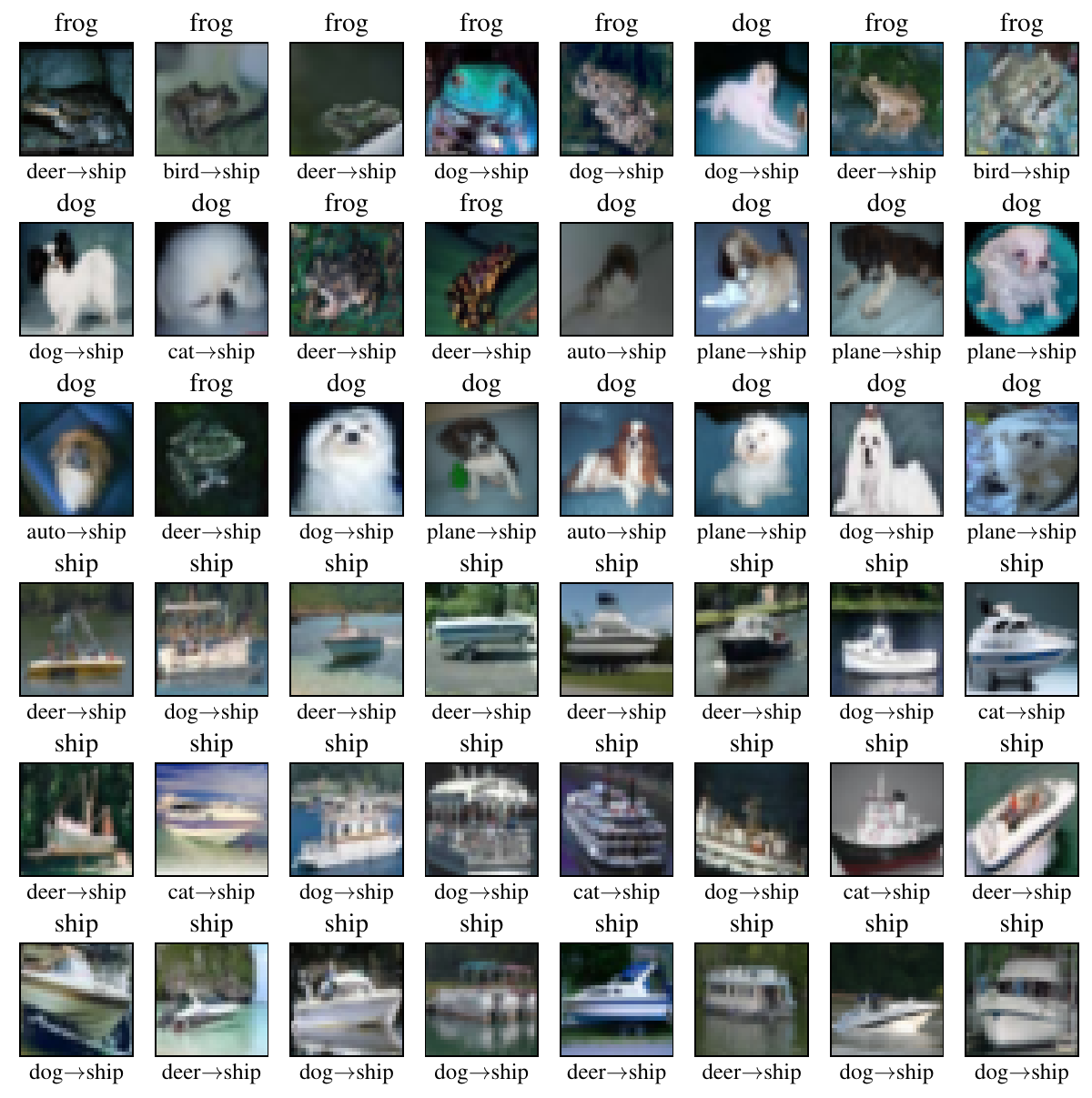}
        \caption{Step 100 to 101}
    \end{subfigure}
    \hfill
    \begin{subfigure}[b]{.475\linewidth}
        \includegraphics[width = \linewidth]{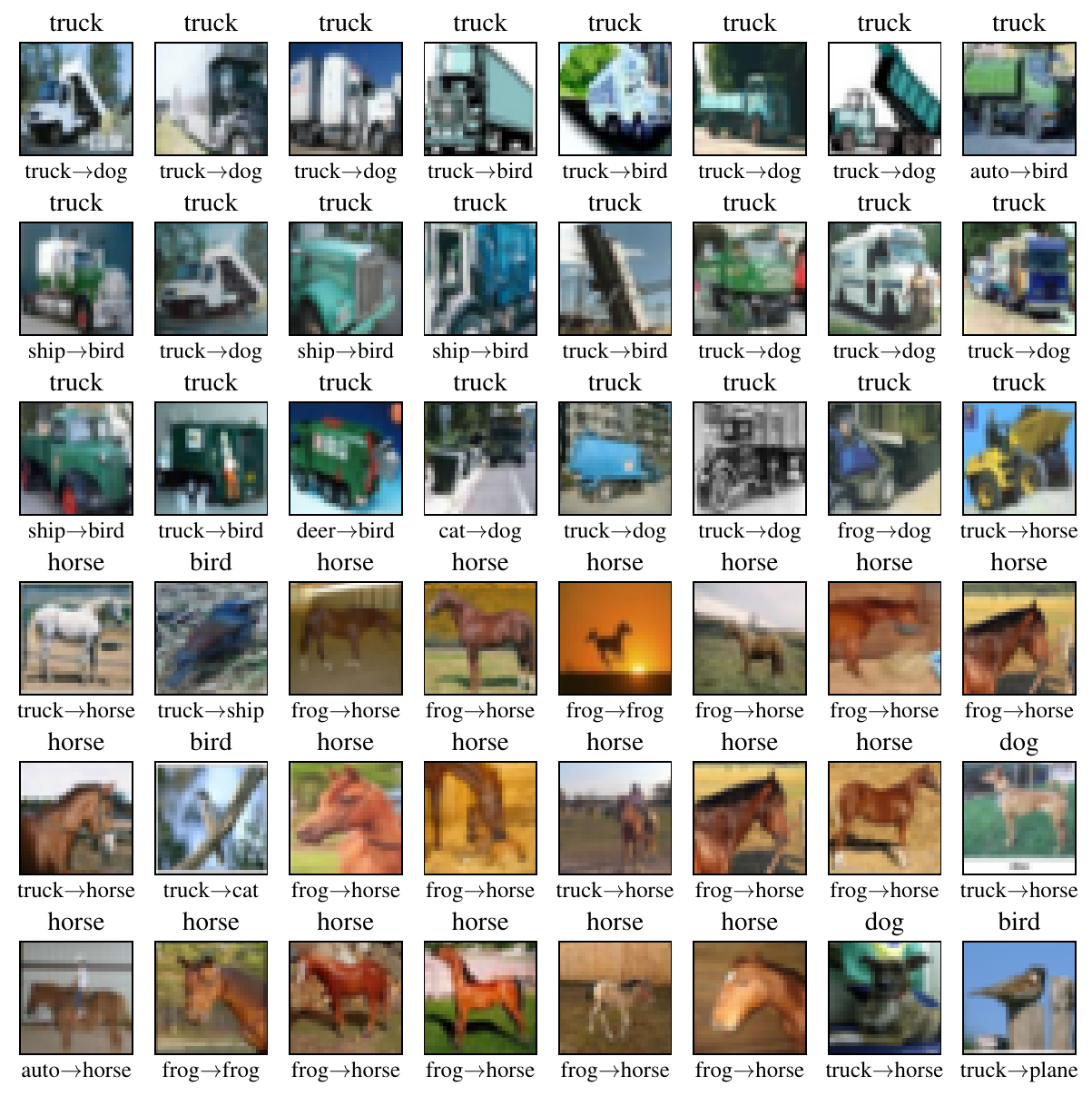}
        \caption{Step 250 to 251}
    \end{subfigure}
    \vskip\baselineskip
    \begin{subfigure}[b]{.475\linewidth}
        \includegraphics[width = \linewidth]{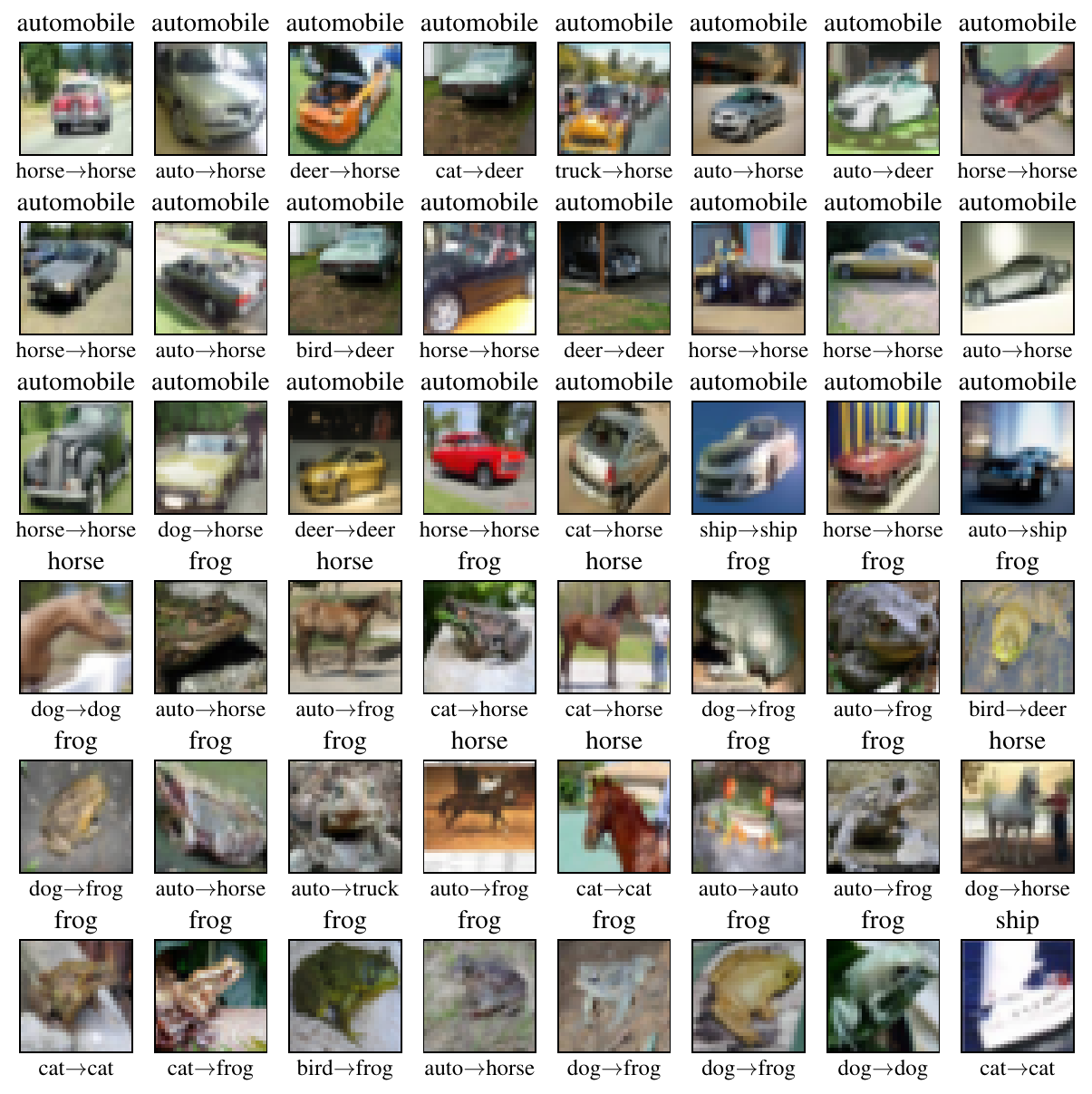}
        \caption{Step 500 to 501}
    \end{subfigure}
    \hfill
    \begin{subfigure}[b]{.475\linewidth}
        \includegraphics[width = \linewidth]{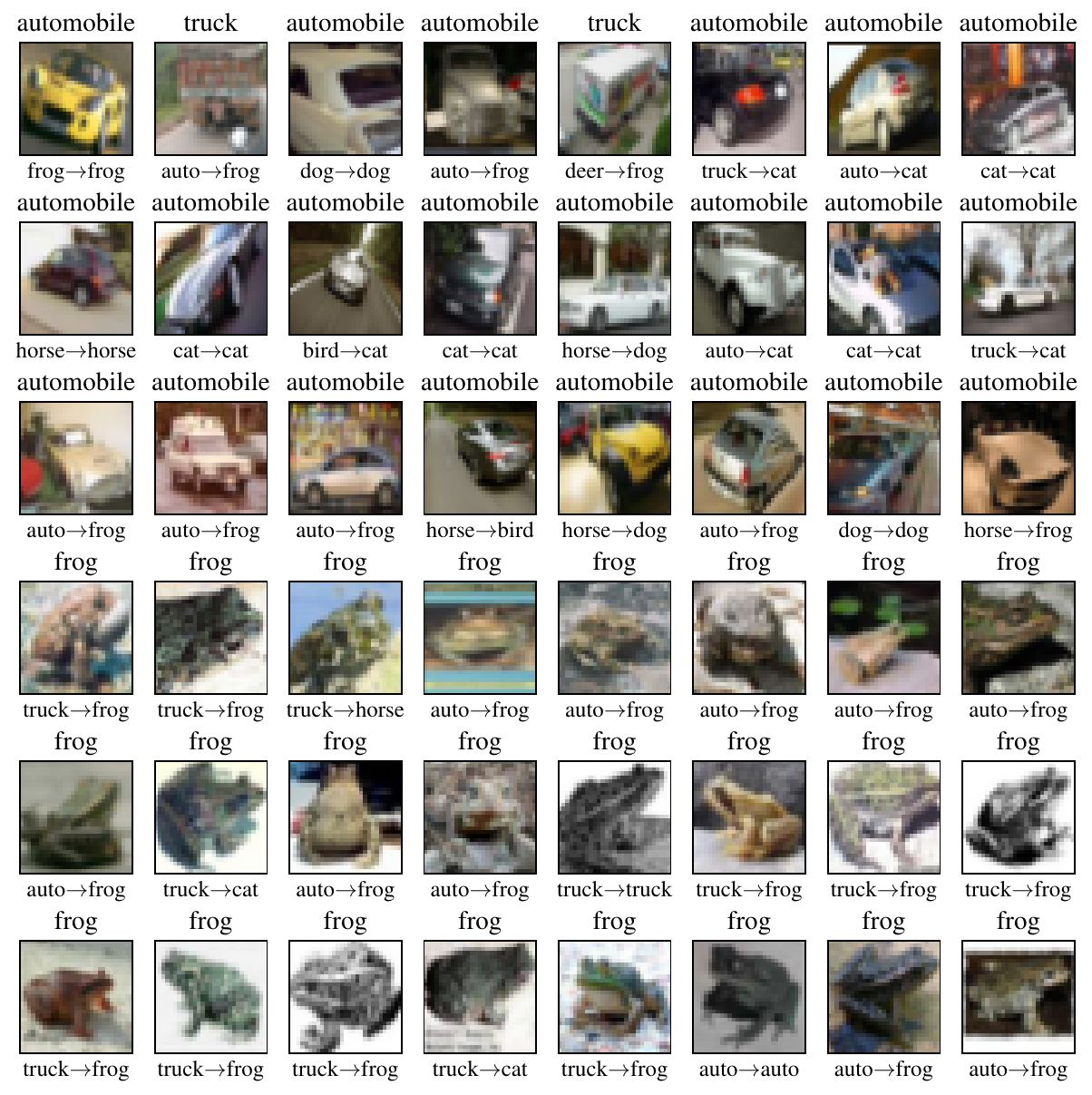}
        \caption{Step 750 to 751}
    \end{subfigure}
    \caption{\textbf{(ViT, seed 3)} Images with the most positive (top 3 rows) and most negative (bottom 3 rows) change to training loss after steps 100, 250, 500, and 750. Each image has the true label (above) and the predicted label before and after the gradient update (below).}
\end{figure}

\end{document}